\newcommand{\ie}{\textit{i.e.}}
\newcommand{\eg}{\textit{e.g.}}
\newcommand{\Ro}{\uppercase\expandafter{\romannumeral1}}
\newcommand{\Rt}{\uppercase\expandafter{\romannumeral2}}
\newcommand{\TMMD}{$T_{\mathrm{MMD}}$}
\newcommand{\TMMMd}{$T_{\mathrm{M}^3\mathrm{d}}$}
\newcommand{\TTMd}{$\tilde{T}_{\mathrm{M}^3\mathrm{d}}$}
\def\Pcal{\mathcal{P}}
\newcommand{\sgn}{\text{sgn}}
\mathchardef\mhyphen="2D
\newlength{\fixboxwidth}
\newtheorem{lemma}{Lemma}
\newtheorem{theorem}{Theorem}
\newcommand{\RR}{{\mathbb R}}
\newcommand{\EE}{{\mathbb E}}
\def\Bcal{\mathcal B}
\def\Fcal{\mathcal F}
\def\Hcal{\mathcal H}
\def\Pcal{\mathcal P}
\def\Xcal{\mathcal X}
\def\hat{\widehat}
\def\epsilon{\varepsilon}
\declaretheoremstyle[notefont=\bfseries,notebraces={}{},%
headpunct={},postheadspace=1em]{mystyle}
\declaretheorem[style=mystyle,numbered=no,name=Theorem]{thm-hand}
\newcounter{lastnote}
\begin{document}

\title{On the Optimality of Kernel-Embedding Based Goodness-of-Fit Tests}

\date{(\today)}

\author{Krishnakumar Balasubramanian$^\ast$, Tong Li$^\dag$ and Ming Yuan$^\ddag$\\
$^\ast$Princeton University and $^{\dag,\ddag}$Columbia University}

\footnotetext[3]{Address for Correspondence: Department of Statistics, Columbia University, 1255 Amsterdam Avenue, New York, NY 10027.}
\maketitle

\begin{abstract}
The reproducing kernel Hilbert space (RKHS) embedding of distributions offers a general and flexible framework for testing problems in arbitrary domains and has attracted considerable amount of attention in recent years. To gain insights into their operating characteristics, we study here the statistical performance of such approaches within a minimax framework. Focusing on the case of goodness-of-fit tests, our analyses show that a vanilla version of the kernel-embedding based test could be suboptimal, and suggest a simple remedy by moderating the embedding. We prove that the moderated approach provides optimal tests for a wide range of deviations from the null and can also be made adaptive over a large collection of interpolation spaces. Numerical experiments are presented to further demonstrate the merits of our approach.
\end{abstract}


\newpage

\section{Introduction}

In recent years, statistical tests based on the reproducing kernel Hilbert space (RKHS) embedding of distributions have attracted much attention because of their flexibility and broad applicability. Like other kernel methods, RKHS embedding based tests present a general and unifying framework for testing problems in arbitrary domains by using appropriate kernels defined on those domains. See, \eg, \cite{muandet2017kernel}, for a recent review and detailed discussion about the applications of kernel embeddings. The idea of using kernel embedding for comparing probability distributions was initially introduced by \cite{Smola-07,gretton2007kernel, gretton2012kernel}. Related extensions were also proposed by \cite{harchaoui2009kernel, zaremba2013b}. Furthermore,~\cite{sejdinovic2012equivalence} established a close relationship between kernel-based hypothesis tests and energy distanced based test introduced by \cite{szekely2007measuring}. See also \cite{lyons2013distance}. More recently, motivated by several applications based on quantifying the convergence of Monte Carlo simulations,~\cite{liu2016kernelized}, \cite{chwialkowski2016kernel} and \cite{gorham2017measuring} proposed goodness-of-fit tests which were based on combing the kernel based approach with Stein's identity. A linear-time method for goodness-of-fit was also proposed by \cite{jitkrittum2017linear} recently. Finally, the idea of kernel-embedding has also been used for constructing implicit generative models~\citep[\eg,][]{dziugaite2015training, li2015generative}.

Despite their popularity, fairly little is known about the statistical performance of these kernel-embedding based tests. Our goal is to fill in this void. In particular, we focus on kernel-embedding based goodness-of-fit tests and investigate their power under a general composite alternative. Our results not only provide new insights on the operating characteristics of these kernel-embedding based tests but also suggest improved testing procedures that are minimax optimal and adaptive over a large collection of alternatives.

The problem of testing for goodness-of-fit has a long and illustrious history in statistics and is often associated with household names such as \emph{Kolmogrov-Smirnov tests, Pearson's Chi-square test} or \emph{Neyman's smooth test}. A plethora of other techniques have also been proposed over the years in both parametric and non-parametric settings \citep[\eg,][]{ingster2003nptest, lehmann2008testing}. Most of the existing techniques are developed with the domain $\Xcal=\RR$ or $[0,1]$ in mind and work the best in these cases. Modern applications, however, oftentimes involve domains different from these traditional ones. For example, when dealing with directional data, which arise naturally in applications such as diffusion tensor imaging, it is natural to consider $\Xcal$ as the unit sphere in $\RR^3$ \citep[\eg,][]{jupp2005sobolev}.  Another example occurs in the context of ranking or preference data \citep[\eg,][]{Ailon2008ranking}. In these cases, $\Xcal$ can be taken as the group of permutations. Furthermore, motivated by several applications, combinatorial testing problems have been investigated recently~\citep[\eg,][]{addario2010combinatorial}, where the spaces under consideration are specific combinatorially structured spaces.

A particularly attractive approach to goodness-of-fit testing problems in general domains is through RKHS embedding of distributions. Specifically, let $K: \Xcal\times \Xcal\to \RR$ be a Mercer kernel that is symmetric, positive (semi-)definite and square integrable. The RKHS embedding of a probability measure $P$ on $(\Xcal,\Bcal)$, with respect to $K$, is given by
$$
\mu_P(\cdot) :=\int_\Xcal K(x,\cdot)P(dx).
$$
The Moore-Aronszajn Theorem indicates that there is an RKHS, denoted by $(\Hcal(K), \langle\cdot,\cdot\rangle_K)$, uniquely identified with the kernel $K$ \citep[\eg,][]{aronszajn1950theory}. It is clear that $\mu_P\in \Hcal(K)$, and hence the notion of RKHS embedding. The RKHS embedding of probability measures is closely related to a certain integral probability metric. The so-called \emph{maximum mean discrepancy} (MMD) between two probability measures $P$ and $Q$ is defined as
\begin{equation*}
\gamma_K(P,Q):=\sup_{f\in \Hcal(K): \|f\|_K\le 1} \int_\Xcal fd\left(P-Q\right),
\end{equation*}
where $\|\cdot\|_K$ is the norm associated with $(\Hcal(K), \langle\cdot,\cdot\rangle_K)$. It is not hard to see \citep[\eg,][]{gretton2012kernel} that
$$
\gamma_K(P,Q)=\|\mu_P-\mu_Q\|_K.
$$ 
The goodness-of-fit test can be carried out conveniently through RKHS embeddings of $P$ and $P_0$ by first constructing an estimate of $\gamma_K(P,P_0)$:
\begin{equation*}
\gamma_K(\hat{P}_n,P_0):=\sup_{f\in \Hcal(K): \|f\|_K\le 1} \int_\Xcal fd\left(\hat{P}_n-P_0\right),
\end{equation*}
where $\hat{P}_n$ is the empirical distribution of $X_1,\cdots,X_n$, and then rejecting $H_0$ if the estimate exceeds a threshold calibrated to ensure a certain significance level, say $\alpha$ ($0<\alpha<1$). 

In this paper, we investigate the power of the above discussed testing strategy under a general composite alternative. Following the spirit of \cite{ingster2003nptest}, we consider in particular a set of alternatives that are increasingly close to the null hypothesis. To fix ideas, we assume hereafter that $P$ is dominated by $P_0$ under the alternative so that the Radon-Nikodym derivative $dP/dP_0$ is well defined. Recall that the $\chi^2$ divergence between $P$ and $P_0$ is defined as
$$
\chi^2(P,P_0):=\int_\Xcal\left({dP\over dP_0}\right)^2dP_0-1.
$$
We are particularly interested in the detection boundary, namely how close $P$ and $P_0$ can be in terms of $\chi^2$ distance, under the alternative, so that a test based on a sample of $n$ observations can still consistently distinguish between the null hypothesis and the alternative. For example, in the parametric setting where $P$ is known up to a finite dimensional parameters under the alternative, the detection boundary of the likelihood ratio test is $n^{-1}$ under mild regularity conditions \citep[\eg,][]{lehmann2008testing}. We are concerned here with alternatives that are nonparametric in nature. Our first result suggests that the detection boundary for aforementioned $\gamma_K(\hat{P}_n,P_0)$ based test is of the order $n^{-1/2}$. However, our main results indicate, perhaps surprising at first, that this rate is far from optimal and the gap between it and the usual parametric rate can be largely bridged.

In particular, we argue that the distinguishability between $P$ and $P_0$ depends on how close $u:=dP/dP_0-1$ is to the RKHS $\Hcal(K)$. The closeness of $u$ to $\Hcal(K)$ can be measured by the the distance from $u$ to an arbitrary ball in $\Hcal(K)$. In particular, we shall consider the case where $\Hcal(K)$ is dense in $L_2(P_0)$, and focus on functions that are polynomially approximable by $\Hcal(K)$ for concreteness. More precisely, for some constants $M,\theta>0$, denote by $\Fcal(\theta; M)$ the collection of functions $f\in L_2(P_0)$ such that for any $R>0$, there exists an $f_R\in \Hcal(K)$ such that
$$
\|f_R\|_K\le R,\qquad {\rm and}\qquad \|f-f_R\|_{L_2(P_0)}\le M R^{-1/\theta}.
$$
See, \eg, \cite{cucker2007learning} for further discussion on these so-called interpolation spaces and their use in statistical learning. We shall also adopt the convention that
$$\Fcal(0;M)=\{f\in \Hcal(K): \|f\|_K\le M\}.$$
We investigate the optimal rate of detection for testing $H_0$ against
\begin{equation}
\label{alter}
H_1(\Delta_n,\theta,M): P\in \Pcal(\Delta_n,\theta,M),
\end{equation}
where $\Pcal(\Delta_n,\theta,M)$ is the collection of distributions $P$ on $(\Xcal,\Bcal)$ satisfying:
$$
dP/dP_0-1\in \Fcal(\theta; M),\qquad {\rm and}\qquad \chi^2(P,P_0)\ge \Delta_n.
$$
We call $r_n$ the optimal rate of detection if for any $c>0$, there exists no consistent test whenever $\Delta_n\le cr_n$; and on the other hand, a consistent test exists as long as $\Delta_n\gg r_n$.

Although one could consider a more general setup, for concreteness, we assume that the eigenvalues of $K$ with respect to $L_2(P_0)$ decays polynomially in that $\lambda_k\asymp k^{-2s}$. We show that the optimal rate of detection for testing $H_0$ against $H_1(\Delta_n,\theta,M)$ for any $\theta\ge 0$ is $n^{-{4 s\over 4 s+ \theta+1}}$. The rate of detection, although not achievable with a $\gamma_K(\hat{P}_n,P_0)$ based test, can be attained via a moderated version of the MMD based approach. A practical challenge to the approach, however, is its reliance on the knowledge of $\theta$. Unlike $s$ which is determined by $K$ and $P_0$ and therefore known apriori, $\theta$ depends on $u$ and is not known in advance. This naturally brings about the issue of adaptation -- is there an agnostic approach that can adaptively attain the optimal detection boundary without the knowledge of $\theta$. We show that the answer is affirmative although a small price in the form of $\log \log n$ is required to achieve such adaptation.

The rest of the paper is organized as follows. We first analyze the power of MMD based tests in Section \ref{sec:mmd}. This analysis reveals a significant gap between the detection boundary achieved by the MMD based test and the usual parametric $1/n$ rate. In turn, this prompts us to introduce, in Section \ref{sec:m3d}, a new class of tests based on a modified MMD. We show that the new tests are rate optimal. To address the practical challenge of choosing an appropriate tuning parameter for these tests, we investigate the issue of optimal adaptation in Section \ref{sec:adapt}, where we establish the optimal rates of detection for adaptively testing $H_0$ agains a broader set of alternatives and propose a test based on the modified MMD that can attain these rates. Numerical experiments are presented in Section \ref{sec:num}.
All proofs are relegated to Section \ref{sec:proof}.

\section{Operating characteristics of MMD based test}
\label{sec:mmd}

\subsection{Background and notation}
In this section, we investigate the performance of the MMD based test. As shown in \citet{gretton2012kernel}, the squared MMD between two probability distributions $P$ and $P_0$ can be expressed as
\begin{align}
\gamma_K^2(P,P_0)=\int K(x,x'){d}(P-P_0)(x){d}(P-P_0)(x').\label{deg}
\end{align}
Write
\begin{align*}
\bar{K}(x,x')=K(x,x')-\EE _{P_0}K(x,X)-\EE _{P_0} K(X,x')+\EE _{P_0}K(X,X'),
\end{align*}
where the subscript $P_0$ signifies the fact that the expectation is taken over $X, X'\sim P_0$ independently. By (\ref{deg}), $\gamma_K^2(P,P_0)=\gamma_{\bar{K}}^2(P,P_0)$. Therefore, without loss of generality, we shall assume in what follows that $K$ is degenerate under $P_0$, \ie, 
\begin{align}
\EE _{P_0}K(X,\cdot)=0.\label{degenerate}
\end{align}
For brevity, we shall omit the subscript $K$ in $\gamma$ in the rest of the paper, unless it is necessary to emphasize the dependence of MMD on the reproducing kernel.

Assuming that $K$ is square integrable, by Mercer's theorem, it can be decomposed as
\begin{align}
K(x,x')=\sum\limits_{k\geq 1}\lambda_k\varphi_k(x)\varphi_k(x'),\label{decomp}
\end{align}
where the limit is in the sense of $L_2(P_0)$, $\lambda_1>\lambda_2>\cdots>0$ are the positive eigenvalues of the integral operator induced by $K$, and $\{\varphi_k: k\ge 1\}$ are the corresponding orthonormal eigenfunctions, \ie, $\langle \varphi_k,\varphi_{k'} \rangle_{L_2(P_0)}=\delta_{k,k'}$ and $\langle \varphi_k,\varphi_{k'} \rangle_K=\lambda_k^{-1}\delta_{k,k'}$, with $\delta$ representing the Kronecker delta. For the sake of concreteness, we shall assume $K$ is universal in that $\{\varphi_k: k\ge 1\}$ forms an orthonormal basis of $L_2(P_0)$, and has infinitely many positive eigenvalues decaying polynomially, that is,
\begin{align}
0<\varliminf_{k\rightarrow \infty}k^{2s}\lambda_k\leq\varlimsup_{k\rightarrow \infty}k^{2s}\lambda_k<\infty\label{summable}
\end{align}
for some $s>1/2$. Moreover, we assume the eigenfunctions are uniformly bounded, \ie,
\begin{align}
\sup_{k\geq 1}\|\varphi_k\|_{\infty}<\infty.\label{unif}
\end{align}
Assumptions (\ref{summable}) and (\ref{unif}) ensure that the spectral decomposition (\ref{decomp}) holds both pointwisely and uniformly.

Note that (\ref{degenerate}) implies $\EE _{P_0}\varphi_k(X)=0$, $\forall\ k\ge 1$, and (\ref{decomp}) gives
\begin{align*}
\gamma^2(P,P_0)=\sum\limits_{k\geq 1}\lambda_k[\EE _{P}\varphi_k(X)]^2
\end{align*}
for any $P$. Accordingly, when $P$ is replaced by the empirical distribution $\hat{P}_n$, the empirical squared MMD can be expressed as
\begin{align*}
\gamma^2(\hat{P}_n,P_0)=\sum\limits_{k\geq 1}\lambda_k\left[\frac{1}{n}\sum\limits_{i=1}^{n}\varphi_k(X_i)\right]^2.
\end{align*}
Classic results on the asymptotics of V-statistic \citep{serfling2009approximation} imply that
\begin{align*}
n\gamma^2(\hat{P}_n,P_0)\stackrel{d}{\rightarrow}\sum\limits_{k\geq 1}\lambda_kZ_k^2:= W
\end{align*}
under $H_0$, where $Z_k\stackrel{i.i.d.}{\sim} N(0,1)$. Let $T_{\mathrm{MMD}}$ be an MMD based test, which rejects $H_0$ if and only if $n\gamma^2(\hat{P}_n,P_0)$ exceeds the upper $\alpha$ quantile $q_{w,1-\alpha}$ of $W$, \ie,
$$T_{\mathrm{MMD}}=\mathds{1}_{\{n\gamma^2(\hat{P}_n,P_0)>q_{w,1-\alpha}\}}.$$
The above limiting distribution of $n\gamma^2(\hat{P}_n,P_0)$ immediately suggests that $T_{\mathrm{MMD}}$ is an asymptotic $\alpha$-level test.

\subsection{Power analysis for MMD based tests}
We now investigate the power of $T_{\mathrm{MMD}}$ in testing $H_0$ against $H_1(\Delta_n,\theta,M)$ given by (\ref{alter}). Recall that the type \Rt\  error of a test $T:\mathcal{X}^{n}\rightarrow [0,1]$ for testing $H_0$ against a composite alternative $H_1:P\in\mathcal{P}$ is given by
$$\beta(T;\Pcal)=\sup\limits_{P\in \mathcal{P}}\EE_{P}[1-T(X_1,\ldots,X_n)],$$
where $\EE_P$ means taking expectation over $X_1,\ldots, X_n\stackrel{i.i.d.}{\sim}P$. 
For brevity, we shall write $\beta(T;\Delta_n,\theta,M)$ instead of $\beta(T;\mathcal{P}(\Delta_n,\theta,M))$ in what follows. The performance of a test $T$ can then be evaluated by its detection boundary, that is, the smallest $\Delta_n$ under which the type \Rt\ error converges to $0$ as $n\rightarrow \infty$. Our first result establishes the convergence rate of the detection boundary for \TMMD\ in the case when $\theta=0$. Hereafter, we abbreviate $M$ in $\Pcal(\Delta_n,\theta,M)$, $H_1(\Delta_n,\theta,M)$ and $\beta(T;\Delta_n,\theta,M)$, unless it is necessary to emphasize the dependence.


\begin{theorem}\label{mmdthm}
Consider testing $H_0: P=P_0$ against $H_1(\Delta_n,0)$ by \TMMD.
\renewcommand{\labelenumi}{(\roman{enumi})} 
\begin{enumerate}
	\item If $\sqrt{n}\Delta_n\rightarrow \infty$, then
	$$\beta(T_{\mathrm{MMD}};\Delta_n,0)\rightarrow 0\qquad  {\rm as} \quad n\rightarrow \infty;$$
	\item conversely, there exists a constant $c_0>0$ such that
	$$\varliminf\limits_{n\rightarrow \infty}\beta(T_{\mathrm{MMD}};c_0n^{-1/2},0)>0.$$
\end{enumerate}
\end{theorem}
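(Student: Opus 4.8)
\emph{Part (i).} I would first record an interpolation-type lower bound on the MMD. For $P\in\Pcal(\Delta_n,0)$, write $u=dP/dP_0-1=\sum_{k\ge1}u_k\varphi_k\in\Hcal(K)$, so that $\EE_P\varphi_k=u_k$, $\chi^2(P,P_0)=\sum_k u_k^2$, $\gamma^2(P,P_0)=\sum_k\lambda_k u_k^2$, and $\|u\|_K^2=\sum_k\lambda_k^{-1}u_k^2\le M^2$. By Cauchy--Schwarz, $\chi^2(P,P_0)=\sum_k(\lambda_k^{1/2}u_k)(\lambda_k^{-1/2}u_k)\le\gamma(P,P_0)\|u\|_K$, hence
$$
\gamma^2(P,P_0)\ \ge\ \frac{\chi^2(P,P_0)^2}{\|u\|_K^2}\ \ge\ \frac{\Delta_n^2}{M^2}
$$
uniformly over $\Pcal(\Delta_n,0)$. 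Next, with $\hat\mu_n=n^{-1}\sum_i K(X_i,\cdot)$ and $\xi_i=K(X_i,\cdot)-\mu_P$ (centered under $P$), use $\gamma^2_K(\hat P_n,P_0)=\|\hat\mu_n-\mu_{P_0}\|_K^2$ to decompose
$$
n\gamma^2(\hat P_n,P_0)=\Bigl\|n^{-1/2}\sum_i\xi_i+\sqrt n\,(\mu_P-\mu_{P_0})\Bigr\|_K^2=n\gamma^2(P,P_0)+B+A,
$$
where $B=2\sum_{i=1}^n\langle\xi_i,\mu_P-\mu_{P_0}\rangle_K$ is a centered sum of i.i.d.\ terms and $A=\|n^{-1/2}\sum_i\xi_i\|_K^2\ge0$. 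Since $\sup_k\|\varphi_k\|_\infty<\infty$ and $\sum_k\lambda_k<\infty$ (both from $\lambda_k\asymp k^{-2s}$, $s>1/2$; recall $\mu_{P_0}=0$ and $\mu_P=\sum_k\lambda_k u_k\varphi_k$), the function $\mu_P-\mu_{P_0}$ has sup norm bounded by a fixed multiple of $\gamma(P,P_0)$, so $\var_P(B)\le C\,n\gamma^2(P,P_0)$ with $C$ depending only on $K$ and $P_0$. As $n\gamma^2(P,P_0)\ge(\sqrt n\,\Delta_n)^2/M^2\to\infty$, for $n$ large it exceeds $2q_{w,1-\alpha}$ uniformly over the class; then, using $A\ge0$ and Chebyshev's inequality on $B$,
$$
\beta(T_{\mathrm{MMD}};\Delta_n,0)=\sup_{P\in\Pcal(\Delta_n,0)}\PP_P\!\bigl(n\gamma^2(\hat P_n,P_0)\le q_{w,1-\alpha}\bigr)\le\sup_{P}\PP_P\!\bigl(|B|\ge\tfrac12 n\gamma^2(P,P_0)\bigr)\le\frac{4CM^2}{n\Delta_n^2}\longrightarrow0.
$$

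\emph{Part (ii).} I would exhibit a least favorable sequence that hides the signal in a high-frequency eigenfunction. Fix $m_n\asymp n^{1/(4s)}$, set $c_n^2=c_0 n^{-1/2}$, $u_n=c_n\varphi_{m_n}$, and $dP_n/dP_0=1+u_n$. Since $\sup_k\|\varphi_k\|_\infty<\infty$ and $c_n\to0$, $P_n$ is a genuine probability measure for $n$ large; moreover $\chi^2(P_n,P_0)=c_n^2=c_0 n^{-1/2}$ and $\|u_n\|_K^2=c_n^2/\lambda_{m_n}\asymp c_0$, so for $c_0$ small enough $P_n\in\Pcal(c_0 n^{-1/2},0)$. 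The crucial calibration is that $\lambda_{m_n}n c_n^2\asymp c_0$ while $\sqrt n\,c_n\to\infty$. Writing $n\gamma^2(\hat P_n,P_0)=\sum_k\lambda_k T_{n,k}^2$ with $T_{n,k}=n^{-1/2}\sum_i\varphi_k(X_i)$, under $P_n$ the coordinate $k=m_n$ obeys $T_{n,m_n}=\sqrt n\,c_n+O_P(1)$, so $\lambda_{m_n}T_{n,m_n}^2=\lambda_{m_n}n c_n^2(1+o_P(1))$ stays between two fixed positive multiples of $c_0$, while $\sum_{k\ne m_n}\lambda_k T_{n,k}^2\stackrel{d}{\to}W$. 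The last convergence I would prove by a finite-dimensional triangular-array central limit theorem for $(T_{n,1},\dots,T_{n,J})$ under $P_n$ (their covariance matrix tends to $I_J$ since $c_n\to0$) together with the uniform tail bound $\EE_{P_n}\sum_{k>J,\,k\ne m_n}\lambda_k T_{n,k}^2\le2\sum_{k>J}\lambda_k$. Consequently, for a constant $\bar a\asymp c_0$,
$$
\varliminf_{n\to\infty}\PP_{P_n}\!\bigl(n\gamma^2(\hat P_n,P_0)\le q_{w,1-\alpha}\bigr)\ \ge\ \PP\bigl(W\le q_{w,1-\alpha}-\bar a\bigr),
$$
and the right-hand side is strictly positive once $c_0$ is small, because $W$ has an everywhere-finite, continuous distribution function with $\PP(W\le q_{w,1-\alpha})=1-\alpha>0$; thus $\varliminf_n\beta(T_{\mathrm{MMD}};c_0 n^{-1/2},0)\ge\PP(W\le q_{w,1-\alpha}-\bar a)>0$.

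\emph{Main obstacle.} Part (i) reduces to a first/second-moment computation once the inequality $\gamma^2(P,P_0)\ge\chi^2(P,P_0)^2/\|u\|_K^2$ is in hand. The delicate work is in Part (ii): one must rigorously identify the limiting law of the degenerate V-statistic $n\gamma^2(\hat P_n,P_0)$ under the drifting alternatives $P_n$ --- the triangular-array CLT for the low-frequency coordinates, the uniform negligibility of the high-frequency tail, and the separation of the single ``signal'' coordinate $m_n$, which itself drifts to infinity --- and to do so using only $\lambda_k\asymp k^{-2s}$ rather than an exact rate, so that the $m_n$-th coordinate contributes merely an $O(c_0)$ shift that the continuity of the law of $W$ can absorb.
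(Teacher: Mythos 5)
Your proposal is correct, and at the level of ideas it matches the paper: Part (i) rests on the Cauchy--Schwarz interpolation bound $\gamma^2(P,P_0)\ge\chi^4(P,P_0)/\|u\|_K^2\ge\Delta_n^2/M^2$ combined with a second-moment/Chebyshev bound on the stochastic fluctuation of the V-statistic, and Part (ii) constructs the least-favorable local alternative $u_n\propto\varphi_{m_n}$ with $m_n\asymp n^{1/(4s)}$, calibrated so that $n\gamma^2(\hat P_n,P_0)$ converges to a small positive shift of the null law $W$. Where your execution diverges is worth noting. For Part (i), the paper applies the square-root triangle inequality to $\gamma(\hat P_n,P_0)$ and needs only the coarse uniform bound $\EE\|\text{fluctuation}\|_K^2\le(\sup_k\|\varphi_k\|_\infty)^2\sum_k\lambda_k<\infty$, while your quadratic expansion $n\gamma^2(\hat P_n,P_0)=n\gamma^2(P,P_0)+B+A$ requires the sharper estimate $\var_P(B)\lesssim n\gamma^2(P,P_0)$; you correctly obtain this from $\|\mu_P-\mu_{P_0}\|_\infty\lesssim\gamma(P,P_0)$, and it is genuinely needed, since under $\sqrt n\,\Delta_n\to\infty$ alone a constant bound on $\var(B)$ would not close the Chebyshev step (take $\Delta_n=n^{-1/3}$). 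For Part (ii), the paper invokes a packaged lemma (adapted from \citet{gregory1977large}) that identifies the limit of the degenerate V-statistic under the contiguous drift $P_n$, whereas you re-derive that limit from scratch via a truncation plus triangular-array CLT; this is a sound, more self-contained, but more laborious substitute. Finally, your normalization $u_n=c_n\varphi_{m_n}$ controls the shift $\lambda_{m_n}nc_n^2$ only up to two-sided multiples of $c_0$, rather than making it converge exactly as the paper does with the $L(k_n)$ normalization; you handle this correctly by passing to $\varliminf_n$ and using continuity of the law of $W$ to keep $\PP(W\le q_{w,1-\alpha}-\bar a)$ strictly positive for small $c_0$.
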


Theorem \ref{mmdthm} shows that when the alternative $H_1(\Delta_n,0)$ is considered, the detection boundary of \TMMD\ is of the order $n^{-1/2}$.
%
%
%
It is of interest to compare the detection rate achieved by $T_{\text{MMD}}$ with that in a parametric setting where  consistent tests are available if $n\Delta_n\rightarrow \infty$ \citep{lehmann2008testing}. It is natural to raise the question to what extent such a gap can be entirely attributed to the fundamental difference between parametric and nonparametric testing problems. We shall now argue that this gap actually is largely due to the sup-optimality of \TMMD, and the detection boundary of  \TMMD\ could be significantly improved through a slight modification of the MMD.

\section{Optimal tests based on moderated MMD}
\label{sec:m3d}

\subsection{Moderated MMD test statistic}
The basic idea behind MMD is to project two probability measures onto a unit ball in $\Hcal(K)$ and use the distance between the two projections to measure the distance between the original probability measures. If the two probability measures are far away from $\Hcal(K)$, the distance between the two projections may not honestly reflect the distance between them. More specifically, $\gamma^2(P,P_0)=\sum\limits_{k\geq 1}\lambda_k[\EE _{P}\varphi_k(X)]^2$, while the $\chi^2$ distance between $P$ and $P_0$ is $\chi^2(P,P_0)=\sum\limits_{k\geq 1}[\EE _{P}\varphi_k(X)]^2$. Considering that $\lambda_k$ decreases with $k$, $\gamma^2(P,P_0)$ can be much smaller than $\chi^2(P,P_0)$. To overcome this problem, we consider a moderated version of the MMD which allows us to project the probability measures onto a larger ball in $\Hcal(K)$. The new class of integral probability metric between two distributions $P$ and $Q$ is given as
\begin{equation}
\label{def2}
\eta_{K,\varrho}(P,Q;P_0)=\sup_{f\in \Hcal(K): \|f\|_{L_2(P_0)}^2+\varrho^2\|f\|_K^2\le 1} \int_\Xcal fd(P-Q)
\end{equation}
for a given distribution $P_0$ and a constant $\varrho>0$. It should be noted that a related test statistics was proposed previously by \cite{harchaoui2009kernel} from a completely different viewpoint.

It is worth noting that $\eta_{K,\varrho}(P,Q;P_0)$ can also be identified with a particular type of RKHS embedding. Specifically, $\eta_{K,\varrho}(P,Q;P_0)=\gamma_{\tilde{K}_{\varrho}}(P,Q)$, where
\begin{align*}
\tilde{K}_{\varrho}(x,x'):=\sum\limits_{k\geq 1}\frac{\lambda_k}{\lambda_k+\varrho^2}\varphi_k(x)\varphi_k(x').
\end{align*}
We shall abbreviate the dependence of $\eta$ on $K$ and $P_0$ unless necessary. The unit ball in (\ref{def2}) is defined in terms of both RKHS norm and $L^2$ norm. Recall that $u=dP/dP_0-1$ so that
\begin{align*}
\sup\limits_{\|f\|_{L_2(P_0)}\leq 1}\int\limits_{\mathcal{X}}f{d}(P-P_0)=\sup\limits_{\|f\|_{L_2(P_0)}\leq 1}\int\limits_{\mathcal{X}}fu{d}P_0=\|u\|_{L_2(P_0)}=\chi(P,P_0).
\end{align*}
We can therefore expect that a smaller $\varrho$ will make $\eta^2_{\varrho}(P,P_0)$ closer to $\chi^2(P,P_0)$, since the unit ball to be considered will become more similar to the unit ball in $L_2(P_0)$. This can also be verified by noticing that $\eta^2_{\varrho}(P,P_0)$ could be expressed as
\begin{align*}
\eta^2_{\varrho}(P,P_0)=\sum\limits_{k\geq 1}\frac{\lambda_k}{\lambda_k+\varrho^2}[\EE_P\varphi_k(X)]^2.
\end{align*}
Therefore, we choose $\varrho$ converging to $0$ when constructing our test statistic.

Hereafter we shall attach the subscript $n$ to $\varrho$ to signify its dependence on $n$. We now argue that letting $\varrho_n$ converge to $0$ at an appropriate rate indeed results in a test more powerful than \TMMD. The test statistic we propose is the empirical version of $\eta^2_{\varrho_n}(P,P_0)$,
\begin{align}
\eta^2_{\varrho_n}(\hat{P}_n,P_0)=\sum\limits_{k\geq 1}\frac{\lambda_k}{\lambda_k+\varrho_n^2}\left[\frac{1}{n}\sum\limits_{i=1}^{n}\varphi_k(X_i)\right]^2.\label{eq:etatest}
\end{align}

\subsection{Operating characteristics of $\eta^2_{\varrho_n}(\hat{P}_n,P_0)$ based tests}
Although the expression for $\eta^2_{\varrho_n}(\hat{P}_n,P_0)$ given by \eqref{eq:etatest} looks similar to that of $\gamma^2(\hat{P}_n,P_0)$, their asymptotic behaviors are quite different. At a technical level, this is due to the fact that the eigenvalues of the underlying kernel
$$\tilde{\lambda}_{nk}:=\frac{\lambda_k}{\lambda_k+\varrho_n^2}$$ 
depend on $n$ and may not be uniformly summable over $n$. As presented in the following theorem, a certain type of asymptotic normality, instead of a sum of chi-squares as in the case of $\gamma^2(\hat{P}_n, P_0)$, holds for $\eta^2_{\varrho_n}(\hat{P}_n,P_0)$ under $P_0$, which helps determine the rejection region of the $\eta^2_{\varrho_n}$ based test.

%
%
\begin{theorem}\label{asympm3d}
Assume that $\varrho_n\to 0$ as $n\to \infty$ in such a fashion that $n\varrho_n^{{1}/(2s)}\rightarrow\infty$. Then under $H_0$ where $X_1,\ldots, X_n\stackrel{i.i.d.}{\sim}P_0$, 
\begin{align*}
v_n^{-1/2}[n\eta_{\varrho_n}^2(\hat{P}_n,P_0)-A_n]\stackrel{d}{\rightarrow}N(0,2),
\end{align*}
where
$$v_n=\sum\limits_{k\geq 1}\left(\frac{\lambda_k}{\lambda_k+\varrho_n^2}\right)^2,\qquad {\rm and} \qquad
A_n=\frac{1}{n}\sum\limits_{i=1}^n\tilde{K}_{\varrho_n}(X_i,X_i).$$
%
\end{theorem}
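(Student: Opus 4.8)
The plan is to recognize $n\eta^2_{\varrho_n}(\hat P_n,P_0)-A_n$ as a completely degenerate second-order quadratic form whose ``effective dimension'' $v_n$ diverges, and to apply a central limit theorem for such forms. Expanding \eqref{eq:etatest} and interchanging the two sums --- legitimate because $\sum_k\tilde\lambda_{nk}<\infty$ by \eqref{summable}, \eqref{unif} and $s>1/2$, so that $\tilde K_{\varrho_n}$ is a bounded kernel --- gives
\begin{align*}
n\eta^2_{\varrho_n}(\hat P_n,P_0)=\frac1n\sum_{i,j=1}^n\tilde K_{\varrho_n}(X_i,X_j)=A_n+S_n,\qquad S_n:=\frac1n\sum_{i\neq j}\tilde K_{\varrho_n}(X_i,X_j),
\end{align*}
so it suffices to prove $v_n^{-1/2}S_n\stackrel{d}{\rightarrow}N(0,2)$. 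Since $K$, hence $\tilde K_{\varrho_n}$, is degenerate under $P_0$ (cf.\ \eqref{degenerate}), $\EE_{P_0}[\tilde K_{\varrho_n}(X,\cdot)]=0$, so $S_n$ is a clean (completely degenerate) quadratic form with $\EE S_n=0$; by orthonormality of the $\varphi_k$, $\EE[\tilde K_{\varrho_n}(X_1,X_2)^2]=\sum_k\tilde\lambda_{nk}^2=v_n$, whence $\var(S_n)=2(n-1)v_n/n\sim2v_n$, matching the claimed limit.

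Next I would make explicit the scaling encoded in the hypothesis $n\varrho_n^{1/(2s)}\to\infty$. Under \eqref{summable}, $\tilde\lambda_{nk}=\lambda_k/(\lambda_k+\varrho_n^2)\asymp(1+c\varrho_n^2k^{2s})^{-1}$, and comparing $\sum_k(1+c\varrho_n^2k^{2s})^{-p}$ with $\varrho_n^{-1/s}\int_0^\infty(1+t^{2s})^{-p}\,dt$ (finite for $p=1,2$ since $2s>1$) shows $\sum_k\tilde\lambda_{nk}\asymp v_n\asymp\varrho_n^{-1/s}$; thus $\varrho_n\to0$ forces $v_n\to\infty$, and the hypothesis is \emph{equivalent} to $v_n=o(n^2)$. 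From $\|\varphi_k\|_\infty\le B$ in \eqref{unif} one gets $\|\tilde K_{\varrho_n}\|_\infty\le B^2\sum_k\tilde\lambda_{nk}\lesssim v_n$ and, for $h_n(x,x'):=\int\tilde K_{\varrho_n}(x,y)\tilde K_{\varrho_n}(x',y)P_0(dy)=\sum_k\tilde\lambda_{nk}^2\varphi_k(x)\varphi_k(x')$, the identities $\EE h_n(X,X)=v_n$ and $\EE_{X_1}[\tilde K_{\varrho_n}(X_1,x)^2]=h_n(x,x)$, together with $h_n(x,x)\le B^2v_n$, $\EE[h_n(X,X)^2]\le B^2v_n^2$ and $\EE[h_n(X_1,X_2)^2]=\sum_k\tilde\lambda_{nk}^4\le v_n$. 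Combining the sup and $L_2$ bounds after conditioning on one coordinate yields the key fourth-moment estimate $\EE[\tilde K_{\varrho_n}(X_1,X_2)^4]\le\|\tilde K_{\varrho_n}\|_\infty^2\,\EE h_n(X,X)\lesssim v_n^{3}$.

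The final step feeds these estimates into a central limit theorem for clean generalized quadratic forms of triangular-array type --- for instance de Jong's theorem, or equivalently the martingale CLT applied to $S_n=\sum_{j=2}^nD_{nj}$ with $D_{nj}=\tfrac2n\sum_{i<j}\tilde K_{\varrho_n}(X_i,X_j)$, a martingale difference array for $\mathcal F_j=\sigma(X_1,\dots,X_j)$ by degeneracy. In the de Jong formulation, with $w_{ij}=\tfrac2n\tilde K_{\varrho_n}(X_i,X_j)$ and $\sigma_n^2=\var(S_n)\sim2v_n$, it suffices that $\sigma_n^{-4}(G_I+G_{II}+G_{IV})\to0$; the spectral representation gives $G_I\lesssim n^{-2}\,\EE[\tilde K_{\varrho_n}^4]\lesssim v_n^3/n^2$, $G_{II}\lesssim n^{-1}\,\EE[h_n(X,X)^2]\lesssim v_n^2/n$ and $G_{IV}\lesssim\EE[h_n(X_1,X_2)^2]\lesssim v_n$, each $o(v_n^2)$ exactly because $v_n\to\infty$ and $v_n=o(n^2)$, while the side condition $\max_i\sigma_n^{-2}\sum_j\EE w_{ij}^2\asymp n^{-1}\to0$ is immediate. (On the martingale route the same inequalities give the Lyapunov bound $v_n^{-2}\sum_j\EE D_{nj}^4\to0$, while $\sum_j\EE[D_{nj}^2\mid\mathcal F_{j-1}]=\tfrac{4}{n^2}\sum_j\sum_{i,i'<j}h_n(X_i,X_{i'})$ splits into a diagonal part with mean $2(n-1)v_n/n$ and variance $\lesssim v_n^2/n$ and a degenerate off-diagonal part with variance $\lesssim v_n$, so $v_n^{-1}\sum_j\EE[D_{nj}^2\mid\mathcal F_{j-1}]\stackrel{P}{\rightarrow}2$.) The main obstacle is the fourth-moment bound $\EE[\tilde K_{\varrho_n}(X_1,X_2)^4]\lesssim v_n^3$: the naive estimate $B^4(\sum_k\tilde\lambda_{nk})^4\asymp v_n^4$ is too lossy and would only permit $v_n=o(n)$, whereas the sharper $L_\infty\times L_2$ split, combined with the identification $v_n\asymp\varrho_n^{-1/s}$, is what makes the Lyapunov (resp.\ $G_I$) term vanish at precisely the threshold $n\varrho_n^{1/(2s)}\to\infty$ and forces the limiting variance to equal $2$.
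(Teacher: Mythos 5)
Your proposal is correct and follows essentially the same route as the paper: both reduce $n\eta^2_{\varrho_n}(\hat P_n,P_0)-A_n$ to a degenerate off-diagonal quadratic form, apply a triangular-array CLT for such forms (the paper invokes the Hall (1984) martingale conditions, you invoke the equivalent de Jong conditions and note the martingale route yourself), and verify the moment conditions via exactly the same spectral bounds, including the crucial $\EE\tilde K_n^4\le\|\tilde K_n\|_\infty^2\,\EE\tilde K_n^2\lesssim\varrho_n^{-3/s}$ that pins the threshold to $n\varrho_n^{1/(2s)}\to\infty$.
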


In the light of Theorem \ref{asympm3d}, a test that rejects $H_0$ if and only if
$$2^{-1/2}v_n^{-1/2}[n{\eta}_{\varrho_n}^2(\hat{P}_n,P_0)-A_n]$$
exceeds $z_{1-\alpha}$ is an asymptotic $\alpha$-level test, where $z_{1-\alpha}$ stands for the $1-\alpha$ quantile of a standard normal distribution. We refer to this test as \TMMMd. The performance of \TMMMd\ under the alternative hypothesis is characterized by the following theorem, showing that its detection boundary is much improved when compared with that of \TMMD.

\begin{theorem}\label{crm3d}
Consider testing $H_0$ against $H_1(\Delta_n,\theta)$ by \TMMMd\ with $\varrho_n=cn^{-{2s(\theta+1)\over 4 s+\theta+1}}$ for an arbitrary constant $c>0$. If $n^{\frac{4s}{4s+\theta+1}}\Delta_n\rightarrow\infty$, then
$$
\beta(T_{\mathrm{M}^3\mathrm{d}};\Delta_n, \theta)\to 0, \qquad {\rm as\ }n\to \infty.
$$
\end{theorem}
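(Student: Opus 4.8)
The plan is to control the type \Rt\ error by a direct second-moment (Chebyshev) argument. Write $S_n:=n\eta_{\varrho_n}^2(\hat{P}_n,P_0)-A_n$, so that \TMMMd\ rejects iff $S_n>t_n$, where $t_n:=\sqrt{2v_n}\,z_{1-\alpha}$, and hence $\beta(T_{\mathrm{M}^3\mathrm{d}};\Delta_n,\theta)=\sup_{P\in\Pcal(\Delta_n,\theta)}\PP_P(S_n\le t_n)$. Expanding the squares and using the definition of $A_n$ gives the identity
\begin{equation*}
S_n=\frac1n\sum_{i\ne j}\tilde{K}_{\varrho_n}(X_i,X_j),
\end{equation*}
so $S_n$ is $(n-1)$ times a $U$-statistic with kernel $\tilde{K}_{\varrho_n}$; this kernel is degenerate under $P_0$ but, crucially, \emph{not} under the alternative, and $\EE_P S_n=(n-1)\eta_{\varrho_n}^2(P,P_0)$. (The prescribed $\varrho_n$ satisfies $n\varrho_n^{1/(2s)}\to\infty$, so Theorem~\ref{asympm3d} applies and $t_n$ is the correct asymptotic $\alpha$-level cutoff.) The proof then has two parts: (a) show the ``signal'' $\EE_P S_n$ exceeds $t_n$ with plenty of room; (b) bound $\var_P(S_n)$ so that Chebyshev's inequality closes the argument.

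For (a), set $a_k:=\EE_P\varphi_k(X)=\langle u,\varphi_k\rangle_{L_2(P_0)}$ (using degeneracy of $K$), so $\chi^2(P,P_0)=\sum_k a_k^2$ and $\eta_{\varrho_n}^2(P,P_0)=\sum_k\tilde{\lambda}_{nk}a_k^2$ with $\tilde{\lambda}_{nk}=\lambda_k/(\lambda_k+\varrho_n^2)$. Since $\lambda_k\asymp k^{-2s}$, the weights $\tilde{\lambda}_{nk}$ are bounded below by an absolute constant for $k\le K_n$, $K_n\asymp\varrho_n^{-1/s}$, so $\eta_{\varrho_n}^2(P,P_0)\gtrsim\sum_{k\le K_n}a_k^2=\chi^2(P,P_0)-\sum_{k>K_n}a_k^2$. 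The tail is controlled by $u\in\Fcal(\theta;M)$: decomposing $u=f_R+(u-f_R)$, the bound $\|f_R\|_K\le R$ forces $\sum_{k>K_n}\langle f_R,\varphi_k\rangle_{L_2(P_0)}^2\le\lambda_{K_n}R^2\lesssim\varrho_n^2R^2$, while $\|u-f_R\|_{L_2(P_0)}\le MR^{-1/\theta}$; optimizing over $R$ gives $\sum_{k>K_n}a_k^2\lesssim\varrho_n^{2/(\theta+1)}$. The choice $\varrho_n=cn^{-2s(\theta+1)/(4s+\theta+1)}$ is exactly calibrated so that $\varrho_n^{2/(\theta+1)}\asymp n^{-4s/(4s+\theta+1)}$, which by hypothesis is $o(\Delta_n)$; since in addition $\chi^2(P,P_0)=O(1)$ uniformly over $\Pcal(\Delta_n,\theta)$ (take $R=1$), we get $\eta_{\varrho_n}^2(P,P_0)\gtrsim\Delta_n$ uniformly. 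Finally $v_n\asymp\sum_k\tilde{\lambda}_{nk}\asymp\varrho_n^{-1/s}$ by elementary estimates, so $t_n\asymp\varrho_n^{-1/(2s)}\asymp n^{(\theta+1)/(4s+\theta+1)}$; since $n\Delta_n\gg n^{(\theta+1)/(4s+\theta+1)}$, this gives $\EE_P S_n-t_n\gtrsim n\eta_{\varrho_n}^2(P,P_0)\gtrsim n\Delta_n$.

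For (b), use the Hoeffding (H\'ajek) decomposition of the $U$-statistic under $P$: $\var_P(S_n)\asymp n\zeta_1+\zeta_2$, where $\zeta_1:=\var_P(h_1(X))$ with $h_1(x):=\int\tilde{K}_{\varrho_n}(x,y)\,dP(y)=\sum_k\tilde{\lambda}_{nk}a_k\varphi_k(x)$, and $\zeta_2:=\var_P(\tilde{K}_{\varrho_n}(X,Y))\le\EE_P[\tilde{K}_{\varrho_n}(X,Y)^2]$. The non-degenerate part is handled by $\zeta_1\le\EE_P[h_1^2]\le\|h_1\|_{L_2(P_0)}^2+\|h_1\|_\infty\|h_1\|_{L_2(P_0)}\chi(P,P_0)$, where $\sup_k\|\varphi_k\|_\infty<\infty$ and Cauchy--Schwarz give $\|h_1\|_\infty\lesssim(\sum_k\tilde{\lambda}_{nk})^{1/2}(\sum_k\tilde{\lambda}_{nk}a_k^2)^{1/2}\asymp\varrho_n^{-1/(2s)}\eta_{\varrho_n}(P,P_0)$ and $\|h_1\|_{L_2(P_0)}^2=\sum_k\tilde{\lambda}_{nk}^2a_k^2\le\eta_{\varrho_n}^2(P,P_0)$; hence $\zeta_1\lesssim\varrho_n^{-1/(2s)}\eta_{\varrho_n}^2(P,P_0)$ and, by (a), $n\zeta_1/(\EE_P S_n-t_n)^2\lesssim\varrho_n^{-1/(2s)}/(n\eta_{\varrho_n}^2(P,P_0))\lesssim\varrho_n^{-1/(2s)}/(n\Delta_n)\to0$. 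The delicate step --- and the main obstacle --- is the bound on $\zeta_2$: the crude estimate $\EE_P[\tilde{K}_{\varrho_n}(X,Y)^2]\le\|\tilde{K}_{\varrho_n}\|_\infty\,\EE_P|\tilde{K}_{\varrho_n}(X,Y)|\lesssim(\sum_k\tilde{\lambda}_{nk})^2\asymp\varrho_n^{-2/s}$ is too weak by a factor $\varrho_n^{-1/s}$. Instead one expands $\EE_P[\tilde{K}_{\varrho_n}(X,Y)^2]=\sum_{k,l}\tilde{\lambda}_{nk}\tilde{\lambda}_{nl}\big(\delta_{kl}+\langle\varphi_k\varphi_l,u\rangle_{L_2(P_0)}\big)^2$ and, using that $\{\varphi_k\}$ is an orthonormal basis of $L_2(P_0)$ and $\sup_k\|\varphi_k\|_\infty<\infty$, bounds for each $l$ the quantity $\sum_k\tilde{\lambda}_{nk}\langle\varphi_k\varphi_l,u\rangle_{L_2(P_0)}^2\le\sum_k\langle\varphi_k,\varphi_l u\rangle_{L_2(P_0)}^2=\|\varphi_l u\|_{L_2(P_0)}^2\lesssim\|u\|_{L_2(P_0)}^2=O(1)$; this yields the sharp $\zeta_2\lesssim v_n+\sum_k\tilde{\lambda}_{nk}\asymp\varrho_n^{-1/s}$, hence $\zeta_2/(\EE_P S_n-t_n)^2\lesssim\varrho_n^{-1/s}/(n\Delta_n)^2\to0$.

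Combining (a) and (b), Chebyshev's inequality gives, uniformly over $P\in\Pcal(\Delta_n,\theta)$,
\begin{equation*}
\PP_P(S_n\le t_n)\le\frac{\var_P(S_n)}{(\EE_P S_n-t_n)^2}\lesssim\frac1{n^{4s/(4s+\theta+1)}\Delta_n}+\frac1{(n^{4s/(4s+\theta+1)}\Delta_n)^2}\longrightarrow0,
\end{equation*}
all implied constants depending only on $s$, $M$, $\theta$, $\alpha$ and $\sup_k\|\varphi_k\|_\infty$; taking the supremum over $P$ gives $\beta(T_{\mathrm{M}^3\mathrm{d}};\Delta_n,\theta)\to0$. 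One could alternatively upgrade the variance bound to a central limit theorem for $S_n$ under $P$ (a Lindeberg argument for the non-degenerate $U$-statistic with triangular-array eigenvalues), which would also quantify the limiting power, but the second-moment bound above is all that is needed for $\beta\to0$.
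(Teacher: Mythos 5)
Your proof is correct and follows essentially the same strategy as the paper's: the decomposition $S_n=\frac1n\sum_{i\ne j}\tilde K_{\varrho_n}(X_i,X_j)$ into a deterministic signal $(n-1)\eta_{\varrho_n}^2(P,P_0)$, a linear H\'ajek term, and a degenerate $U$-statistic is exactly the paper's $V_3+V_2+V_1$ decomposition, and Chebyshev on the variance is the closing step in both. The only cosmetic differences are that the paper's signal lower bound works directly with $\sum_k\frac{\varrho_n^2}{\lambda_k+\varrho_n^2}a_k^2$ and the explicit choice $R=(2M/\|u\|_{L_2(P_0)})^\theta$ rather than truncating at $K_n\asymp\varrho_n^{-1/s}$ and optimizing over $R$, and that your Parseval-based control of $\zeta_2$ (bounding $\sum_k\tilde\lambda_{nk}\langle\varphi_k\varphi_l,u\rangle^2\le\|\varphi_l u\|^2_{L_2(P_0)}$) reproves inline the content of the paper's Lemma \ref{schur2} together with the $\|\cdot\|_{\tilde K_n^2}$-norm triangle inequality; both yield the same sharp $\zeta_2\lesssim\varrho_n^{-1/s}$.
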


Theorem \ref{crm3d} indicates that the detection boundary for \TMMMd\ is $n^{-{4s}/({4s+\theta+1})}$. In particular, when testing $H_0$ against $H_1(\Delta_n, 0)$, \ie, $\theta=0$, it becomes $n^{-4s/(4s+1)}$. This is to be contrasted with the detection boundary for \TMMD, which, as suggested by Theorem \ref{mmdthm}, is of the order $n^{-1/2}$. It is also worth noting that the detection boundary for \TMMMd\ deteriorates as $\theta$ increases, implying that it is harder to test against a larger interpolation space.

\subsection{Minimax optimality}

It is of interest to investigate if the detection boundary of \TMMMd\ can be further improved. We now show that the answer is negative in a certain sense. More specifically, we shall follow the minimax framework for nonparametric hypothesis testing pioneered by Ingster \citep[see, \eg,][]{ingster1993asymptotically, ingster1995minimax} and show that \TMMMd\ attains the optimal rate of detection for testing $H_0$ against $H_1(\Delta_n,\theta)$ in that no consistent test exists if there exists $c>0$ such that $\Delta_n\le cn^{-\frac{4s}{4s+\theta+1}}$.

\begin{theorem}\label{cr}
Consider testing $H_0: P=P_0$ against $H_1(\Delta_n,\theta)$, for some $\theta<2s-1$. If $\varlimsup\limits_{n\rightarrow \infty}\Delta_nn^{4s\over 4s+\theta+1}<\infty$, then
$$
\varliminf\limits_{n\rightarrow \infty}\inf_{{T}\in\mathcal{T}_n} \left[\EE_{P_0} {T} +\beta({T};\Delta_n,\theta)\right]>0,
$$
where $\mathcal{T}_n$ denotes the collection of all test functions based on $X_1,\ldots,X_n$.
%
\end{theorem}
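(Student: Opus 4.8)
The plan is to follow the standard minimax (Ingster) reduction: replace the composite alternative by a single Bayesian mixture and show that this mixture is indistinguishable from the null in the $\chi^2$ sense. For any prior $\pi$ supported on $\mathcal{P}(\Delta_n,\theta,M)$, put $\bar P_\pi=\int P^{\otimes n}\,\pi(dP)$ and $L_\pi=d\bar P_\pi/dP_0^{\otimes n}$. Averaging the type \Rt\ error over $\pi$ and recalling that the sum of the two errors of the Bayes test equals $1-\|P_0^{\otimes n}-\bar P_\pi\|_{\TV}=\EE_{P_0^{\otimes n}}[L_\pi\wedge 1]$,
\[
\inf_{T\in\mathcal{T}_n}\bigl[\EE_{P_0}T+\beta(T;\Delta_n,\theta)\bigr]\ \ge\ \inf_{T}\bigl[\EE_{P_0^{\otimes n}}T+\EE_{\bar P_\pi}(1-T)\bigr]\ =\ \EE_{P_0^{\otimes n}}\bigl[L_\pi\wedge 1\bigr]\ \ge\ \frac{1}{8\,\EE_{P_0^{\otimes n}}L_\pi^2},
\]
the last step being the Paley--Zygmund inequality applied to the nonnegative variable $L_\pi$ (which has mean $1$). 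Thus it suffices to exhibit a prior for which $\EE_{P_0^{\otimes n}}L_\pi^2$ remains bounded as $n\to\infty$. Note that using instead the cruder bound $\|P_0^{\otimes n}-\bar P_\pi\|_{\TV}\le\tfrac12(\EE_{P_0^{\otimes n}}L_\pi^2-1)^{1/2}$ would deliver the conclusion only when $\varlimsup_n\Delta_n n^{4s/(4s+\theta+1)}$ lies below an explicit threshold, so the reverse inequality above is what permits an arbitrary finite value of this limit.

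For the prior I would perturb $P_0$ inside a low-frequency band. Set $m=m_n\asymp\Delta_n^{-(\theta+1)/(2s)}$, with a constant depending on $\theta,s,M$ to be fixed below, and $b=b_n$ with $b^2=\Delta_n/m$; for a Rademacher vector $\epsilon=(\epsilon_1,\dots,\epsilon_m)$ let $P_\epsilon$ have $P_0$-density $1+u_\epsilon$ with $u_\epsilon=b\sum_{k=1}^m\epsilon_k\varphi_k$, and let $\pi$ be the law of $P_\epsilon$. Three properties must be checked. First, $P_\epsilon$ is a probability measure: $\|u_\epsilon\|_\infty\le b\,m\,\sup_k\|\varphi_k\|_\infty$ by \eqref{unif}, and $b\,m=(\Delta_n m)^{1/2}\asymp\Delta_n^{(2s-\theta-1)/(4s)}\to0$ --- this is exactly the step that uses $\theta<2s-1$. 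Second, since $\EE_{P_0}\varphi_k=0$ and $\langle\varphi_k,\varphi_{k'}\rangle_{L_2(P_0)}=\delta_{kk'}$, $\chi^2(P_\epsilon,P_0)=\|u_\epsilon\|_{L_2(P_0)}^2=mb^2=\Delta_n$. Third, $u_\epsilon\in\mathcal{F}(\theta;M)$: from $\lambda_k\asymp k^{-2s}$ one has $\|u_\epsilon\|_K^2=b^2\sum_{k\le m}\lambda_k^{-1}\asymp b^2m^{2s+1}$, hence $\|u_\epsilon\|_K\asymp\Delta_n^{1/2}m^s$; approximating $u_\epsilon$ by its shrinkages $\rho\,u_\epsilon$ ($0<\rho<1$) reduces the interpolation requirement to $\sup_{\rho\in(0,1)}(1-\rho)\rho^{1/\theta}\cdot\|u_\epsilon\|_{L_2(P_0)}\|u_\epsilon\|_K^{1/\theta}\le M$, i.e.\ to an upper bound on $m$ of the form $m\lesssim M^{\theta/s}\Delta_n^{-(\theta+1)/(2s)}$, which pins down the constant in $m_n$. (When $\theta=0$, the third point is simply $\|u_\epsilon\|_K\le M$.)

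Next I would bound the $\chi^2$ of the mixture. Writing $L_\pi^2=\EE_\epsilon\EE_{\epsilon'}\prod_{i=1}^n(1+u_\epsilon(X_i))(1+u_{\epsilon'}(X_i))$, taking $\EE_{P_0^{\otimes n}}$, using $\EE_{P_0}\varphi_k=0$ and orthonormality to get $\EE_{P_0}[(1+u_\epsilon)(1+u_{\epsilon'})]=1+b^2\sum_{k=1}^m\epsilon_k\epsilon_k'$, and noting $|b^2\sum_{k\le m}\epsilon_k\epsilon_k'|\le mb^2=\Delta_n<1$,
\[
\EE_{P_0^{\otimes n}}L_\pi^2=\EE_{\epsilon,\epsilon'}\Bigl(1+b^2\sum_{k=1}^m\epsilon_k\epsilon_k'\Bigr)^{n}\ \le\ \EE_{\epsilon,\epsilon'}\exp\Bigl(nb^2\sum_{k=1}^m\epsilon_k\epsilon_k'\Bigr)\ =\ \bigl[\cosh(nb^2)\bigr]^{m}\ \le\ \exp\Bigl(\tfrac12\,mn^2b^4\Bigr),
\]
because the products $\epsilon_k\epsilon_k'$ are i.i.d.\ Rademacher and $\cosh t\le e^{t^2/2}$. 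Finally $mn^2b^4=n^2\Delta_n^2/m\asymp n^2\Delta_n^{(4s+\theta+1)/(2s)}=\bigl(\Delta_n n^{4s/(4s+\theta+1)}\bigr)^{(4s+\theta+1)/(2s)}$, which is bounded precisely under the hypothesis $\varlimsup_n\Delta_n n^{4s/(4s+\theta+1)}<\infty$; combined with the first display this gives $\varliminf_n\inf_{T}[\EE_{P_0}T+\beta(T;\Delta_n,\theta)]\ge\bigl(8\exp(\tfrac12\sup_n mn^2b^4)\bigr)^{-1}>0$, as claimed.

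The only genuinely delicate point is the calibration of the prior in the second step: $m_n$ must be chosen so that $u_\epsilon$ belongs to $\mathcal{F}(\theta;M)$ with the prescribed constant $M$, that $P_\epsilon$ stays a probability measure, and that $\chi^2(P_\epsilon,P_0)\ge\Delta_n$, all simultaneously --- and these three demands are mutually compatible exactly because of the polynomial eigenvalue decay \eqref{summable}, the eigenfunction bound \eqref{unif}, and the assumption $\theta<2s-1$; the resulting size $mn^2b^4\asymp(\Delta_n n^{4s/(4s+\theta+1)})^{(4s+\theta+1)/(2s)}$ is precisely what makes $n^{-4s/(4s+\theta+1)}$ the detection boundary. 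The remaining ingredients --- the Paley--Zygmund step and the Rademacher moment-generating-function bound --- are routine once these scales are in place.
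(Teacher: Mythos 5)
Your proposal is correct and follows the same overall strategy as the paper: a uniform Rademacher prior over sign perturbations $a_n\sum_{k\le K_n}\xi_k\varphi_k$ in a low-frequency band of width $K_n\asymp\Delta_n^{-(\theta+1)/(2s)}$, followed by a second-moment bound on the mixture likelihood ratio. The scales you choose and the final identity $mn^2b^4\asymp(\Delta_n n^{4s/(4s+\theta+1)})^{(4s+\theta+1)/(2s)}$ match the paper's computation exactly, as does the role of $\theta<2s-1$ in keeping $\|u_\epsilon\|_\infty\to0$. There are, however, three local points where your treatment differs and is arguably cleaner. First, you verify $u_\epsilon\in\Fcal(\theta;M)$ by testing only the one-parameter family of shrinkages $f_R=\rho u_\epsilon$, reducing the interpolation requirement to $\|u_\epsilon\|_{L_2(P_0)}\|u_\epsilon\|_K^{1/\theta}\lesssim M$, whereas the paper appeals to a separate truncation-based Lemma (its Lemma 3, \texttt{char0}); both give the same constraint $m\lesssim M^{\theta/s}\Delta_n^{-(\theta+1)/(2s)}$, but your one-line reduction avoids the spectral bookkeeping. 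Second, you close the moment computation with $\cosh t\le e^{t^2/2}$, which is tidier than the Taylor-expansion sandwich the paper uses. Third, and most substantively, you make explicit that the Paley--Zygmund bound $\EE_{P_0^{\otimes n}}[L_\pi\wedge 1]\ge (8\,\EE_{P_0^{\otimes n}}L_\pi^2)^{-1}$ is what allows the conclusion under the stated hypothesis $\varlimsup_n\Delta_n n^{4s/(4s+\theta+1)}<\infty$ (rather than a sufficiently small limit), since the usual total-variation bound $1-\tfrac12(\EE L_\pi^2-1)^{1/2}$ becomes vacuous once $\EE L_\pi^2\ge5$; the paper states only that it suffices to show $\varlimsup_n\EE_{P_0}L_\pi^2<\infty$ and defers the remaining argument to the cited references, so your spelling this out fills a small but genuine gap in the exposition.
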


Recall that for a test ${T}$, $\EE_{P_0} {T}$ is its Type I error. Theorem \ref{cr} shows that, if $\Delta_n=O\left(n^{-4s/(4s+\theta+1)}\right)$, then the sum of Type I and Type II errors of any test does not vanish as $n$ increases. In other words, there is no consistent test if $\Delta_n=O\left(n^{-4s/(4s+\theta+1)}\right)$. Together with Theorem \ref{crm3d}, this suggests that \TMMMd\ is rate optimal in the minimax sense.

\section{Adaptation}
\label{sec:adapt}
Despite the minimax optimality of \TMMMd, a practical challenge in using it is the choice of an appropriate tuning parameter $\varrho_n$. In particular, Theorem \ref{crm3d} suggests that $\varrho_n$ needs to be taken at the order of $n^{-2s(\theta+1)/(4s+\theta +1)}$ which depends on the value of $s$ and $\theta$. On the one hand, since $P_0$ and $K$ are known apriori, so is $s$. On the other hand, $\theta$ reflects the property of $dP/dP_0$ which is typically not known in advance. This naturally brings about the issue of adaptation \citep[see, \eg,][]{spokoiny1996adaptive,ingster2000adaptive}. In other words, we are interested in a single testing procedure that can achieve the detection boundary for testing $H_0$ against $H_1(\Delta_n(\theta), \theta)$ simultaneously over all $\theta\ge 0$. We emphasize the dependence of $\Delta_n$ on $\theta$ since the detection boundary may depend on $\theta$, as suggested by the results from the previous section. In fact, we should build upon the test statistic introduced before.

More specifically, write
$$
\rho_\ast=\left(\frac{\sqrt{\log \log n}}{n}\right)^{2s},
$$
and
$$
m_\ast=\left\lceil\log_2 \left[\rho_\ast^{-1}\left(\frac{\sqrt{\log\log n}}{n}\right)^{\frac{2s}{4s+1}}\right]\right\rceil.
$$
Then our test statistics is taken to be the maximum of $T_{n,\varrho_n}$ for $\rho_n=\rho_\ast, 2\rho_\ast, 2^2\rho_\ast,\ldots, 2^{m_\ast}\rho_\ast$:
\begin{align*}
\tilde{T}_n:= \sup\limits_{0\le k\le m_\ast}T_{n,2^k\varrho_\ast},
\end{align*}
where, with slight abuse of notation,
$$T_{n,\varrho_n}=(2v_n)^{-1/2}[n\eta_{\varrho_n}^2(\hat{P}_n,P_0)-A_n].$$ 
It turns out if an appropriate rejection threshold is chosen, $\tilde{T}_n$ can achieve a detection boundary very similar to the one we have before, but now simultaneously over all $\theta>0$.

\begin{theorem}\label{crtm3d}
	\rm(\romannumeral1) \it Under $H_0$,
	\begin{align*}
	\lim\limits_{n\rightarrow \infty}P\left(\tilde{T}_n\geq \sqrt{3\log\log n}\right)=0;
	\end{align*}
	\rm(\romannumeral2) \it on the other hand, there exists a constant $c_1>0$ such that,
	\begin{align*}
	\lim\limits_{n\rightarrow \infty}\inf\limits_{P\in\cup_{\theta\ge 0}\Pcal(\Delta_{n}(\theta))}P\left(\tilde{T}_n\geq \sqrt{3\log\log n}\right)=1,
	\end{align*}
	provided that $\Delta_n(\theta)\geq c_1(n^{-1}{\sqrt{\log\log n}})^{\frac{4s}{4s+\theta+1}}$.
\end{theorem}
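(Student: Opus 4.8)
The plan is to treat the two assertions separately, in both cases building on Theorems~\ref{asympm3d} and~\ref{crm3d}. As a preliminary, note that $m_\ast\asymp\log n$, that every $\varrho=2^k\varrho_\ast$ with $0\le k\le m_\ast$ obeys $n\varrho^{1/(2s)}\ge n\varrho_\ast^{1/(2s)}=\sqrt{\log\log n}\to\infty$ and $2^{m_\ast}\varrho_\ast\asymp(n^{-1}\sqrt{\log\log n})^{2s/(4s+1)}\to0$ --- so each $T_{n,2^k\varrho_\ast}$ is a legitimate instance of the moderated statistic with $v_n\asymp\varrho^{-1/s}\to\infty$ --- and that the grid $\{2^k\varrho_\ast:0\le k\le m_\ast\}$ covers, up to a bounded factor, every ``oracle'' bandwidth $\varrho(\theta):=(n^{-1}\sqrt{\log\log n})^{2s(\theta+1)/(4s+\theta+1)}$, $\theta\in[0,\infty)$, with $\theta=0$ sitting at the upper endpoint and $\theta\to\infty$ at $\varrho_\ast$. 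This matching is essential: for $\Delta_n(\theta)$ at the claimed boundary, a short calculation shows that the exponents of $n$ and of $\sqrt{\log\log n}$ cancel exactly in $\EE_P[T_{n,\varrho(\theta)}]=(2v_n)^{-1/2}(n-1)\eta^2_{\varrho(\theta)}(P,P_0)$, producing a mean of order $\sqrt{\log\log n}$, and this happens only for $\varrho$ comparable to $\varrho(\theta)$.

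For part~(\romannumeral1) I would use a union bound, $\PP_{P_0}(\tilde T_n\ge\sqrt{3\log\log n})\le\sum_{k=0}^{m_\ast}\PP_{P_0}(T_{n,2^k\varrho_\ast}\ge\sqrt{3\log\log n})$, and bound each summand by a moderate-deviation estimate for the degenerate $V$-statistic $n\eta_\varrho^2(\hat P_n,P_0)-A_n=\tfrac1n\sum_{i\neq j}\tilde K_\varrho(X_i,X_j)$. Boundedness of the eigenfunctions and $\lambda_k\asymp k^{-2s}$ make the Hilbert--Schmidt norm of $\tilde K_\varrho$ equal to $\sqrt{v_n}$ and its sup- and operator norms at most polynomial in $n$; an exponential inequality for degenerate $U$-statistics of order two (Bernstein/Hanson--Wright type), or a moderate-deviation sharpening of Theorem~\ref{asympm3d}, then gives $\PP_{P_0}(T_{n,\varrho}\ge x)\lesssim\exp(-cx^{2})$ uniformly over the grid for $x$ up to order $\sqrt{\log\log n}$. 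With $x=\sqrt{3\log\log n}$ each summand is of order $(\log n)^{-3/2}$ up to lower-order corrections, and since $m_\ast=O(\log n)$ their sum is $O((\log n)^{-1/2})\to0$; the constant $3$ gives exactly the slack needed to absorb the number of tests.

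For part~(\romannumeral2), fix $\theta\ge0$ and $P$ with $u:=dP/dP_0-1\in\Fcal(\theta;M)$ and $\chi^2(P,P_0)\ge\Delta_n(\theta)$; by monotonicity of $\Pcal(\cdot)$ in $\Delta$ we may assume $\Delta_n(\theta)=c_1(n^{-1}\sqrt{\log\log n})^{4s/(4s+\theta+1)}$, and we record that $u\in\Fcal(\theta;M)$ forces $\chi^2(P,P_0)=\|u\|^2_{L_2(P_0)}\le(\sqrt{\lambda_1}+M)^2$, a $\theta$-free bound. Choose $k=k(\theta)\in\{0,\dots,m_\ast\}$ with $\varrho:=2^{k}\varrho_\ast$ within a bounded factor of $\varrho(\theta)$; since $\tilde T_n\ge T_{n,\varrho}$ it suffices to show $\PP_P(T_{n,\varrho}\ge\sqrt{3\log\log n})\to1$, uniformly. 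The argument follows the proof of Theorem~\ref{crm3d}. For the deterministic signal, decompose $u=u_R+w$ with $\|u_R\|_K\le R$, $\|w\|_{L_2}\le MR^{-1/\theta}$, and put $a_k=\langle u,\varphi_k\rangle_{L_2(P_0)}$; then $\eta^2_\varrho(P,P_0)=\chi^2(P,P_0)-\sum_k(1-\tilde\lambda_{nk})a_k^2$ and, using $1-\tilde\lambda_{nk}\le\min(1,\varrho^{2}/\lambda_k)$, the subtracted bias is at most $C(\varrho^{2}R^{2}+M^{2}R^{-2/\theta})$, which after optimising $R$ is a $\theta$-uniform constant times $\varrho^{2/(\theta+1)}\asymp\Delta_n(\theta)/c_1$; hence $\eta^2_\varrho(P,P_0)\ge\tfrac12\chi^2(P,P_0)\ge\tfrac12\Delta_n(\theta)$ once $c_1$ is large, and $\EE_P[T_{n,\varrho}]\asymp n\eta^2_\varrho(P,P_0)/\sqrt{v_n}\gtrsim c_1\sqrt{\log\log n}$. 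For the fluctuation, Hoeffding-decompose $\tfrac1n\sum_{i\neq j}\tilde K_\varrho(X_i,X_j)$ into its (signal) mean, a linear part $\approx2\sum_ig(X_i)$ with $g(x)=\sum_k\tilde\lambda_{nk}a_k\varphi_k(x)$, and a degenerate part; using $\|g\|_\infty\lesssim\sqrt{v_n\,\eta^2_\varrho(P,P_0)}$ and $\EE_P[\tilde K_\varrho(X,X')^{2}]\lesssim v_n(1+\chi(P,P_0))^{2}$ (both via bounded eigenfunctions) one obtains $\var(T_{n,\varrho})=O(1)+O(n\eta^2_\varrho(P,P_0)/\sqrt{v_n})=o\big((\EE_P[T_{n,\varrho}])^{2}\big)$. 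Chebyshev's inequality then gives $\PP_P(T_{n,\varrho}<\sqrt{3\log\log n})\le 4\var(T_{n,\varrho})/(\EE_P[T_{n,\varrho}])^{2}\to0$, uniformly over $\bigcup_{\theta\ge0}\Pcal(\Delta_n(\theta))$, once $c_1$ is chosen so that $\EE_P[T_{n,\varrho}]\ge2\sqrt{3\log\log n}$ eventually.

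The step I expect to be the main obstacle is the uniformity in $\theta$: the single threshold $\sqrt{3\log\log n}$ must separate $H_0$ from $H_1(\Delta_n(\theta),\theta)$ simultaneously for all $\theta\ge0$, which is where the $\sqrt{\log\log n}$ inflation of the detection boundary is spent. One must verify that the bias bound, the variance bound and the signal-to-noise ratio all carry constants that do not degrade as $\theta\to0$ or, more delicately, as $\theta\to\infty$ (there $\varrho(\theta)\to\varrho_\ast$, $v_n$ is largest, and $\Delta_n(\theta)$ flattens to a constant), and that the cardinality $m_\ast$ of the grid stays $O(\log n)$ so that the union bound in part~(\romannumeral1) survives against the threshold $\sqrt{3\log\log n}$. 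A second technical point, genuinely beyond Theorems~\ref{asympm3d} and~\ref{crm3d}, is the uniform-over-the-grid exponential tail bound for $T_{n,\varrho}$ under $H_0$: the central limit theorem of Theorem~\ref{asympm3d} is not by itself strong enough, so a moment or concentration argument for the underlying degenerate quadratic form is needed.
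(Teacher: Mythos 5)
Your proposal matches the paper's architecture essentially step for step: Part~(\romannumeral1) by a union bound over the $m_\ast+1\asymp\log n$ grid points combined with a quantitative tail estimate for each $T_{n,\varrho}$; Part~(\romannumeral2) by locating the grid point $\tilde\varrho_n(\theta)$ just below the oracle bandwidth $\varrho_n(\theta)$ (the paper uses $\tfrac12\varrho_n(\theta)\le\tilde\varrho_n(\theta)\le\varrho_n(\theta)$), inheriting the bias bound $\eta^2_{\varrho}(P,P_0)\ge\tfrac14\|u\|^2_{L_2(P_0)}$ from the proof of Theorem~\ref{crm3d}, and then applying Chebyshev with $\EE_P T_{n,\tilde\varrho_n(\theta)}\gtrsim c_1\sqrt{\log\log n}$ and $\var(T)/(\EE T)^2\to0$. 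You also correctly diagnose the one ingredient that goes beyond Theorems~\ref{asympm3d} and~\ref{crm3d}: the central limit theorem alone is not quantitative enough to support the union bound against the threshold $\sqrt{3\log\log n}$.

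The only substantive difference is the device you float for Part~(\romannumeral1). You offer either a Hanson--Wright/Bernstein inequality for degenerate $U$-statistics or ``a moderate-deviation sharpening of Theorem~\ref{asympm3d}''; the paper takes the second route, specifically the \citet{haeusler1988rate} Berry--Esseen rate for martingale CLTs combined with \citet{hall1984central}-style moment bounds, yielding
\begin{align*}
\sup_t\left|P(T_{n,\varrho}>t)-\bar\Phi(t)\right|\lesssim \varrho^{1/(5s)}+n^{-1/5}+n^{-2/5}\varrho^{-1/(5s)},
\end{align*}
and then summing the additive error over the grid. This is cleaner than a generic concentration bound because it delivers the exact Gaussian tail $\bar\Phi(t)\le\tfrac12 e^{-t^2/2}$; a Hanson--Wright type result would need its sub-Gaussian constant to be $1/2$ (so that each summand is $(\log n)^{-3/2}$ as you write), which generic bounds do not guarantee and which is exactly why your ``main obstacle'' paragraph is the right place to worry. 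Everything else in the proposal, including the $\theta$-uniformity of $\|u\|_{L_2(P_0)}\le \sqrt{\lambda_1}+M$, the uniform constant in the bias bound, and the $O(\log n)$ grid size, is consistent with the paper's proof.
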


Theorem \ref{crtm3d} immediately suggests that a test rejects $H_0$ if and only if $\tilde{T}_n\geq \sqrt{3\log\log n}$ is consistent for testing it against $H_1(\Delta_n(\theta),\theta)$ for all $\theta\ge 0$ provided that $\Delta_n(\theta)\geq c_1(n^{-1}{\sqrt{\log\log n}})^{\frac{4s}{4s+\theta+1}}$. We can further calibrate the rejection region to yield a test at a given significance level. More precisely, let $\tilde{q}_\alpha$ be the upper $\alpha$ quantile of $\tilde{T}_n$, we can proceed to reject $H_0$ whenever the observed test statistic exceeds $\tilde{q}_\alpha$. Denote such a test by \TTMd. By definition, \TTMd\ is an $\alpha$-level test. Theorem \ref{crtm3d} implies that the type II error of \TTMd\ vanishes as $n\to\infty$ uniformly over all $\theta\ge 0$. In practice, the quantile $\tilde{q}_\alpha$ can be evaluated by Monte Carlo methods as we shall discuss in further details in the next section. We note that the detection boundary given in Theorem \ref{crtm3d} is similar, but inferior by a factor of $(\log\log n)^{\frac{2s}{4s+\theta+1}}$, to that from Theorem \ref{cr}. This turns out be the price one needs to pay for adaptation.

\begin{theorem}\label{cra}
Let $0<\theta_1<\theta_2<2s-1$. Then there exists a positive constant $c_2$ such that
\begin{align*}
\varlimsup\limits_{n\rightarrow \infty}\sup\limits_{\theta\in[\theta_1,\theta_2]}\left\{\Delta_n(\theta)\left(\frac{n}{\sqrt{\log\log n}}\right)^{\frac{4s}{4s+\theta+1}}\right\}\leq c_2
\end{align*}
implies that
\begin{align*}
\lim\limits_{n\rightarrow \infty}\inf\limits_{{T}\in \mathcal{T}_n}\left[\EE_{P_0} {T}+\sup_{\theta\in [\theta_1,\theta_2]}\beta({T};\Delta_n(\theta), \theta)\right]=1.
\end{align*}
%
\end{theorem}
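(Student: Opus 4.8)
The plan is the classical Ingster-type reduction to a Bayesian testing problem, but driven by a prior spread evenly over $\asymp\log n$ distinct resolution scales, so that the price of not knowing which scale carries the signal accounts exactly for the $(\log\log n)^{2s/(4s+\theta+1)}$ inflation of the boundary in $\Delta_n(\theta)$. For any prior $\Pi$ supported on $\cup_{\theta\in[\theta_1,\theta_2]}\Pcal(\Delta_n(\theta),\theta)$, writing $\bar P_\Pi:=\int P^{\otimes n}\,d\Pi(P)$, one has $\sup_{\theta}\beta(T;\Delta_n(\theta),\theta)\ge 1-\EE_{\bar P_\Pi}T$ for every $T\in\mathcal{T}_n$, whence
\[
\inf_{T\in\mathcal{T}_n}\Big[\EE_{P_0}T+\sup_{\theta\in[\theta_1,\theta_2]}\beta(T;\Delta_n(\theta),\theta)\Big]\ \ge\ 1-\|\bar P_\Pi-P_0^{\otimes n}\|_{\mathrm{TV}}\ \ge\ 1-\tfrac12\,\chi^2(\bar P_\Pi,P_0^{\otimes n})^{1/2},
\]
the last inequality by Cauchy--Schwarz. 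As the left-hand side never exceeds $1$, it suffices to produce one prior $\Pi$, supported on the alternative, with $\chi^2(\bar P_\Pi,P_0^{\otimes n})\to 0$; and if $\Delta_n(\theta)\le 0$ for some admissible $\theta$ then $P_0$ lies in the alternative and the claim is trivial, so we assume $\Delta_n(\theta)>0$ throughout.

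For the prior I would use a multiscale refinement of the construction behind Theorem~\ref{cr}. Fix a small $\delta>0$, put $J_n=\lfloor\delta\log n\rfloor$, and for $j=1,\dots,J_n$ let $\theta_j=\theta_1+\tfrac{j-1}{J_n-1}(\theta_2-\theta_1)\in[\theta_1,\theta_2]$ and $m_j=\big\lfloor c_M\,\Delta_n(\theta_j)^{-(\theta_j+1)/(2s)}\big\rfloor$, where $c_M>0$ is a small constant depending only on $M$, $s$ and the spectral constants in \eqref{summable}. Since $\lambda_k\asymp k^{-2s}$ and $m_j$ is strictly increasing with $m_{j+1}/m_j\asymp n^{c/J_n}$ for a fixed $c=c(s,\theta_1,\theta_2)>0$, for $\delta$ small one can pick pairwise disjoint index blocks $B_1,\dots,B_{J_n}$ with $|B_j|\asymp m_j$ and $\lambda_k\asymp m_j^{-2s}$ throughout $B_j$; note $\log J_n=\log\log n+O(1)$. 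On $B_j$ put the random-sign perturbation $u_j=b_j\sum_{k\in B_j}\epsilon_k\varphi_k$, the $\epsilon_k$ being i.i.d.\ Rademacher, with $b_j^2=\Delta_n(\theta_j)/|B_j|$; let $P_j$ have $P_0$-density $1+u_j$ and law $\Pi_j$ over the signs, and set $\Pi=J_n^{-1}\sum_{j=1}^{J_n}\Pi_j$. Three points must be verified. (i) $u_j\in\Fcal(\theta_j;M)$: since $\chi^2(P_j,P_0)=\|u_j\|_{L_2(P_0)}^2=\Delta_n(\theta_j)$ and $\|u_j\|_K^2\asymp m_j^{2s}\Delta_n(\theta_j)$, approximating $u_j$ by its restriction to a sub-block of $B_j$ gives $\|u_j-(u_j)_R\|_{L_2(P_0)}\le MR^{-1/\theta_j}$ for every $R>0$, precisely because the choice of $m_j$ makes $b_j^2$ a sufficiently small multiple of $m_j^{-(\theta_j+2s+1)/(\theta_j+1)}$, uniformly in $\theta_j\in[\theta_1,\theta_2]$ (the binding $R$ being the one at which $\asymp m_j/(1+\theta_j)$ coefficients are retained). (ii) $P_j$ is a genuine probability measure: $\|u_j\|_\infty\le b_j|B_j|\sup_k\|\varphi_k\|_\infty\asymp\Delta_n(\theta_j)^{(2s-\theta_j-1)/(4s)}\to 0$ uniformly in $j$, the exponent being positive exactly because $\theta_2<2s-1$ --- the same place this hypothesis enters in Theorem~\ref{cr}. (iii) Hence $P_j\in\Pcal(\Delta_n(\theta_j),\theta_j)\subseteq\cup_\theta\Pcal(\Delta_n(\theta),\theta)$, so $\Pi$ is supported on the alternative.

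The crux is bounding the $\chi^2$-divergence of the mixture. With $L_j:=d\bar P_{\Pi_j}/dP_0^{\otimes n}=\EE_\epsilon\prod_{i=1}^n(1+u_j(X_i))$, independence of $X_1,\dots,X_n$ and the degeneracy $\EE_{P_0}\varphi_k=0$ give, for all $j,j'$,
\[
\EE_{P_0^{\otimes n}}[L_jL_{j'}]=\EE_\epsilon\EE_{\epsilon'}\big(1+\langle u_j^{\epsilon},u_{j'}^{\epsilon'}\rangle_{L_2(P_0)}\big)^n,
\]
which equals $1$ whenever $j\ne j'$ because $B_j\cap B_{j'}=\varnothing$ and $\{\varphi_k\}$ is $L_2(P_0)$-orthonormal, while for $j=j'$ the elementary bounds $(1+x)^n\le e^{nx}$ (valid since $1+u_j\ge 0$) and $\EE\,e^{n b_j^2 \epsilon_k\epsilon'_k}=\cosh(nb_j^2)\le e^{n^2b_j^4/2}$ yield $\EE_{P_0^{\otimes n}}[L_j^2]\le\exp\big(\tfrac12|B_j|n^2b_j^4\big)$. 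The hypothesis $\Delta_n(\theta)\le c_2(n/\sqrt{\log\log n})^{-4s/(4s+\theta+1)}$ now forces
\[
|B_j|\,n^2b_j^4=\frac{n^2\Delta_n(\theta_j)^2}{|B_j|}\ \asymp\ c_M^{-1}\,n^2\,\Delta_n(\theta_j)^{(4s+\theta_j+1)/(2s)}\ \le\ c_M^{-1}\,c_2^{(4s+\theta_j+1)/(2s)}\,\log\log n,
\]
so, fixing $c_M$ and $\delta$ first and then choosing $c_2$ small, we obtain $|B_j|n^2b_j^4\le\kappa\log J_n$ for any preassigned $\kappa\in(0,2)$ and all large $n$, hence $\EE_{P_0^{\otimes n}}[L_j^2]\le J_n^{\kappa/2}$. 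Consequently
\[
\chi^2(\bar P_\Pi,P_0^{\otimes n})=\EE_{P_0^{\otimes n}}\Big[\Big(J_n^{-1}\sum_{j}L_j\Big)^2\Big]-1=\frac1{J_n^2}\sum_{j,j'}\EE_{P_0^{\otimes n}}[L_jL_{j'}]-1\ \le\ J_n^{\kappa/2-1},
\]
which tends to $0$; combined with the first display, $\inf_T[\EE_{P_0}T+\sup_\theta\beta(T;\Delta_n(\theta),\theta)]\ge 1-\tfrac12 J_n^{(\kappa/2-1)/2}\to 1$, and since this quantity is always $\le 1$ the limit equals $1$.

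I expect the main obstacle to lie in the bookkeeping behind (i)--(iii): arranging a single family $(J_n,\{B_j\},\{b_j\})$ so that simultaneously the attached indices $\theta_j$ sweep out $[\theta_1,\theta_2]$ while $J_n\asymp\log n$ (the choice $J_n\asymp\log n$ being what makes $\log J_n\asymp\log\log n$, the source of the adaptation loss), each $P_j$ sits exactly on its own boundary $\Delta_n(\theta_j)$, and the per-block second-moment budget $|B_j|n^2b_j^4$ stays below $\kappa\log J_n$ --- the interpolation constraint pushes $|B_j|$ down while the budget constraint pushes it up, and it is the precise exponent $4s/(4s+\theta+1)$, equivalently the fact that $\Fcal(\theta;M)$ is the interpolation space between $\Hcal(K)$ and $L_2(P_0)$, that makes the resulting window for $|B_j|$ nonempty. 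The other delicate point is the uniform-in-$j$ control of $\|u_j\|_\infty$ and of the constant $c_M$ in the interpolation inequality; apart from these, everything reduces to the by-now-routine ``$\chi^2$-divergence of a mixture of spectrally orthogonal bumps'' estimate already used to prove Theorem~\ref{cr}.
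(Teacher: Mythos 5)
Your proof is correct and follows essentially the same route as the paper's: reduce to bounding the $\chi^2$-divergence of a uniform mixture over $\asymp\log n$ disjoint spectral blocks, each block carrying a Rademacher perturbation tuned to a different smoothness $\theta\in[\theta_1,\theta_2]$ and sitting on its own detection boundary, so that cross-terms $j\neq j'$ vanish by orthogonality while the diagonal terms satisfy $\EE_{P_0^{\otimes n}}[L_j^2]\le\exp\bigl(\tfrac12|B_j|n^2b_j^4\bigr)$ with $|B_j|n^2b_j^4\lesssim\log\log n\asymp\log J_n$. The only cosmetic difference is bookkeeping: you linearly space the $\theta_j$ and derive the block sizes $|B_j|$, whereas the paper fixes geometrically doubling block sizes $K_{n,r}=2^{r-2}K_{n,1}$ and then solves for $\theta_{n,r}$; both parameterizations yield the same estimate and the same role for $\theta_2<2s-1$ (ensuring $\|u_j\|_\infty\to0$) and for the interpolation inequality (Lemma~\ref{char0} in the paper, your restriction-to-a-sub-block argument).
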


Similar to Theorem \ref{cr}, Theorem \ref{cra} shows that there is no consistent test for $H_0$ against $H_1(\Delta_n,\theta)$ simultaneously over all $\theta\in [\theta_1,\theta_2]$, if $\Delta_n(\theta)\leq c_2\left(n^{-1}\sqrt{\log\log n}\right)^{\frac{4s}{4s+\theta+1}}$ $\forall\ \theta\in[\theta_1,\theta_2]$ for a sufficiently small $c_2$. Together with Theorem \ref{crtm3d}, this suggests that the test \TTMd\ is indeed rate optimal.

\section{Numerical Experiments}
\label{sec:num}

To complement the earlier theoretical development, we also performed several sets of simulation experiments to demonstrate the merits of the proposed adaptive test based on $\tilde{T}_n$. To do so, we need to first address a practical issue of computing the test statistic $\tilde{T}_n$:  how to compute ${\eta}_{\varrho_n}^2(\hat{P}_n, P_0)$ for a given $\varrho_n$.

\subsection{Computing $\tilde{T}_n$}\label{sec:compute}
Though the form of ${\eta}_{\varrho_n}^2(\hat{P}_n, P_0)$ looks similar to that of ${\gamma}^2(\hat{P}_n, P_0)$, from the point of view of computing it numerically, there is a subtle issue. The kernel $\tilde{K}_{\varrho_n}(x,x')$ is defined only in its Mercer decomposed form, which is based on the Mercer decomposition of $K(x,x')$. Hence, in order to compute the kernel $\tilde{K}_{\varrho_n}(x,x')$, we need to first choose a kernel $K(x,x')$ and compute its Mercer decomposition numerically. Specifically, we use \emph{chebfun} framework in Matlab (with slight modifications) to compute Mercer decompositions associated with kernels based on their integral operator representations ~\cite{Driscoll2014, Battles2004}. Once we compute $\lambda_k$ and the associated $\varphi_k(\cdot)$, we approximately compute $\tilde{K}_{\varrho_n}(x,x')$ based on the top $K$ eigvevalues and eigenfunctions. This provides a numerical framework for computing $\tilde{K}_{\varrho_n}(x,x')$ once we fix a kernel $K(x,x')$. In the cases when the eigenvalues and eigenfunction are known, for example when using polynomial kernels, from our experiments we found that using the top few numerical eigenvalues gives a good approximation to the actual value of the kernel. Given a way to compute kernel evaluations, computing  ${\eta}_{\varrho_n}^2(\hat{P}_n, P_0)$ follows similarly. 

\subsection{ Power comparison}

Once we are able to compute $\tilde{T}_n$, we can assess its null distribution by simulating it under the null hypothesis $H_0$. In particular, we repeated for each case 200 runs and estimated the $95\%$ quantile of $\tilde{T}_n$ under $H_0$ by the corresponding sample quantile. We then proceeded to reject $H_0$ when an observed test statistic exceeds the estimate $95\%$ quantile. By construction, the procedure gives a $5\%$-level test, up to Monte Carlo error.

\paragraph{Euclidean data:} We first consider using both the test \TMMD\ and \TTMd\ to test the hypothesis $P_0$ is uniform on $[0,1]^d$ given a sample of observations $\{ X_1, \ldots, X_n\}$. The dimensionality used are $100$ and $200$. We followed the examples for densities put forward in~\cite{marron1992exact} in the context of nonparametric density estimation, for the alternatives.  Specifically we set the alternative hypothesis to be (1) mixture of five Gaussians, (2) skewed unimodal, (3) asymmetric claw density and (4) smooth comb density. The value of $\alpha$ is set to $0.05$. The sample size $n$ is varied from $200$ to $1000$ (in steps of 200) and for each value of sample size 100 simulations are conducted to estimate the probability of rejecting a false null hypothesis. 

We use a Gaussian kernel $K$ to compute the ${\gamma}^2(\hat{P}_n, P_0)$ and use the procedure outlined in section~\ref{sec:compute} to compute ${\eta}_{\varrho_n}^2(\hat{P}_n, P_0)$. The issue of choosing the kernel is subtle when using \TMMD. For simplicity, we fixed the value of bandwidth of Gaussian kernel (which corresponds to choosing the kernel in this case) to a fixed value, that corresponds to the best performance of \TMMD. With the fixed value of the bandwidth, to fix $\varrho$, we tried values over a grid and set it to the value that performed best. Figure~\ref{fig:simT2} illustrates a plot of the estimated probability of  accepting the null hypothesis when it is false for different values of sample size $n$ for the proposed test \TTMd\ along with \TMMD\ and the more classical Kolmogorov-Smirnov (K-S, for short) goodness-of-fit test. We note from Figure~\ref{fig:simT2} that the estimated error probability converges to zero at a faster rate for the adaptive M$^3$D test compared to the MMD test and the Kolmogorov-Smirnov test on all the different simulation settings that are considered. Note that it has been previously observed that MMD test performs better than K-S test in various setting in \cite{gretton2012kernel}, which we observe in our setting as well.

\begin{figure*}[!htbp]
\centering
\begin{minipage}{0.5\textwidth}
\centering
\begin{tikzpicture}[scale=0.70]
  \begin{axis}[
    xlabel = $\text{Sample size (n)}$,
ylabel=$P(~\text{accepting}~H_0~\text{when false}~)$,
xmax = 1000,
xmin = 200,
ymax = 1,
ymin = 0
]
\addplot+[error bars/.cd,
y dir=both,y explicit]
 coordinates {
( 200, 0.94 )+- (0.0, 0.04)
( 400, 0.40 )+- (0.0, 0.03)
( 600, 0.31 )+- (0.0, 0.00)
( 800, 0.15 )+- (0.0, 0.00)
( 1000, 0.05 )+- (0.0, 0.00)
};  \addlegendentry{$M^3D$} ;
\addplot+[error bars/.cd,
y dir=both,y explicit]
 coordinates {
( 200, 0.95 )+- (0.0, 0.03)
( 400, 0.45 )+- (0.0, 0.03)
( 600, 0.38 )+- (0.0, 0.03)
( 800, 0.22 )+- (0.0, 0.03)
( 1000, 0.08 )+- (0.0, 0.00)
};  \addlegendentry{$MMD$} ;
\addplot+[error bars/.cd,
y dir=both,y explicit]
 coordinates {
( 200, 0.98 )+- (0.0, 0.03)
( 400, 0.52 )+- (0.0, 0.03)
( 600, 0.44 )+- (0.0, 0.03)
( 800, 0.28 )+- (0.0, 0.03)
( 1000, 0.12 )+- (0.0, 0.03)
};  \addlegendentry{$KS$} ;
\end{axis}
\end{tikzpicture}
\end{minipage}
\begin{minipage}{0.5\textwidth}
\centering
\begin{tikzpicture}[scale=0.70]
  \begin{axis}[
    xlabel = $\text{Sample size (n)}$,
ylabel=$P(~\text{accepting}~H_0~\text{when false}~)$,
xmax = 1000,
xmin = 200,
ymax = 1,
ymin = 0
]
\addplot+[error bars/.cd,
y dir=both,y explicit]
 coordinates {
( 200, 0.94 )+- (0.0, 0.04)
( 400, 0.40 )+- (0.0, 0.03)
( 600, 0.30 )+- (0.0, 0.00)
( 800, 0.20 )+- (0.0, 0.00)
( 1000, 0.06 )+- (0.0, 0.00)
};  \addlegendentry{$M^3D$} ;
\addplot+[error bars/.cd,
y dir=both,y explicit]
 coordinates {
( 200, 0.95 )+- (0.0, 0.03)
( 400, 0.44 )+- (0.0, 0.03)
( 600, 0.35 )+- (0.0, 0.03)
( 800, 0.23 )+- (0.0, 0.03)
( 1000, 0.09 )+- (0.0, 0.00)
};  \addlegendentry{$MMD$} ;
\addplot+[error bars/.cd,
y dir=both,y explicit]
 coordinates {
( 200, 0.96 )+- (0.0, 0.03)
( 400, 0.53 )+- (0.0, 0.03)
( 600, 0.44 )+- (0.0, 0.03)
( 800, 0.30 )+- (0.0, 0.00)
( 1000, 0.13 )+- (0.0, 0.00)
};  \addlegendentry{$KS$} ;
\end{axis}
\end{tikzpicture}
\end{minipage}
\hfill \vspace{-0.1in}
\begin{minipage}{0.5\textwidth}
\centering
\begin{tikzpicture}[scale=0.70]
  \begin{axis}[
    xlabel = $\text{Sample size (n)}$,
ylabel=$P(~\text{accepting}~H_0~\text{when false}~)$,
xmax = 1000,
xmin = 200,
ymax = 1,
ymin = 0
]
\addplot+[error bars/.cd,
y dir=both,y explicit]
 coordinates {
( 200, 0.95 )+- (0.0, 0.04)
( 400, 0.30 )+- (0.0, 0.03)
( 600, 0.14 )+- (0.0, 0.00)
( 800, 0.105 )+- (0.0, 0.00)
( 1000, 0.06 )+- (0.0, 0.00)
};  \addlegendentry{$M^3D$} ;
\addplot+[error bars/.cd,
y dir=both,y explicit]
 coordinates {
( 200, 0.95 )+- (0.0, 0.03)
( 400, 0.46 )+- (0.0, 0.03)
( 600, 0.35 )+- (0.0, 0.03)
( 800, 0.19 )+- (0.0, 0.00)
( 1000, 0.10 )+- (0.0, 0.00)
};  \addlegendentry{$MMD$} ;
\addplot+[error bars/.cd,
y dir=both,y explicit]
 coordinates {
( 200, 0.96 )+- (0.0, 0.03)
( 400, 0.59 )+- (0.0, 0.03)
( 600, 0.48 )+- (0.0, 0.03)
( 800, 0.23 )+- (0.0, 0.00)
( 1000, 0.11 )+- (0.0, 0.00)
};  \addlegendentry{$KS$} ;
\end{axis}
\end{tikzpicture}
\end{minipage}
\begin{minipage}{0.5\textwidth}
\centering
\begin{tikzpicture}[scale=0.70]
  \begin{axis}[
    xlabel = $\text{Sample size (n)}$,
ylabel=$P(~\text{accepting}~H_0~\text{when false}~)$,
xmax = 1000,
xmin = 200,
ymax = 1,
ymin = 0
]
\addplot+[error bars/.cd,
y dir=both,y explicit]
 coordinates {
( 200, 0.95 )+- (0.0, 0.04)
( 400, 0.32 )+- (0.0, 0.03)
( 600, 0.19 )+- (0.0, 0.00)
( 800, 0.11 )+- (0.0, 0.00)
( 1000, 0.05 )+- (0.0, 0.00)
};  \addlegendentry{$M^3D$} ;
\addplot+[error bars/.cd,
y dir=both,y explicit]
 coordinates {
( 200, 0.95 )+- (0.0, 0.03)
( 400, 0.46)+- (0.0, 0.03)
( 600, 0.26 )+- (0.0, 0.03)
( 800, 0.20 )+- (0.0, 0.03)
( 1000, 0.09 )+- (0.0, 0.00)
};  \addlegendentry{$MMD$} ;
\addplot+[error bars/.cd,
y dir=both,y explicit]
 coordinates {
( 200, 0.96 )+- (0.0, 0.03)
( 400, 0.59 )+- (0.0, 0.03)
( 600, 0.41 )+- (0.0, 0.03)
( 800, 0.30 )+- (0.0, 0.00)
( 1000, 0.11 )+- (0.0, 0.00)
};  \addlegendentry{$KS$} ;
\end{axis}
\end{tikzpicture}
\end{minipage}
\vspace{-0.1in}
\begin{minipage}{0.5\textwidth}
\centering
\begin{tikzpicture}[scale=0.70]
  \begin{axis}[
    xlabel = $\text{Sample size (n)}$,
ylabel=$P(~\text{accepting}~H_0~\text{when false}~)$,
xmax = 1000,
xmin = 200,
ymax = 1,
ymin = 0
]
\addplot+[error bars/.cd,
y dir=both,y explicit]
 coordinates {
( 200, 0.96 )+- (0.0, 0.04)
( 400, 0.35 )+- (0.0, 0.03)
( 600, 0.20 )+- (0.0, 0.00)
( 800, 0.15 )+- (0.0, 0.00)
( 1000, 0.05 )+- (0.0, 0.00)
};  \addlegendentry{$M^3D$} ;
\addplot+[error bars/.cd,
y dir=both,y explicit]
 coordinates {
( 200, 0.96 )+- (0.0, 0.03)
( 400, 0.52 )+- (0.0, 0.03)
( 600, 0.41 )+- (0.0, 0.03)
( 800, 0.25 )+- (0.0, 0.03)
( 1000, 0.10 )+- (0.0, 0.00)
};  \addlegendentry{$MMD$} ;
\addplot+[error bars/.cd,
y dir=both,y explicit]
 coordinates {
( 200, 0.97 )+- (0.0, 0.03)
( 400, 0.63 )+- (0.0, 0.03)
( 600, 0.51 )+- (0.0, 0.03)
( 800, 0.35 )+- (0.0, 0.00)
( 1000, 0.10 )+- (0.0, 0.00)
};  \addlegendentry{$KS$} ;
\end{axis}
\end{tikzpicture}
\end{minipage}
\begin{minipage}{0.5\textwidth}
\centering
\begin{tikzpicture}[scale=0.70]
  \begin{axis}[
    xlabel = $\text{Sample size (n)}$,
ylabel=$P(~\text{accepting}~H_0~\text{when false}~)$,
xmax = 1000,
xmin = 200,
ymax = 1,
ymin = 0
]
\addplot+[error bars/.cd,
y dir=both,y explicit]
 coordinates {
( 200, 0.96 )+- (0.0, 0.04)
( 400, 0.48 )+- (0.0, 0.03)
( 600, 0.30 )+- (0.0, 0.00)
( 800, 0.15 )+- (0.0, 0.00)
( 1000, 0.06 )+- (0.0, 0.00)
};  \addlegendentry{$M^3D$} ;
\addplot+[error bars/.cd,
y dir=both,y explicit]
 coordinates {
( 200, 0.97 )+- (0.0, 0.03)
( 400, 0.66 )+- (0.0, 0.03)
( 600, 0.45 )+- (0.0, 0.03)
( 800, 0.26 )+- (0.0, 0.03)
( 1000, 0.09 )+- (0.0, 0.00)
};  \addlegendentry{$MMD$} ;
\addplot+[error bars/.cd,
y dir=both,y explicit]
 coordinates {
( 200, 0.97 )+- (0.0, 0.03)
( 400, 0.69 )+- (0.0, 0.03)
( 600, 0.51 )+- (0.0, 0.03)
( 800, 0.31 )+- (0.0, 0.00)
( 1000, 0.104 )+- (0.0, 0.00)
};  \addlegendentry{$KS$} ;
\end{axis}
\end{tikzpicture}
\end{minipage}
\vspace{-0.1in}
\begin{minipage}{0.5\textwidth}
\centering
\begin{tikzpicture}[scale=0.70]
  \begin{axis}[
    xlabel = $\text{Sample size (n)}$,
ylabel=$P(~\text{accepting}~H_0~\text{when false}~)$,
xmax = 1000,
xmin = 200,
ymax = 1,
ymin = 0
]
\addplot+[error bars/.cd,
y dir=both,y explicit]
 coordinates {
( 200, 0.96 )+- (0.0, 0.04)
( 400, 0.41 )+- (0.0, 0.03)
( 600, 0.25 )+- (0.0, 0.00)
( 800, 0.16 )+- (0.0, 0.00)
( 1000, 0.04 )+- (0.0, 0.00)
};  \addlegendentry{$M^3D$} ;
\addplot+[error bars/.cd,
y dir=both,y explicit]
 coordinates {
( 200, 0.97 )+- (0.0, 0.03)
( 400, 0.54 )+- (0.0, 0.03)
( 600, 0.40 )+- (0.0, 0.03)
( 800, 0.22 )+- (0.0, 0.03)
( 1000, 0.06 )+- (0.0, 0.00)
};  \addlegendentry{$MMD$} ;
\addplot+[error bars/.cd,
y dir=both,y explicit]
 coordinates {
( 200, 0.97 )+- (0.0, 0.03)
( 400, 0.60 )+- (0.0, 0.03)
( 600, 0.45 )+- (0.0, 0.04)
( 800, 0.33 )+- (0.0, 0.00)
( 1000, 0.10 )+- (0.0, 0.00)
};  \addlegendentry{$KS$} ;
\end{axis}
\end{tikzpicture}
\end{minipage}
\begin{minipage}{0.5\textwidth}
\centering
\begin{tikzpicture}[scale=0.70]
  \begin{axis}[
    xlabel = $\text{Sample size (n)}$,
ylabel=$P(~\text{accepting}~H_0~\text{when false}~)$,
xmax = 1000,
xmin = 200,
ymax = 1,
ymin = 0
]
\addplot+[error bars/.cd,
y dir=both,y explicit]
 coordinates {
( 200, 0.96 )+- (0.0, 0.04)
( 400, 0.41 )+- (0.0, 0.03)
( 600, 0.25 )+- (0.0, 0.00)
( 800, 0.16 )+- (0.0, 0.00)
( 1000, 0.04 )+- (0.0, 0.00)
};  \addlegendentry{$M^3D$} ;
\addplot+[error bars/.cd,
y dir=both,y explicit]
 coordinates {
( 200, 0.97 )+- (0.0, 0.03)
( 400, 0.54 )+- (0.0, 0.03)
( 600, 0.40 )+- (0.0, 0.03)
( 800, 0.22 )+- (0.0, 0.03)
( 1000, 0.06 )+- (0.0, 0.00)
};  \addlegendentry{$MMD$} ;
\addplot+[error bars/.cd,
y dir=both,y explicit]
 coordinates {
( 200, 0.97 )+- (0.0, 0.03)
( 400, 0.60 )+- (0.0, 0.03)
( 600, 0.45 )+- (0.0, 0.03)
( 800, 0.33 )+- (0.0, 0.00)
( 1000, 0.10 )+- (0.0, 0.00)
};  \addlegendentry{$KS$} ;
\end{axis}
\end{tikzpicture}
\end{minipage}
\hfill\vspace{-0.1in}
\caption{Error versus Sample Size: mixture of Gaussian (row 1), skewed unimodal (row 2), asymmetric claw (row 3) and smooth comb (row 4) with dimensionality 100 (left) and 200 (right).}
\label{fig:simT2}
\end{figure*}
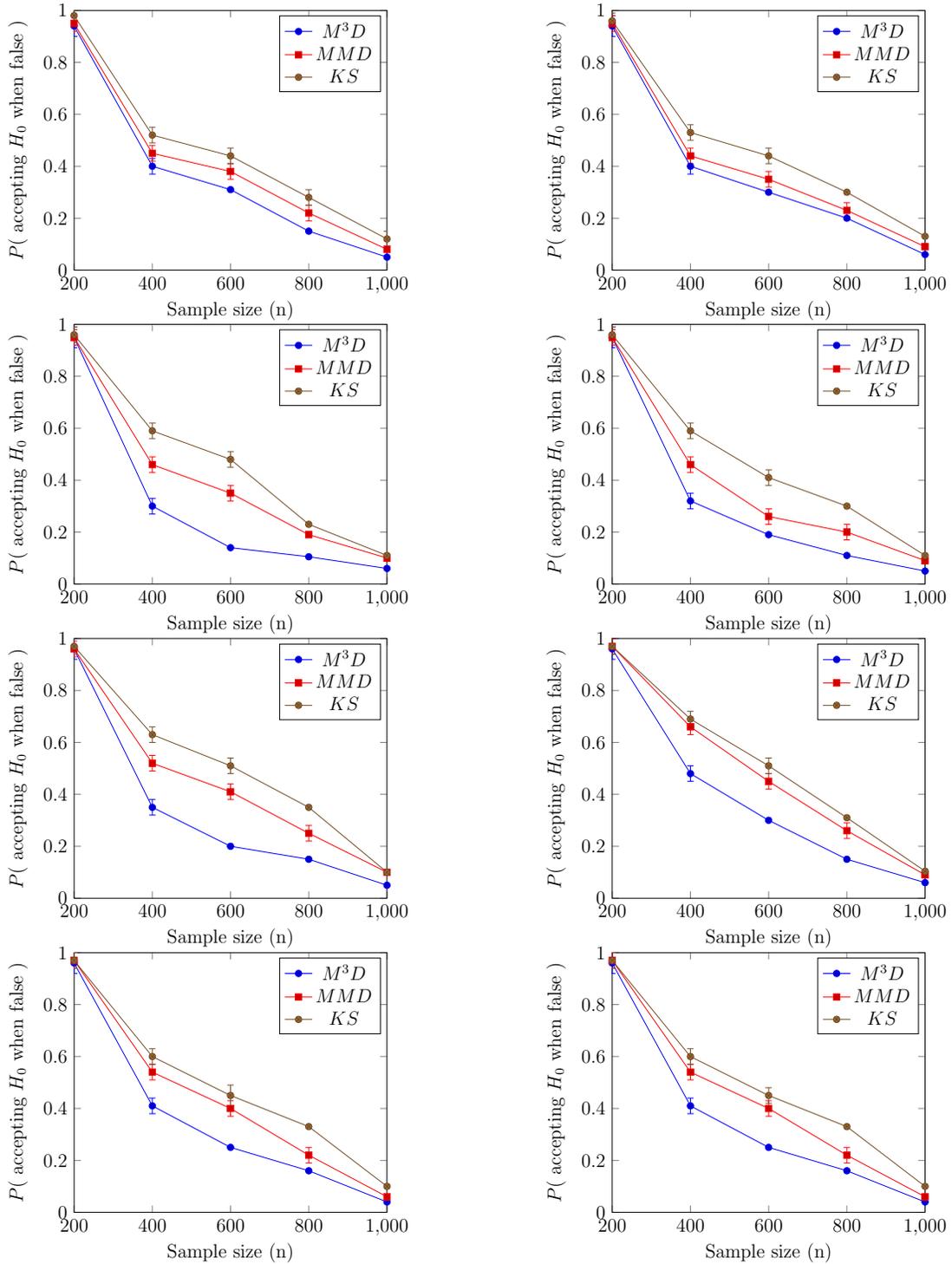


\paragraph{Directional data:} One of the advantages of the proposed RKHS embedding based approach is that it could be used on domains other than the $d$-dimensional Euclidean space. For example when $\mathcal{X} = \mathbb{S}^{d-1}$ where  $\mathbb{S}^{d-1}$ corresponds to the $d$-dimensional unit sphere, one can perform hypothesis testing using the above framework, as long as we can compute the Mercer decomposition of a kernel $K$ defined on the domain. In several applications, like protein folding, often times data are modeled as coming from the unit-sphere and testing goodness-of-fit for such data needs specialized methods different from the standard nonparametric testing methods~\cite{mardia2009directional,jupp2005sobolev}. 

In order to highlight the advantage of the proposed approach, we assume $P_0$ is uniform distribution on the unit sphere of dimension $100$ and test it against the alternative that data are from: 
\begin{enumerate}
\item[(1)] multivariate von Mises-Fisher distribution (which is the Gaussian analogue on the unit-sphere) given by $f_{vM\mhyphen F}(x, \mu, \kappa) = C_{vM\mhyphen F}(\kappa) \exp(\kappa \mu^\top x)$ for data $x \in \mathbb{S}^{d-1}$, where $\kappa \geq 0$ is concentration parameter and $\mu$ is the mean parameter. The term $C_{vM\mhyphen F}$ is the normalization constant given by $\frac{\kappa^{d/2-1}}{2\pi^{d/2}I_{d/2-1}(\kappa)}$ where $I$ is modified Bessel function;

\item[(2)] multivariate Watson distribution (used to model axially symmetric data on sphere) given by  $f_{W}(x, \mu, \kappa) = C_W(\kappa) \exp(\kappa (\mu^\top x)^2)$ for data $x \in \mathbb{S}^{d-1}$, where $\kappa \geq 0$ is concentration parameter and $\mu$ is the mean parameter as before. The term $C_W(\kappa)$ is the normalization constant given by $\frac{\Gamma(d/2)}{2\pi^{p/2} M(1/2,d/2,\kappa)}$ where $M$ is Kummer's confluent hypergeometric function;

\item[(3)] mixture of five von Mises-Fisher distribution which are used in modeling and clustering spherical data~\cite{banerjee2005clustering};
\item[(4)] mixture of five Watson distribution which are used in modeling and clustering spherical data~\cite{sra2013multivariate}.
\end{enumerate}

Note that in this setup one can analytically compute the Mercer decomposition of the Gaussian kernel on the unit sphere with respect to the uniform distribution. Specifically, the eigenvalues are given by Theorem 2 in~\cite{minh2006mercer} and the eigenfunctions are the standard spherical harmonics of order $k$ (see section 2.1 in ~\cite{minh2006mercer} for details). Rest of the simulation setup is similar to the previous setting (of Euclidean data) and we compared \TTMd\ against  \TMMD\ and the Sobolev test approach (denoted as ST hereafter) proposed in~\cite{jupp2005sobolev}. Figure~\ref{fig:simT3} illustrates a plot of estimated probability of accepting null hypothesis when it is false for different values of sample size, from which we see the adaptive M$^3$D test performs better.  

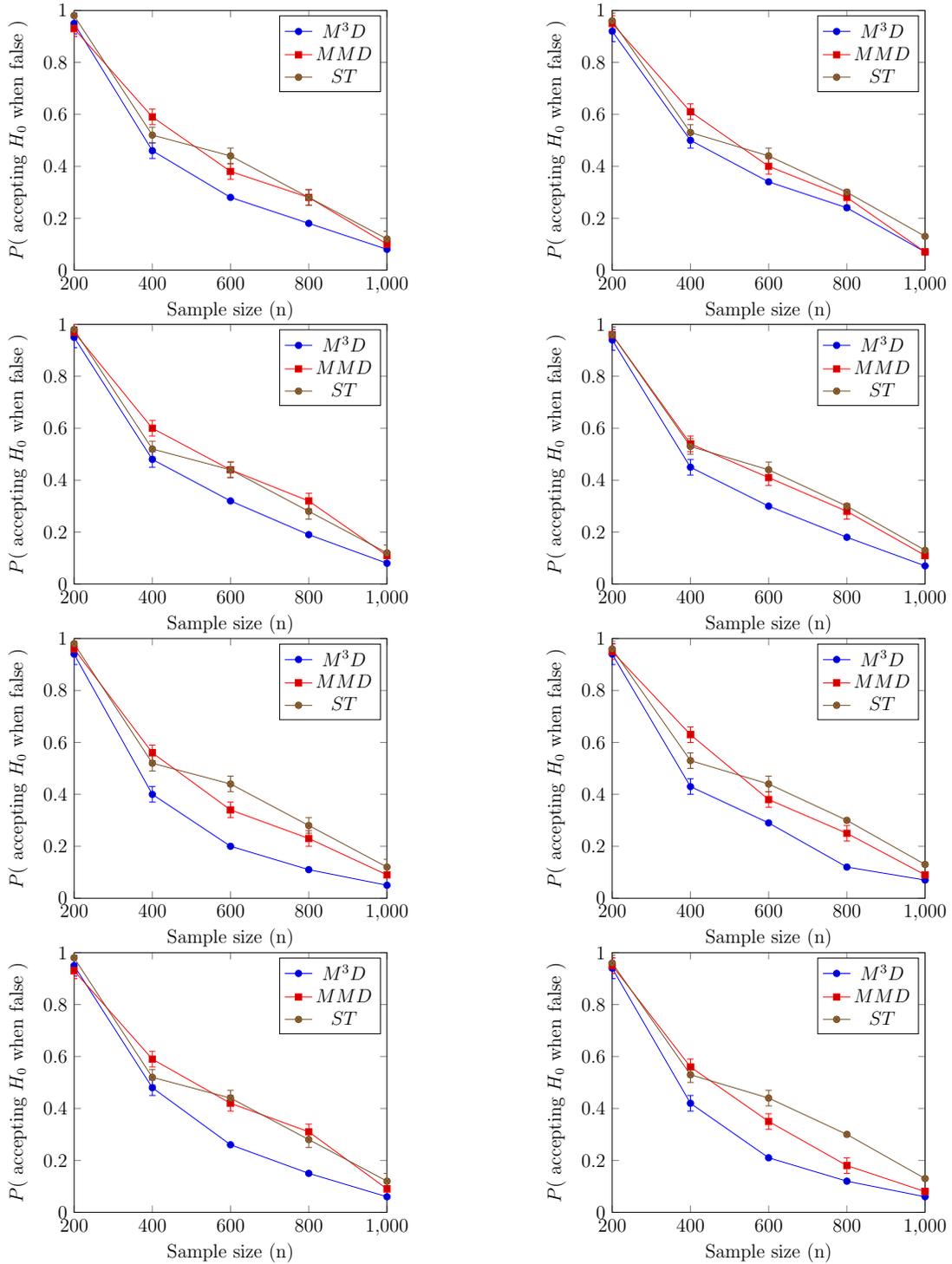
\begin{figure*}[!htbp]
\centering
\begin{minipage}{0.5\textwidth}
\centering
\begin{tikzpicture}[scale=0.70]
  \begin{axis}[
    xlabel = $\text{Sample size (n)}$,
ylabel=$P(~\text{accepting}~H_0~\text{when false}~)$,
xmax = 1000,
xmin = 200,
ymax = 1,
ymin = 0
]
\addplot+[error bars/.cd,
y dir=both,y explicit]
 coordinates {
( 200, 0.95 )+- (0.0, 0.04)
( 400, 0.46 )+- (0.0, 0.03)
( 600, 0.28 )+- (0.0, 0.00)
( 800, 0.18 )+- (0.0, 0.00)
( 1000, 0.08 )+- (0.0, 0.00)
};  \addlegendentry{$M^3D$} ;
\addplot+[error bars/.cd,
y dir=both,y explicit]
 coordinates {
( 200, 0.93 )+- (0.0, 0.03)
( 400, 0.59 )+- (0.0, 0.03)
( 600, 0.38 )+- (0.0, 0.03)
( 800, 0.28 )+- (0.0, 0.03)
( 1000, 0.10 )+- (0.0, 0.00)
};  \addlegendentry{$MMD$} ;
\addplot+[error bars/.cd,
y dir=both,y explicit]
 coordinates {
( 200, 0.98 )+- (0.0, 0.03)
( 400, 0.52 )+- (0.0, 0.03)
( 600, 0.44 )+- (0.0, 0.03)
( 800, 0.28 )+- (0.0, 0.03)
( 1000, 0.12 )+- (0.0, 0.03)
};  \addlegendentry{$ST$} ;
\end{axis}
\end{tikzpicture}
\end{minipage}
\begin{minipage}{0.5\textwidth}
\centering
\begin{tikzpicture}[scale=0.70]
  \begin{axis}[
    xlabel = $\text{Sample size (n)}$,
ylabel=$P(~\text{accepting}~H_0~\text{when false}~)$,
xmax = 1000,
xmin = 200,
ymax = 1,
ymin = 0
]
\addplot+[error bars/.cd,
y dir=both,y explicit]
 coordinates {
( 200, 0.92 )+- (0.0, 0.04)
( 400, 0.50 )+- (0.0, 0.03)
( 600, 0.34 )+- (0.0, 0.00)
( 800, 0.24 )+- (0.0, 0.00)
( 1000, 0.07 )+- (0.0, 0.00)
};  \addlegendentry{$M^3D$} ;
\addplot+[error bars/.cd,
y dir=both,y explicit]
 coordinates {
( 200, 0.95 )+- (0.0, 0.03)
( 400, 0.61 )+- (0.0, 0.03)
( 600, 0.40 )+- (0.0, 0.03)
( 800, 0.28 )+- (0.0, 0.03)
( 1000, 0.07 )+- (0.0, 0.00)
};  \addlegendentry{$MMD$} ;
\addplot+[error bars/.cd,
y dir=both,y explicit]
 coordinates {
( 200, 0.96 )+- (0.0, 0.03)
( 400, 0.53 )+- (0.0, 0.03)
( 600, 0.44 )+- (0.0, 0.03)
( 800, 0.30 )+- (0.0, 0.00)
( 1000, 0.13 )+- (0.0, 0.00)
};  \addlegendentry{$ST$} ;
\end{axis}
\end{tikzpicture}
\end{minipage}
\hfill \vspace{-0.1in}
\begin{minipage}{0.5\textwidth}
\centering
\begin{tikzpicture}[scale=0.70]
  \begin{axis}[
    xlabel = $\text{Sample size (n)}$,
ylabel=$P(~\text{accepting}~H_0~\text{when false}~)$,
xmax = 1000,
xmin = 200,
ymax = 1,
ymin = 0
]
\addplot+[error bars/.cd,
y dir=both,y explicit]
 coordinates {
( 200, 0.95 )+- (0.0, 0.04)
( 400, 0.48 )+- (0.0, 0.03)
( 600, 0.32 )+- (0.0, 0.00)
( 800, 0.19 )+- (0.0, 0.00)
( 1000, 0.08 )+- (0.0, 0.00)
};  \addlegendentry{$M^3D$} ;
\addplot+[error bars/.cd,
y dir=both,y explicit]
 coordinates {
( 200, 0.97 )+- (0.0, 0.03)
( 400, 0.60 )+- (0.0, 0.03)
( 600, 0.44 )+- (0.0, 0.03)
( 800, 0.32 )+- (0.0, 0.03)
( 1000, 0.11 )+- (0.0, 0.00)
};  \addlegendentry{$MMD$} ;
\addplot+[error bars/.cd,
y dir=both,y explicit]
 coordinates {
( 200, 0.98 )+- (0.0, 0.03)
( 400, 0.52 )+- (0.0, 0.03)
( 600, 0.44 )+- (0.0, 0.03)
( 800, 0.28 )+- (0.0, 0.03)
( 1000, 0.12 )+- (0.0, 0.03)
};  \addlegendentry{$ST$} ;
\end{axis}
\end{tikzpicture}
\end{minipage}
\begin{minipage}{0.5\textwidth}
\centering
\begin{tikzpicture}[scale=0.70]
  \begin{axis}[
    xlabel = $\text{Sample size (n)}$,
ylabel=$P(~\text{accepting}~H_0~\text{when false}~)$,
xmax = 1000,
xmin = 200,
ymax = 1,
ymin = 0
]
\addplot+[error bars/.cd,
y dir=both,y explicit]
 coordinates {
( 200, 0.94 )+- (0.0, 0.04)
( 400, 0.45 )+- (0.0, 0.03)
( 600, 0.30 )+- (0.0, 0.00)
( 800, 0.18 )+- (0.0, 0.00)
( 1000, 0.07 )+- (0.0, 0.00)
};  \addlegendentry{$M^3D$} ;
\addplot+[error bars/.cd,
y dir=both,y explicit]
 coordinates {
( 200, 0.96 )+- (0.0, 0.03)
( 400, 0.54 )+- (0.0, 0.03)
( 600, 0.41 )+- (0.0, 0.03)
( 800, 0.28 )+- (0.0, 0.03)
( 1000, 0.11 )+- (0.0, 0.00)
};  \addlegendentry{$MMD$} ;
\addplot+[error bars/.cd,
y dir=both,y explicit]
 coordinates {
( 200, 0.96 )+- (0.0, 0.03)
( 400, 0.53 )+- (0.0, 0.03)
( 600, 0.44 )+- (0.0, 0.03)
( 800, 0.30 )+- (0.0, 0.00)
( 1000, 0.13 )+- (0.0, 0.00)
};  \addlegendentry{$ST$} ;
\end{axis}
\end{tikzpicture}
\end{minipage}
\vspace{-0.1in}
\begin{minipage}{0.5\textwidth}
\centering
\begin{tikzpicture}[scale=0.70]
  \begin{axis}[
    xlabel = $\text{Sample size (n)}$,
ylabel=$P(~\text{accepting}~H_0~\text{when false}~)$,
xmax = 1000,
xmin = 200,
ymax = 1,
ymin = 0
]
\addplot+[error bars/.cd,
y dir=both,y explicit]
 coordinates {
( 200, 0.94 )+- (0.0, 0.04)
( 400, 0.40 )+- (0.0, 0.03)
( 600, 0.20 )+- (0.0, 0.00)
( 800, 0.11 )+- (0.0, 0.00)
( 1000, 0.05 )+- (0.0, 0.00)
};  \addlegendentry{$M^3D$} ;
\addplot+[error bars/.cd,
y dir=both,y explicit]
 coordinates {
( 200, 0.96 )+- (0.0, 0.03)
( 400, 0.56 )+- (0.0, 0.03)
( 600, 0.34 )+- (0.0, 0.03)
( 800, 0.23 )+- (0.0, 0.03)
( 1000, 0.09 )+- (0.0, 0.00)
};  \addlegendentry{$MMD$} ;
\addplot+[error bars/.cd,
y dir=both,y explicit]
 coordinates {
( 200, 0.98 )+- (0.0, 0.03)
( 400, 0.52 )+- (0.0, 0.03)
( 600, 0.44 )+- (0.0, 0.03)
( 800, 0.28 )+- (0.0, 0.03)
( 1000, 0.12 )+- (0.0, 0.03)
};  \addlegendentry{$ST$} ;
\end{axis}
\end{tikzpicture}
\end{minipage}
\begin{minipage}{0.5\textwidth}
\centering
\begin{tikzpicture}[scale=0.70]
  \begin{axis}[
    xlabel = $\text{Sample size (n)}$,
ylabel=$P(~\text{accepting}~H_0~\text{when false}~)$,
xmax = 1000,
xmin = 200,
ymax = 1,
ymin = 0
]
\addplot+[error bars/.cd,
y dir=both,y explicit]
 coordinates {
( 200, 0.94 )+- (0.0, 0.04)
( 400, 0.43 )+- (0.0, 0.03)
( 600, 0.29 )+- (0.0, 0.00)
( 800, 0.12 )+- (0.0, 0.00)
( 1000, 0.07 )+- (0.0, 0.00)
};  \addlegendentry{$M^3D$} ;
\addplot+[error bars/.cd,
y dir=both,y explicit]
 coordinates {
( 200, 0.95 )+- (0.0, 0.03)
( 400, 0.63 )+- (0.0, 0.03)
( 600, 0.38 )+- (0.0, 0.03)
( 800, 0.25 )+- (0.0, 0.03)
( 1000, 0.09 )+- (0.0, 0.00)
};  \addlegendentry{$MMD$} ;
\addplot+[error bars/.cd,
y dir=both,y explicit]
 coordinates {
( 200, 0.96 )+- (0.0, 0.03)
( 400, 0.53 )+- (0.0, 0.03)
( 600, 0.44 )+- (0.0, 0.03)
( 800, 0.30 )+- (0.0, 0.00)
( 1000, 0.13 )+- (0.0, 0.00)
};  \addlegendentry{$ST$} ;
\end{axis}
\end{tikzpicture}
\end{minipage}
\vspace{-0.1in}
\begin{minipage}{0.5\textwidth}
\centering
\begin{tikzpicture}[scale=0.70]
  \begin{axis}[
    xlabel = $\text{Sample size (n)}$,
ylabel=$P(~\text{accepting}~H_0~\text{when false}~)$,
xmax = 1000,
xmin = 200,
ymax = 1,
ymin = 0
]
\addplot+[error bars/.cd,
y dir=both,y explicit]
 coordinates {
( 200, 0.95 )+- (0.0, 0.04)
( 400, 0.48 )+- (0.0, 0.03)
( 600, 0.26 )+- (0.0, 0.00)
( 800, 0.15 )+- (0.0, 0.00)
( 1000, 0.06 )+- (0.0, 0.00)
};  \addlegendentry{$M^3D$} ;
\addplot+[error bars/.cd,
y dir=both,y explicit]
 coordinates {
( 200, 0.93 )+- (0.0, 0.03)
( 400, 0.59 )+- (0.0, 0.03)
( 600, 0.42 )+- (0.0, 0.03)
( 800, 0.31 )+- (0.0, 0.03)
( 1000, 0.09 )+- (0.0, 0.00)
};  \addlegendentry{$MMD$} ;
\addplot+[error bars/.cd,
y dir=both,y explicit]
 coordinates {
( 200, 0.98 )+- (0.0, 0.03)
( 400, 0.52 )+- (0.0, 0.03)
( 600, 0.44 )+- (0.0, 0.03)
( 800, 0.28 )+- (0.0, 0.03)
( 1000, 0.12 )+- (0.0, 0.03)
};  \addlegendentry{$ST$} ;
\end{axis}
\end{tikzpicture}
\end{minipage}
\begin{minipage}{0.5\textwidth}
\centering
\begin{tikzpicture}[scale=0.70]
  \begin{axis}[
    xlabel = $\text{Sample size (n)}$,
ylabel=$P(~\text{accepting}~H_0~\text{when false}~)$,
xmax = 1000,
xmin = 200,
ymax = 1,
ymin = 0
]
\addplot+[error bars/.cd,
y dir=both,y explicit]
 coordinates {
( 200, 0.94 )+- (0.0, 0.04)
( 400, 0.42 )+- (0.0, 0.03)
( 600, 0.21 )+- (0.0, 0.00)
( 800, 0.12 )+- (0.0, 0.00)
( 1000, 0.06 )+- (0.0, 0.00)
};  \addlegendentry{$M^3D$} ;
\addplot+[error bars/.cd,
y dir=both,y explicit]
 coordinates {
( 200, 0.95 )+- (0.0, 0.03)
( 400, 0.56 )+- (0.0, 0.03)
( 600, 0.35 )+- (0.0, 0.03)
( 800, 0.18 )+- (0.0, 0.03)
( 1000, 0.08 )+- (0.0, 0.00)
};  \addlegendentry{$MMD$} ;
\addplot+[error bars/.cd,
y dir=both,y explicit]
 coordinates {
( 200, 0.96 )+- (0.0, 0.03)
( 400, 0.53 )+- (0.0, 0.03)
( 600, 0.44 )+- (0.0, 0.03)
( 800, 0.30 )+- (0.0, 0.00)
( 1000, 0.13 )+- (0.0, 0.00)
};  \addlegendentry{$ST$} ;
\end{axis}
\end{tikzpicture}
\end{minipage}
\hfill\vspace{-0.1in}
\caption{Error versus Sample Size: von Mises-Fisher distribution (row 1), Watson distribution (row 2), Mixture of von Mises-Fisher distribution (row 3) and mixture of Watson distribution (row 4) on sphere for 100 dimensions (left) and 150 dimensions (right).}
\label{fig:simT3}
\end{figure*}


\subsection{Real data experiments}
In addition to the simulation examples, we also performed experiments on several real-world data examples. Similar to before, in the case of Euclidean data, we compared \TTMd\ against the standard MMD and Kolmogorov-Smirnov test, and in the case of spherical data, we used the Sobolev test~\cite{jupp2005sobolev}, instead of the Kolmogorov-Smirnov test.

For the case of Euclidean data, we used the MINST digits data set from the following webpage: \hyperref[http://yann.lecun.com/exdb/mnist/]{http://yann.lecun.com/exdb/mnist/}. Model-based clustering~ \cite{fraley2002model} is a widely-used and practical successful clustering technique in the literature.  Furthermore, the MNIST data set is a standard data set for testing clustering algorithms and consists of image of digits. Several works have implicitly assumed that the data come from a mixture of Gaussian distributions, because of the observed superior empirical performance under such an assumption. But the validity of such a mixture model assumption is invariably not tested statistically. In this experiment we selected three digits (which correspond to a cluster) randomly and conditioned on the selected digit (cluster), we test the hypothesis that the data come from a Gaussian distribution (that is, $P_0$ is Gaussian). For our experiments, we down sampled the images and use pixels as feature vectors with dimensionality 64 as is commonly done in the literature. Table~\ref{tab:realdata1} reports the probability with which the null hypothesis is accepted. The observed result reiterates in a statistically significant way that it is reasonable to make a mixture of Gaussian assumption in this case.

For the spherical case, we use the Human Fibroblasts dataset from~\cite{iyer1999transcriptional,dhillon2003diametrical}, Yeast Cell Cycle dataset from~\cite{spellman1998comprehensive} and the Rosetta yeast gene expression dataset~\cite{hughes2000functional}.  The Fibroblast data set contains 12 expression data corresponding to 517 samples (genes) report in the response of human fibroblasts following addition of serum to the growth media. We refer to~\cite{iyer1999transcriptional} for more details about the scientific procedure with which these data were obtained. The Yeast Cell Cycle dataset consists of 82-dimensional data corresponding to 696 subjects. The Rosetta yeast dataset contains 300-dimensional element vector for around 6000 yeast genes. Previous data analysis studies~\cite{sra2013multivariate, dhillon2003diametrical} have used mixtures of spherical distributions for clustering the above data set. Specifically, it has been observed in~\cite{sra2013multivariate} that clustering using a mixture of Watson distribution has superior performance. While that has proved to be useful scientifically, it was not statistically tested if such an assumption is valid. Here, we test for goodness of fit of Watson distribution (that is, $P_0$ is a Watson distribution) for the largest cluster from the above data sets.  Table~\ref{tab:realdata2} shows the estimated probability of acceptance of the null hypothesis when it is assumed to be true. The values reported are averages from 50 random trails of the same dataset. The observed results provide a statistical justification for the use of Watson distribution in modeling the above data sets. 

We note that for both situations, the tests considered tend to agree that the true hypothesis is true when there are more samples as indicated by Table~\ref{tab:realdata1} and~\ref{tab:realdata2}. But, the probability of acceptance is higher for low sample sizes for the adaptive M$^3$D test \TTMd\, in all cases, showing that the method works better with finite sample sizes. This highlights the advantage of the \TTMd\ in a finite sample setting confirming the better rates of convergence obtained in theory.
\begin{table}[t]
\centering
\begin{tabular}{|c|c|c|c|}
\hline
 Sample size = &  $300$ & $400$& $500$\\ \hline \hline 
K-S &  0.86&0.91 &0.94\\ \hline
$MMD$ & 0.90 & 0.93 &0.95 \\ \hline
$M^3D$& 0.94  &0.96   & 0.98\\ \hline
\end{tabular}
\begin{tabular}{|c|c|c|}
\hline
 $300$ & $400$& $500$\\ \hline \hline 
  0.83& 0.88 & 0.93\\ \hline
 0.89  &0.92  & 0.95\\ \hline
    0.93& 0.95  & 0.98\\ \hline
\end{tabular}
\begin{tabular}{|c|c|c|}
\hline
 $300$ & $400$& $500$\\ \hline \hline 
  0.84& 0.88 & 0.92\\ \hline
 0.88  &0.92  & 0.94\\ \hline
    0.93& 0.95  & 0.98\\ \hline
\end{tabular}
\caption{ The values reported are the estimated probability with which the corresponding hypothesis test accepts the null hypothesis when it is true. The level of the test $\alpha=0.05$.  Digit 4 on left, Digit 6 on the middle and Digit 7 on right, for various values of sample size.}
\label{tab:realdata1}
\end{table}

\begin{table}[t]
\centering
\begin{tabular}{|c|c|c|c|}
\hline
 Sample size = &  $75$ & $150$& $200$\\ \hline \hline 
ST &  0.87&0.93 &0.98\\ \hline
$MMD$ & 0.90 & 0.94 &0.98 \\ \hline
$M^3D$& 0.92  &0.96   & 0.99\\ \hline
\end{tabular}
\begin{tabular}{|c|c|c|}
\hline
  $150$ & $200$& $250$\\ \hline \hline 
  0.82& 0.87 & 0.91\\ \hline
 0.85  &0.92  & 0.94\\ \hline
    0.88& 0.93  & 0.96\\ \hline
\end{tabular}
\begin{tabular}{|c|c|c|}
\hline
 $400$ & $500$& $600$\\ \hline \hline 
  0.76& 0.84 & 0.91\\ \hline
 0.79  &0.87  & 0.92\\ \hline
    0.81& 0.90  & 0.95\\ \hline
\end{tabular}
\caption{ The values reported are the estimated probability with which the corresponding hypothesis test accepts the null hypothesis when it is true. The level of the test $\alpha=0.05$.  Human Fibroblasts dataset  on left, Yeast Cell Cycle dataset on the middle and Rosetta Yeast dataset on the right, for various values of sample size.}
\label{tab:realdata2}
\end{table}


\section{Proofs}
\label{sec:proof}

\begin{proof}[Proof of Theorem \ref{mmdthm}]

\noindent{\bf Part (i).} The proof of the first part consists of two key steps. First, we show that the population counterpart $n\gamma^2(P,P_0)$ of the test statistic converges to $\infty$ uniformly, \ie,
$$n\inf\limits_{P\in\mathcal{P}(\Delta_n,0)}\gamma^2(P,P_0)\rightarrow \infty.$$
Then, we argue that the deviation from $\gamma^2(P,P_0)$ to $\gamma^2(\hat{P}_n,P_0)$ is uniformly negligible compared with $\gamma^2(P,P_0)$ itself.

It is not hard to see that
	\begin{align*}
	\gamma(\hat{P}_n,P_0)=&\sqrt{\sum\limits_{k\geq 1}\lambda_k\Big[\frac{1}{n}\sum\limits_{i=1}^{n}\varphi_k(X_i)\Big]^2}\\
	\geq&\sqrt{\sum\limits_{k\geq 1}\lambda_k[\EE _{P}\varphi_k(X)]^2}-\sqrt{\sum\limits_{k\geq 1}\lambda_k\Big[\frac{1}{n}\sum\limits_{i=1}^{n}\varphi_k(X_i)-\EE _P\varphi_k(X)\Big]^2}.
	\end{align*}
Thus,
	\begin{align*}
	&P\left\{n\gamma^2(\hat{P}_n,P_0)<q_{w,1-\alpha}\right\}\\\leq &P\left\{ \sqrt{n\sum\limits_{k\geq 1}\lambda_k[\EE _{P}\varphi_k(X)]^2}-\sqrt{n\sum\limits_{k\geq 1}\lambda_k\Big[\frac{1}{n}\sum\limits_{i=1}^{n}\varphi_k(X_i)-\EE _P\varphi_k(X)\Big]^2}<\sqrt{q_{w,1-\alpha}}\right\} \\
	=&P\left\{ \sqrt{n\sum\limits_{k\geq 1}\lambda_k\Big[\frac{1}{n}\sum\limits_{i=1}^{n}\varphi_k(X_i)-\EE _P\varphi_k(X)\Big]^2}>\sqrt{n\sum\limits_{k\geq 1}\lambda_k[\EE _{P}\varphi_k(X)]^2}-\sqrt{q_{w,1-\alpha}}\right\}.
	\end{align*}
	
	Suppose that 
	\begin{align*}
	n\sum\limits_{k\geq 1}\lambda_k[\EE _{P}\varphi_k(X)]^2>q_{w,1-\alpha}.
	\end{align*}
Then
	\begin{align*}
	P\left\{n\gamma^2(\hat{P}_n,P_0)<q_{w,1-\alpha}\right\}\leq \frac{\EE _P\Big\{n\sum\limits_{k\geq 1}\lambda_k\Big[\frac{1}{n}\sum\limits_{i=1}^{n}\varphi_k(X_i)-\EE _P\varphi_k(X)\Big]^2\Big\}}{\Bigg\{\sqrt{n\sum\limits_{k\geq 1}\lambda_k[\EE _{P}\varphi_k(X)]^2}-\sqrt{q_{w,1-\alpha}}\Bigg\}^2}.
	\end{align*}
Observe that for any $P\in\mathcal{P}(\Delta_n,0)$,
	\begin{align*}
	\EE _P\Big\{n\sum\limits_{k\geq 1}\lambda_k\Big[\frac{1}{n}\sum\limits_{i=1}^{n}\varphi_k(X_i)-\EE _P\varphi_k(X)\Big]^2\Big\}=&\sum\limits_{k\geq 1}\lambda_k\mathrm{Var}[\varphi_k(X)]\\
	\leq& \sum\limits_{k\geq 1}\lambda_k\EE _P\varphi_k^2(X)\\
	\leq& \Big(\sup\limits_{k\geq 1}\|\varphi_k\|_{\infty}\Big)^2\sum\limits_{k\geq 1}\lambda_k<\infty.
	\end{align*}
This implies that
	\begin{align*}
	\lim\limits_{n\rightarrow \infty}\beta(T_{\text{MMD}};\Delta_n,0)=&\lim\limits_{n\rightarrow \infty}\sup\limits_{P\in\mathcal{P}(\Delta_n,0)}P\left\{n\gamma^2(\hat{P}_n,P_0)<q_{w,1-\alpha}\right\}\\
	\leq&\lim_{n\rightarrow \infty}\frac{\sup\limits_{P\in\mathcal{P}(\Delta_n,0)}\EE _P\Big\{n\sum\limits_{k\geq 1}\lambda_k\Big[\frac{1}{n}\sum\limits_{i=1}^{n}\varphi_k(X_i)-\EE _P\varphi_k(X)\Big]^2\Big\}}{\inf\limits_{P\in\mathcal{P}(\Delta_n,0)}\Bigg\{\sqrt{n\sum\limits_{k\geq 1}\lambda_k[\EE _{P}\varphi_k(X)]^2}-\sqrt{q_{w,1-\alpha}}\Bigg\}^2}\\
	=&0,
	\end{align*}
provided that
\begin{align}
	\inf\limits_{P\in\mathcal{P}(\Delta_n,0)}n\sum\limits_{k\geq 1}\lambda_k[\EE _{P}\varphi_k(X)]^2\rightarrow \infty, \qquad {\rm as\ }n\to\infty.\label{conv1.1}
	\end{align}
	It now suffices to show that (\ref{conv1.1}) holds if $n\Delta_n^2\rightarrow \infty$ as $n\rightarrow \infty$.
	
	To this end, let $u={{d}P}/{{d}P_0}-1$ and
	$$a_k=\langle u,\varphi_k\rangle_{L_2(P_0)}=\EE _P\varphi_k(X)-\EE _{P_0}\varphi_k(X)=\EE _P(\varphi_k(X)).$$
	It is clear the that
	$$\sum\limits_{k\geq 1}\lambda_k^{-1}a_k^2=\|u\|_K^2,\qquad {\rm and}\qquad \sum\limits_{k\geq 1}a_k^2=\|u\|_{L_2(P_0)}^2=\chi^2(P,P_0).$$ 
	By the definition of $\mathcal{P}(\Delta_n,0)$,
	\begin{align*}
	\sup\limits_{P\in\mathcal{P}(\Delta_n,0)}\sum\limits_{k\geq 1}\lambda_k^{-1}a_k^2\leq M^2, \qquad {\rm and} \qquad \inf\limits_{P\in\mathcal{P}(\Delta_n,0)}\sum\limits_{k\geq 1}a_k^2\geq \Delta_n.
	\end{align*}
	Since $n\Delta_n^2\rightarrow \infty$ as $n\rightarrow \infty$, we get
	\begin{align*}
	\inf\limits_{P\in\mathcal{P}(\Delta_n,0)}n\sum\limits_{k\geq 1}\lambda_k[\EE _{P}\varphi_k(X)]^2=&\inf\limits_{P\in\mathcal{P}(\Delta_n,0)}n\sum\limits_{k\geq 1}\lambda_ka_k^2\\
	\geq& \inf\limits_{P\in\mathcal{P}(\Delta_n,0)}n\frac{\Big(\sum\limits_{k\geq 1}a_k^2\Big)^2}{\sum\limits_{k\geq 1}\lambda_k^{-1}a_k^2}\\
	\geq& \frac{n\Delta_n^2}{M^2}\rightarrow \infty
	\end{align*}
	as $n\rightarrow \infty$.
	
\paragraph{Part (ii).} In proving the second part, we will make use of the following lemma that can be obtained by adapting the argument in \citet{gregory1977large}. It gives the limit distribution of V-statistic under $P_n$ such that $P_n$ converges to $P_0$ in the order $n^{-1/2}$.

	\begin{lemma}\label{lemG}
		Consider a sequence of probability measures $\{P_n: n\geq 1\}$ contiguous to $P_0$ satisfying $u_n={{d}P_n}/{{d}P_0}-1\rightarrow 0$ in $L^2(P_0)$. Suppose that for any fixed $k$,
		$$\lim\limits_{n\rightarrow \infty}\sqrt{n}\langle u_n,\varphi_k\rangle_{L^2(P_0)}=\tilde{a}_k, \qquad {\rm and}\qquad\lim\limits_{n\rightarrow \infty}\sum\limits_{k\geq 1}\lambda_k(\sqrt{n}\langle u_n,\varphi_k\rangle_{L^2(P_0)})^2=\sum\limits_{k\geq 1}\lambda_k\tilde{a}_k^2+\tilde{a}_0<\infty,$$
		for some sequence $\{\tilde{a}_k: k\ge 0\}$, then
		\begin{align*}
		\frac{1}{n}\sum\limits_{k\geq 1}\lambda_k\Big[\sum\limits_{i=1}^n\varphi_k(X_i)\Big]^2\stackrel{d}{\rightarrow}\sum_{k\geq 1}\lambda_k(Z_k+\tilde{a}_k)^2+\tilde{a}_0,
		\end{align*}
where $X_1,\ldots, X_n\stackrel{i.i.d}{\sim}P_n$, and $Z_k$s are independent standard normal random variables.
	\end{lemma}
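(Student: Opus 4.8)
The plan is to follow the strategy of \citet{gregory1977large} for degenerate $V$-statistics under local alternatives. Write $\xi_{nk}=n^{-1/2}\sum_{i=1}^n\varphi_k(X_i)$, so the statistic in question is $\sum_{k\ge 1}\lambda_k\xi_{nk}^2$, and set $a_{nk}=\langle u_n,\varphi_k\rangle_{L_2(P_0)}=\EE_{P_n}\varphi_k(X)$, using $\EE_{P_0}\varphi_k(X)=0$. The first task is a set of elementary moment computations under $P_n$: because $u_n\to 0$ in $L_2(P_0)$ and $\sup_k\|\varphi_k\|_\infty<\infty$, one gets $\EE_{P_n}\varphi_k^2(X)\le C_0$ for a constant $C_0$ independent of $k$ and $n$, $\mathrm{Var}_{P_n}(\varphi_k(X))\to 1$, and $\mathrm{Cov}_{P_n}(\varphi_k(X),\varphi_{k'}(X))\to 0$ for $k\ne k'$; combined with the hypothesis $\sqrt n\,a_{nk}\to\tilde a_k$ this gives $\EE_{P_n}\xi_{nk}=\sqrt n\,a_{nk}\to\tilde a_k$ and $\mathrm{Cov}_{P_n}(\xi_{nk},\xi_{nk'})\to\delta_{kk'}$. (Fatou's lemma applied to $\lambda_k(\sqrt n\,a_{nk})^2$ also shows $\tilde a_0\ge 0$.)

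Next I would establish the finite-dimensional limits. Fix $K$ and set $\eta_{nk}=\xi_{nk}-\sqrt n\,a_{nk}=n^{-1/2}\sum_i(\varphi_k(X_i)-a_{nk})$. Then $(\eta_{n1},\dots,\eta_{nK})$ is a normalized sum of i.i.d.\ (under $P_n$) mean-zero vectors that are uniformly bounded, so the Lindeberg condition holds trivially for all large $n$, while its covariance tends to $I_K$; the multivariate Lindeberg--Feller theorem gives $(\eta_{n1},\dots,\eta_{nK})\stackrel{d}{\rightarrow}(Z_1,\dots,Z_K)$, hence $(\xi_{n1},\dots,\xi_{nK})\stackrel{d}{\rightarrow}(Z_1+\tilde a_1,\dots,Z_K+\tilde a_K)$ after adding the deterministic shifts. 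By the continuous mapping theorem,
$$
\sum_{k=1}^K\lambda_k\xi_{nk}^2\ \stackrel{d}{\rightarrow}\ \sum_{k=1}^K\lambda_k(Z_k+\tilde a_k)^2\qquad(n\to\infty).
$$

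The delicate step --- and the main obstacle --- is controlling the tail $\sum_{k>K}\lambda_k\xi_{nk}^2$, because it does \emph{not} become negligible: it is precisely the ``escape of mass to $k=\infty$'' that generates the extra deterministic term $\tilde a_0$ in the limit. Expanding $\xi_{nk}^2=\eta_{nk}^2+2\eta_{nk}\sqrt n\,a_{nk}+n\,a_{nk}^2$, the random parts are harmless: $\EE_{P_n}\sum_{k>K}\lambda_k\eta_{nk}^2\le C_0\sum_{k>K}\lambda_k$ and, by Cauchy--Schwarz together with $\EE_{P_n}|\eta_{nk}|\le C_0^{1/2}$, $\EE_{P_n}\big|\sum_{k>K}\lambda_k\eta_{nk}\sqrt n\,a_{nk}\big|\le C_0^{1/2}\big(\sum_{k>K}\lambda_k\big)^{1/2}\big(\sum_{k}\lambda_k n\,a_{nk}^2\big)^{1/2}$, and both bounds tend to $0$ as $K\to\infty$ uniformly in $n$ since $\sum_k\lambda_k<\infty$ (here $s>1/2$) and $\sup_n\sum_k\lambda_k n\,a_{nk}^2<\infty$ by hypothesis. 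The deterministic part is the crux: subtracting the convergent head $\sum_{k\le K}\lambda_k n\,a_{nk}^2\to\sum_{k\le K}\lambda_k\tilde a_k^2$ from the convergent total $\sum_k\lambda_k n\,a_{nk}^2\to\sum_k\lambda_k\tilde a_k^2+\tilde a_0$ gives $\sum_{k>K}\lambda_k n\,a_{nk}^2\to\sum_{k>K}\lambda_k\tilde a_k^2+\tilde a_0$ as $n\to\infty$, which in turn tends to $\tilde a_0$ as $K\to\infty$.

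Finally I would assemble these pieces by a standard triangular-array approximation argument for convergence in distribution. Put $Y_n=\sum_k\lambda_k\xi_{nk}^2$ and $Y_n^{(K)}=\sum_{k\le K}\lambda_k\xi_{nk}^2+\sum_{k>K}\lambda_k n\,a_{nk}^2$. By the previous two paragraphs, $Y_n^{(K)}\stackrel{d}{\rightarrow}Y^{(K)}:=\sum_{k\le K}\lambda_k(Z_k+\tilde a_k)^2+\sum_{k>K}\lambda_k\tilde a_k^2+\tilde a_0$ as $n\to\infty$ (Slutsky, since the second summand is a convergent deterministic sequence); $Y^{(K)}\to\sum_{k\ge1}\lambda_k(Z_k+\tilde a_k)^2+\tilde a_0$ almost surely as $K\to\infty$, the series converging a.s.\ because $\EE\sum_k\lambda_k(Z_k+\tilde a_k)^2=\sum_k\lambda_k(1+\tilde a_k^2)<\infty$; and $\limsup_n\EE_{P_n}|Y_n-Y_n^{(K)}|\le C_0\sum_{k>K}\lambda_k+2C_0^{1/2}\big(\sum_{k>K}\lambda_k\big)^{1/2}\big(\sup_n\sum_k\lambda_k n\,a_{nk}^2\big)^{1/2}\to0$ as $K\to\infty$, so Markov's inequality gives $\lim_{K\to\infty}\limsup_n\PP_{P_n}(|Y_n-Y_n^{(K)}|>\varepsilon)=0$ for every $\varepsilon>0$. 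The standard approximation lemma for weak convergence then yields $Y_n\stackrel{d}{\rightarrow}\sum_{k\ge1}\lambda_k(Z_k+\tilde a_k)^2+\tilde a_0$ under $P_n$, which is the assertion of the lemma.
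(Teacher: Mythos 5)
Your proof is correct and is exactly the kind of adaptation of Gregory (1977) that the paper has in mind — the paper itself offers no proof, only the citation. The decomposition into finite-dimensional projections plus a tail, the Lindeberg–Feller CLT for the leading block, the isolation of the ``escaping mass'' $\tilde a_0$ from $\sum_{k>K}\lambda_k n a_{nk}^2$, and the closing Billingsley-type approximation lemma are precisely the moving parts of Gregory's argument in the contiguous-alternative case. One small remark: you never invoke the contiguity hypothesis — and indeed your argument does not need it, since you establish the limit law directly under $P_n$ rather than via Le Cam's third lemma from a joint limit under $P_0$; this is a mild simplification over the classical route but costs nothing here, as the uniform boundedness of $\varphi_k$ supplies all the moment control required.
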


	Write $L(k)=\lambda_k k^{2s}$. By assumption $(\ref{summable})$,
	$$0<\underline{L}:=\inf\limits_{k\geq 1}L(k)\leq\sup\limits_{k\geq 1}L(k):=\overline{L}<\infty.$$ 
	Consider a sequence of $\{P_n: n\geq 1\}$ such that
	$${{d}P_n}/{{d}P_0}-1={C_1\sqrt{\lambda_{k_n}}}[L(k_n)]^{-1}\varphi_{k_n},$$ where $C_1$ is a positive constant and $k_n=\lfloor{C_2n^\frac{1}{4s}}\rfloor$ for some positive constant $C_2$. Both $C_1$ and $C_2$ will be determined later. Since $\sup\limits_{k\geq 1}\|\varphi_k\|_{\infty}<\infty$ and $\lim\limits_{k\rightarrow \infty}\lambda_k=0$, there exists $N_0>0$ such that $P_n$'s are well-defined probability measures for any $n\geq N_0$.
	
	Note that \begin{align*}
	\|u_n\|_K^2=\frac{C_1^2}{L^2(k_n)}\leq \underline{L}^{-2}C_1^2
	\end{align*} and 
	\begin{align*}
	\|u_n\|_{L^2(P_0)}^2=\frac{C_1^2\lambda_{k_n}}{L^2(k_n)}=\frac{C_1^2}{L(k_n)}k_n^{-2s}\geq \overline{L}^{-1}{C_1^2}k_n^{-2s}\sim \overline{L}^{-1}{C_1^2}C_2^{-2s}n^{-1/2},
	\end{align*}
	where $A_n\sim B_n$ means that $\lim\limits_{n\rightarrow \infty}{A_n}/{B_n}=1$.
	Thus, by choosing $C_1$ sufficiently small and $c_0=\frac{1}{2}\overline{L}^{-1}{C_1^2}C_2^{-2s}$, we ensure that $P_n\in \mathcal{P}(c_0n^{-1/2},0)$ for sufficiently large $n$.
	

	To apply Lemma $\ref{lemG}$, we note that
	$$\lim\limits_{n\rightarrow \infty}\|u_n\|_{L^2(P_0)}^2=\lim\limits_{n\rightarrow \infty}\frac{C_1^2\lambda_{k_n}}{L^2(k_n)}=0.$$ 
	In addition, for any fixed $k$,
	$$\tilde{a}_{n,k}=\sqrt{n}\langle u_n,\varphi_k\rangle_{L^2(P_0)}=0$$
	 for sufficiently large $n$, and 
	 $$\sum\limits_{k\geq 1}\lambda_k\tilde{a}_{n,k}^2=\frac{nC_1^2\lambda_{k_n}^2}{L^2(k_n)}=nC_1^2k_n^{-4s}\rightarrow C_1^2C_2^{-4s}$$ 
	 as $n\rightarrow \infty$. Thus, Lemma \ref{lemG} implies that
	\begin{align*}
	n\gamma(\hat{P}_n,P_0)\stackrel{d}{\rightarrow}\sum_{k\geq 1}\lambda_kZ_k^2+C_1^2C_2^{-4s}.
	\end{align*}
	
	Now take $C_2=\left({2C_1^2}/{q_{w,1-\alpha}}\right) ^{{1}/{4s}}$ so that $C_1^2C_2^{-4s}=\frac{1}{2}q_{w,1-\alpha}$. Then
	\begin{align*}
	\varliminf\limits_{n\rightarrow \infty}\beta(T_{\text{MMD}};c_0n^{-1/2},0)\geq& \lim_{n\rightarrow \infty}P_n(n\gamma(\hat{P}_n,P_0)< q_{w,1-\alpha})\\
	=&P\Big(\sum_{k\geq 1}\lambda_kZ_k^2< \frac{1}{2}q_{w,1-\alpha}\Big)>0,
	\end{align*}
	which concludes the proof.
\end{proof}
\vskip 25pt

\begin{proof}[Proof of Theorem \ref{asympm3d}]
	Let $\tilde{K}_n(\cdot,\cdot):=\tilde{K}_{\varrho_n}(\cdot,\cdot)$. Note that
	\begin{align*}
	n\hat{\eta}_{\varrho_n}^2(P,P_0)=&\frac{1}{n}\sum\limits_{k\geq 1}\frac{\lambda_k}{\lambda_k+\varrho_n^2}\Big[\sum\limits_{i=1}^n\varphi_k(X_i)\Big]^2\\=&\frac{1}{n}\sum\limits_{i,j=1}^n\sum\limits_{k\geq 1}\frac{\lambda_k}{\lambda_k+\varrho_n^2}\varphi_k(X_i)\varphi_k(X_j)\\=&\frac{1}{n}\sum\limits_{i,j=1}^n\tilde{K}_n(X_i,X_j).
	\end{align*}
	Thus
	\begin{align*}
	v_n^{-1/2}[n\hat{\eta}_{\varrho_n}^2(P,P_0)-A_n]=2(n^2v_n)^{-1/2}\sum\limits_{j=2}^n\sum\limits_{i=1}^{j-1}\tilde{K}_n(X_i,X_j).
	\end{align*}
	Let $\zeta_{nj}=\sum\limits_{i=1}^{j-1}\tilde{K}_n(X_i,X_j)$. Consider a filtration $\{\mathcal{F}_j: j\geq 1\}$ where $\mathcal{F}_j=\sigma\{X_j:1\leq i\leq j\}$. Due to the assumption that $K$ is degenerate, we have $\EE \varphi_k(X)=0$ for any $k\geq 1$, which implies that
	\begin{align*}
	\EE (\zeta_{nj}|\mathcal{F}_{j-1})=\sum\limits_{i=1}^{j-1}\EE [\tilde{K}_n(X_i,X_j)|\mathcal{F}_{j-1}]=\sum\limits_{i=1}^{j-1}\EE [\tilde{K}_n(X_i,X_j)|X_i]=0,
	\end{align*}
	for any $j\geq 2$. 
	
	Write 
	\begin{align*}
	U_{nm}=\begin{cases}
	0&m=1 \\ 
	\sum\limits_{j=2}^m\zeta_{nj}& m\geq 2
	\end{cases}.
	\end{align*}
	Then for any fixed $n$, $\{U_{nm}\}_{m\geq 1}$ is a martingale with respect to $\{\mathcal{F}_m: m\geq 1\}$ and 
	\begin{align*}
	v_n^{-1/2}[n\hat{\eta}_{\varrho_n}^2(P,P_0)-A_n]=2(n^2v_n)^{-1/2}U_{nn}.
	\end{align*}
	
	We now apply martingale central limit theorem to $U_{nn}$. Following the argument from \citet{hall1984central}, it can be shown that
	\begin{align}
	\Big[\frac{1}{2}n^2\EE \tilde{K}_n^2(X,X')\Big]^{-1/2}U_{nn}\stackrel{d}{\rightarrow}N(0,1),\label{con2}
	\end{align}
	provided that
	\begin{align}
	&[\EE G_n^2(X,X')+n^{-1}\EE \tilde{K}_n^2(X,X')\tilde{K}_n^2(X,X^{''})+n^{-2}\EE \tilde{K}_n^4(X,X')]/[\EE {\tilde{K}_n^2(X,X')}]^2\rightarrow 0,\label{con3}
	\end{align}
	as $n\rightarrow \infty$, where $G_n(x,x')=\EE \tilde{K}_n(X,x)\tilde{K}_n(X,x')$.
	Since
	$$\EE \tilde{K}_n^2(X,X')=\sum\limits_{k\geq 1}\Big(\frac{\lambda_k}{\lambda_k+\varrho_n^2}\Big)^2=v_n,$$
	(\ref{con2}) implies that
	\begin{align*}
	v_n^{-1/2}[n\hat{\eta}_{\varrho_n}^2(P,P_0)-A_n]=\sqrt{2}\cdot\Big(\frac{1}{2}n^2\EE \tilde{K}_n^2(X,X')\Big)^{-1/2}U_{nn}\stackrel{d}{\rightarrow}N(0,2).
	\end{align*}
	It therefore suffices to verify (\ref{con3}).
		
	Note that
	\begin{align*}
	\EE \tilde{K}_n^2(X,X')=\sum\limits_{k\geq 1}\Big(\frac{\lambda_k}{\lambda_k+\varrho_n^2}\Big)^2
	\geq &\sum\limits_{\lambda_k\geq \varrho_n^2}\frac{1}{4}+\frac{1}{4\varrho_n^4}\sum\limits_{\lambda_k<\varrho_n^2}\lambda_k^2\\
	=&\frac{1}{4}|\{k:\lambda_k\geq \varrho_n^2\}|+\frac{1}{4\varrho_n^4}\sum\limits_{\lambda_k<\varrho_n^2}\lambda_k^2\asymp \varrho_n^{-1/s},
	\end{align*}
	where the last step holds by considering that $\lambda_k\asymp k^{-2s}$. 	Hereafter, we shall write $a_n\asymp b_n$ if $0<\varliminf\limits_{n\rightarrow \infty}a_n/b_n\leq \varlimsup\limits_{n\rightarrow \infty}a_n/b_n <\infty$, for two positive sequences $\{a_n\}$ and $\{b_n\}$.
Similarly, 
	\begin{align*}
	\EE G_n^2(X,X')=\sum\limits_{k\geq 1}\Big(\frac{\lambda_k}{\lambda_k+\varrho_n^2}\Big)^4\leq |\{k:\lambda_k\geq\varrho_n^2\}|+\varrho_n^{-8}\sum\limits_{\lambda_k<\varrho_n^2}\lambda_k^4\asymp\varrho_n^{-1/s},
	\end{align*}
	and
	\begin{align*}
	\EE \tilde{K}_n^2(X,X')\tilde{K}_n^2(X,X'')=&\EE \Big\{\sum\limits_{k\geq 1}\Big(\frac{\lambda_k}{\lambda_k+\varrho_n^2}\Big)^2\varphi_k^2(X)\Big\}^2\\\leq &\left(\sup_{k\geq 1}\|\varphi_k\|_{\infty}\right)^4\Big\{\sum\limits_{k\geq 1}\Big(\frac{\lambda_k}{\lambda_k+\varrho_n^2}\Big)^2\Big\}^2\asymp \varrho_n^{-2/s}.
	\end{align*}
	Thus there exists a positive constant $C_3$ such that
	\begin{align}
	\EE G_n^2(X,X')/[\EE \tilde{K}_n^2(X,X')]^2\leq C_3\varrho_n^{1/s}\rightarrow 0,\label{conv1}
	\end{align}
	and
	\begin{align}
	n^{-1}\EE \tilde{K}_n^2(X,X')\tilde{K}_n^2(X,X'')/[\EE {\tilde{K}_n^2(X,X')}]^2\leq C_3n^{-1}\rightarrow \infty,\label{conv2}
	\end{align}
	as $n\rightarrow \infty$. On the other hand,
	\begin{align*}
	\EE \tilde{K}_n^4(X,X')\leq \|\tilde{K}_n\|^2_{\infty}\EE \tilde{K}_n^2(X,X'),
	\end{align*}
	where
	\begin{align*}
	\|\tilde{K}_n\|_{\infty}=\sup\limits_{x}\Bigg\{\sum\limits_{k\geq 1}\frac{\lambda_k}{\lambda_k+\varrho_n^2}\varphi_k^2(x)\Bigg\}\leq\Big(\sup\limits_{k\geq 1}\|\varphi_k\|_{\infty}\Big)^2\sum\limits_{k\geq 1}\frac{\lambda_k}{\lambda_k+\varrho_n^2}\asymp \varrho_n^{-1/s}.
	\end{align*}
	This implies that for some positive constant $C_4$,
	\begin{align}
	n^{-2}\EE \tilde{K}_n^4(X,X')\}/[\EE {\tilde{K}_n^2(X,X')}]^2\leq n^{-2}\|\tilde{K}_n\|^2_{\infty}/\EE {\tilde{K}_n^2(X,X')}\leq C_4(n^2\varrho_n^{1/s})^{-1}\rightarrow 0.\label{conv3}
	\end{align}
	as $n\rightarrow \infty$. Together, (\ref{conv1}), (\ref{conv2}) and (\ref{conv3}) ensure that condition (\ref{con3}) holds.
\end{proof}
\vskip 25pt

\begin{proof}[Proof of Theorem \ref{crm3d}]
	Note that
	\begin{align*}
	&n\eta_{\varrho_n}^2(\hat{P}_n,P_0)-\frac{1}{n}\sum_{i=1}^n\tilde{K}_n(X_i,X_j)\\=&\frac{1}{n}\sum_{k\geq 1}\frac{\lambda_k}{\lambda_k+\varrho_n^2}\sum_{\substack{1\leq i,j\leq n\\ i\neq j}}\varphi_k(X_i)\varphi_k(X_j)\\
	=&\frac{1}{n}\sum_{k\geq 1}\frac{\lambda_k}{\lambda_k+\varrho_n^2}\sum_{\substack{1\leq i,j\leq n\\ i\neq j}}[\varphi_k(X_i)-\EE _P\varphi_k(X)][\varphi_k(X_j)-\EE _P\varphi_k(X)]\\
	&+\frac{2(n-1)}{n}\sum_{k\geq 1}\frac{\lambda_k}{\lambda_k+\varrho_n^2}[\EE _P\varphi_k(X)]\sum_{1\leq i\leq n}[\varphi_k(X_i)-\EE _P\varphi_k(X)]\\
	&+\frac{n(n-1)}{n}\sum_{k\geq 1}\frac{\lambda_k}{\lambda_k+\varrho_n^2}[\EE _P\varphi_k(X)]^2\\
	:=& V_1+V_2+V_3.
	\end{align*}
Obviously, $\EE _PV_1V_2=0$. We first argue that the following three statements together implies the desired result:
\begin{eqnarray}
\label{as2}\lim\limits_{n\rightarrow \infty}\inf\limits_{P\in\mathcal{P}(\Delta_n,\theta)}v_n^{-1/2}V_3&=&\infty,\\
\label{as1a}\sup\limits_{P\in\mathcal{P}(\Delta_n,\theta)}(\EE _PV_1^2/V_3^2)&=&o(1),\\
\label{as1b}\sup\limits_{P\in\mathcal{P}(\Delta_n,\theta)}(\EE _PV_2^2/V_3^2)&=&o(1).
\end{eqnarray}

To see this, note that \eqref{as2} implies that
	\begin{align*}
	&\lim\limits_{n\rightarrow \infty}\inf\limits_{P\in\mathcal{P}(\Delta_n,\theta)}P(v_n^{-1/2}[n\hat{\eta}_{\varrho_n}^2(P,P_0)-A_n]\geq \sqrt{2}z_{1-\alpha})\\
	\geq&\lim_{n\rightarrow \infty}\inf_{P\in\mathcal{P}(\Delta_n,\theta)}P\Big(v_n^{-1/2}V_3\geq 2\sqrt{2}z_{1-\alpha}, V_1+V_2+V_3\geq\frac{1}{2}V_3\Big)\\
	=&\lim_{n\rightarrow \infty}\inf_{P\in\mathcal{P}(\Delta_n,\theta)}P\Big(V_1+V_2+V_3\geq\frac{1}{2}V_3\Big).
	\end{align*}
On the other hand, \eqref{as1a} and \eqref{as1b} imply that
	\begin{align*}
\lim_{n\rightarrow \infty}\inf_{P\in\mathcal{P}(\Delta_n,\theta)}P\Big(V_1+V_2+V_3\geq\frac{1}{2}V_3\Big)
	=&1-\lim_{n\rightarrow \infty}\sup_{P\in\mathcal{P}(\Delta_n,\theta)}P\Big(V_1+V_2+V_3<\frac{1}{2}V_3\Big)\\
	\geq&1-\lim_{n\rightarrow \infty}\sup_{P\in\mathcal{P}(\Delta_n,\theta)}\frac{\EE _P(V_1+V_2)^2}{(V_3/2)^2}=1.
	\end{align*}
This immediately suggests that $T_{\text{M}^3\text{d}}$ is consistent. We now show that \eqref{as2}-\eqref{as1b} indeed hold.

\paragraph{Verifying \eqref{as2}.}
We begin with \eqref{as2}. Since $v_n\asymp \varrho_n^{-1/s}$ and $V_3=(n-1)\eta_{\varrho_n}^2(P,P_0)$, (\ref{as2}) is equivalent to
	\begin{align*}
	\lim\limits_{n\rightarrow \infty}\inf\limits_{P\in\mathcal{P}(\Delta_n,\theta)}n\varrho_n^{\frac{1}{2s}}\eta_{\varrho_n}^2(P,P_0)=\infty. 
	\end{align*}
	
	For any $P\in\mathcal{P}(\Delta_n,\theta)$, let $u={{d}P}/{{d}P_0}-1$ and $a_k=\langle u,\varphi_k\rangle_{L_2(P_0)}=\EE _P\varphi_k(X)$. Based on the assumption that $K$ is universal, $u=\sum\limits_{k\geq 1}a_k\varphi_k$. We consider the case $\theta=0$ and $\theta>0$ separately.
	
\begin{enumerate}	
\item[(1)] First consider $\theta=0$. It is clear that
	\begin{align*}
	\eta_{\varrho_n}^2(P,P_0)=&\sum_{k\geq 1}a_k^2-\sum_{k\geq 1}\frac{\varrho_n^2}{\lambda_k+\varrho_n^2}a_k^2\\
	\geq&\|u\|_{L_2(P_0)}^2-\varrho_n^2\sum_{k\geq 1}\frac{1}{\lambda_k}a_k^2\\
	\geq&\|u\|_{L_2(P_0)}^2-\varrho_n^2M^2.
	\end{align*}
	Take $\varrho_n\le \sqrt{{\Delta_n}/(2M^2)}$ so that $\rho_n^2M^2\le\frac{1}{2}\Delta_n$. Then we have
	\begin{align*}
	\inf\limits_{P\in\mathcal{P}(\Delta_n,0)}\eta_{\varrho_n}^2(P,P_0)\geq\frac{1}{2}\inf\limits_{P\in\mathcal{P}(\Delta_n,0)}\|u\|_{L_2(P_0)}^2=\frac{1}{2}\Delta_n.
	\end{align*}
	
\item[(2)] Now consider the case when $\theta>0$. For $P\in\mathcal{P}(\Delta_n,\theta)$, $\forall$ $R>0$, $\exists$ $f_R\in\mathcal{H}(K)$ such that $\|u-f_R\|_{L_2(P_0)}\leq MR^{-1/\theta}$ and $\|f_R\|_{K}\leq R$. Let $b_k=\langle f_R,\varphi_k\rangle _{L_2(P_0)}$.
\begin{align*}
\eta_{\varrho_n}^2(P,P_0)=&\sum_{k\geq 1}a_k^2-\sum_{k\geq 1}\frac{\varrho_n^2}{\lambda_k+\varrho_n^2}a_k^2\\
\geq&\|u\|_{L_2(P_0)}^2-2\sum_{k\geq 1}\frac{\varrho_n^2}{\lambda_k+\varrho_n^2}(a_k-b_k)^2-2\sum_{k\geq 1}\frac{\varrho_n^2}{\lambda_k+\varrho_n^2}b_k^2\\
\geq&\|u\|_{L_2(P_0)}^2-2\sum_{k\geq 1}(a_k-b_k)^2-2\varrho_n^2\sum_{k\geq 1}\frac{1}{\lambda_k}b_k^2\\
=&\|u\|_{L_2(P_0)}^2-2\|u-f_R\|_{L_2(P_0)}^2-2\varrho_n^2\|f_R\|_{K}^2.
\end{align*}
	Taking $R=({2M}/{\|u\|_{L_2(P_0)}})^{\theta}$ yields that
	\begin{align*}
	\eta_{\varrho_n}^2(P,P_0)\geq \|u\|_{L_2(P_0)}^2-2M^2R^{-2/\theta}-2\varrho_n^2R^2=\frac{1}{2}\|u\|_{L_2(P_0)}^2-2\varrho_n^2R^2.
	\end{align*}
	Now by choosing
	$$
	\varrho_n	\le\frac{1}{2\sqrt{2}}(2M)^{-\theta}\Delta_n^{\frac{1+\theta}{2}},
	$$
	we can ensure that
	$$2\varrho_n^2R^2\leq \frac{1}{4}\|u\|_{L_2(P_0)}^2.$$ So that
%
%
	\begin{align*}
	\inf\limits_{P\in\mathcal{P}(\Delta_n,\theta)}\eta_{\varrho_n}^2(P,P_0)\geq\inf\limits_{P\in\mathcal{P}(\Delta_n,\theta)}\frac{1}{4}\|u\|_{L_2(P_0)}^2\geq\frac{1}{4}\Delta_n.
	\end{align*} 
	
\end{enumerate}

	In both cases, with $\varrho_n\leq C\Delta_n^{\frac{\theta+1}{2}}$ for a sufficiently small $C=C(M)>0$,  $\lim\limits_{n\rightarrow\infty}\varrho_n^{\frac{1}{2s}}n\Delta_n=\infty$ suffices to ensure (\ref{as2}) holds. Under the condition that $\lim\limits_{n\rightarrow \infty}\Delta_nn^{\frac{4s}{4s+\theta+1}}=\infty$, 
$$
\varrho_n=cn^{-\frac{2s(\theta+1)}{4s+\theta+1}}\leq C\Delta_n^{\frac{\theta+1}{2}}
$$
for sufficiently large $n$ and $\lim\limits_{n\rightarrow\infty}\varrho_n^{\frac{1}{2s}}n\Delta_n=\infty$ holds as well.

	
\paragraph{Verifying \eqref{as1a}.}
Rewrite $V_1$ as
	\begin{align*}
	V_1=&\frac{1}{n}\sum_{\substack{1\leq i,j\leq n\\ i\neq j}}\sum_{k\geq 1}\frac{\lambda_k}{\lambda_k+\varrho_n^2}[\varphi_k(X_i)-\EE _P\varphi_k(X)][\varphi_k(X_j)-\EE _P\varphi_k(X)]\\
	:=& \frac{1}{n}\sum_{\substack{1\leq i,j\leq n\\ i\neq j}}F_n(X_i,X_j).
	\end{align*}
	Then
	\begin{align*}
	\EE _PV_1^2=&\frac{1}{n^2}\sum_{\substack{i\neq j\\ i'\neq j'}}\EE _PF_n(X_i,X_j)F_n(X_{i'},X_{j'})\\
	=&\frac{2n(n-1)}{n^2}\EE _PF_n^2(X,X')\\
	\leq& 2\EE _PF_n^2(X,X').
	\end{align*}
	
	Recall that, for any two random variables $Y_1$, $Y_2$ such that $\EE Y_1^2<\infty$,
	\begin{align*}
	\EE [Y_1-\EE (Y_1|Y_2)]^2=\EE Y_1^2-\EE [\EE (Y_1|Y_2)^2]\leq \EE Y_1^2.
	\end{align*}
	Therefore,
	\begin{align*}
	\EE _PF_n^2(X,X')\leq \EE _P\{\tilde{K}_n(X,X')-\EE _P[\tilde{K}_n(X,X')|X]\}^2\leq \EE _{P}\tilde{K}_n^2(X,X').
	\end{align*}
	Thus, to prove \eqref{as1a}, it suffices to show that
	\begin{align*}
	\lim\limits_{n\rightarrow \infty}\sup\limits_{P\in\mathcal{P}(\Delta_n,\theta)}\EE _{P}\tilde{K}_n^2(X,X')/V_3^2=0.
	\end{align*}

	For any $g\in L_2(P_0)$ and positive definite kernel $G(\cdot,\cdot)$ such that $\EE _{P_0}G^2(X,X')<\infty$, let
	\begin{align*}
	\|g\|_G:= \sqrt{\EE _{P_0}[g(X)g(X')G(X,X')]}.
	\end{align*}
	By the positive definiteness of $G(\cdot,\cdot)$, triangular inequality holds for $\|\cdot\|_G$, \ie, for any $g_1$, $g_2\in L_2(P_0)$, 
	\begin{align*}
	|\|g_1\|_G-\|g_2\|_G|\leq \|g_1-g_2\|_G,
	\end{align*}
	which implies that
	\begin{align}
	\Bigg|\sqrt{\EE _P\tilde{K}_n^2(X,X')}-\sqrt{\EE _{P_0}\tilde{K}_n^2(X,X')}\Bigg|\leq \sqrt{\EE _{P_0}[u(X)u(X')\tilde{K}_n^2(X,X')]}.\label{tri}
	\end{align}
	We now appeal to the following lemma to bound the right hand side of \eqref{tri}:
	\begin{lemma}\label{schur2}
	Let $G$ be a Mercer kernel defined over $\Xcal\times \Xcal$ with eigenvalue-eigenfunction pairs $\{(\mu_k, \varphi_k): k\ge 1\}$ with respect to $L_2(P)$ such that $\mu_1\ge \mu_2\ge\cdots$. If $G$ is a trace kernel in that $\EE G(X,X)<\infty$, then for any $g\in L_2(P)$
	$$
	\EE _{P}[g(X)g(X')G^2(X,X')]\le \mu_1\left(\sum_{k\ge 1}\mu_k\right)\left(\sup_{k\ge 1}\|\varphi_k\|_\infty\right)^2\|g\|_{L_2(P)}^2.
	$$
	\end{lemma}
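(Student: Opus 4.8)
The plan is to expand the pointwise square $G^2$ through Mercer's theorem and then reduce everything to Bessel's inequality. Under the trace condition $\EE G(X,X)<\infty$ together with $\sup_{k\ge1}\|\varphi_k\|_\infty<\infty$, the Mercer expansion $G(x,x')=\sum_{k\ge1}\mu_k\varphi_k(x)\varphi_k(x')$ holds (uniformly), so that $G^2(x,x')=\sum_{k,l\ge1}\mu_k\mu_l\varphi_k(x)\varphi_l(x)\varphi_k(x')\varphi_l(x')$. Writing $c_{kl}:=\EE_P[g(X)\varphi_k(X)\varphi_l(X)]=\langle g\varphi_l,\varphi_k\rangle_{L_2(P)}$, which is finite because $g\in L_2(P)$ and the eigenfunctions are bounded, I would first establish the identity
$$
\EE_P[g(X)g(X')G^2(X,X')]=\sum_{k,l\ge1}\mu_k\mu_l\,c_{kl}^2 .
$$

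Granting this identity, the bound is immediate. Since $\mu_k\le\mu_1$ for every $k$, the right-hand side is at most $\mu_1\sum_{l\ge1}\mu_l\big(\sum_{k\ge1}c_{kl}^2\big)$. For each fixed $l$, the numbers $\{c_{kl}\}_{k\ge1}$ are the Fourier coefficients of $g\varphi_l\in L_2(P)$ against the orthonormal system $\{\varphi_k\}_{k\ge1}$, so Bessel's inequality gives $\sum_{k\ge1}c_{kl}^2\le\|g\varphi_l\|_{L_2(P)}^2\le\|\varphi_l\|_\infty^2\|g\|_{L_2(P)}^2\le(\sup_{k\ge1}\|\varphi_k\|_\infty)^2\|g\|_{L_2(P)}^2$. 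Summing over $l$ introduces the factor $\sum_{l\ge1}\mu_l$ and yields exactly the claimed inequality.

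The only point requiring care is the interchange of expectation and the double sum in the displayed identity. Because the summands carry the sign of $g(X)g(X')$, Tonelli does not apply directly, but Fubini does once absolute integrability is checked: bounding $|\varphi_k|\le B:=\sup_{k\ge1}\|\varphi_k\|_\infty$ and $\mu_k\le\mu_1$ gives $\sum_{k\ge1}\mu_k|\varphi_k(X)\varphi_k(X')|\le B^2\sum_{k\ge1}\mu_k<\infty$, hence $\sum_{k,l}\mu_k\mu_l|\varphi_k(X)\varphi_l(X)\varphi_k(X')\varphi_l(X')|\le B^4(\sum_{k\ge1}\mu_k)^2$, and therefore $\EE_P[|g(X)||g(X')|\,G^2(X,X')]\le B^4(\sum_{k\ge1}\mu_k)^2(\EE_P|g(X)|)^2\le B^4(\sum_{k\ge1}\mu_k)^2\|g\|_{L_2(P)}^2<\infty$, where the last step uses Cauchy--Schwarz and that $P$ is a probability measure. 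This legitimizes the term-by-term evaluation. I expect this Fubini bookkeeping — rather than any genuine analytic difficulty — to be the main thing to get right; once past it, the proof is the two-line Bessel computation above.
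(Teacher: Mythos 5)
Your proof is correct and takes essentially the same route as the paper's: expand $G^2$ pointwise via Mercer, evaluate the double integral termwise to get $\sum_{k,l}\mu_k\mu_l\bigl(\int g\varphi_k\varphi_l\,dP\bigr)^2$, pull out one factor $\mu_1$, and apply Bessel's inequality followed by the uniform bound on the $\varphi_k$. The only substantive difference is that you explicitly justify the interchange of sum and integral via Fubini, a detail the paper leaves implicit.
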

	By Lemma \ref{schur2}, we get
	\begin{align*}
	\EE _{P_0}[u(X)u(X')\tilde{K}_n^2(X,X')]\leq &C_5\left(\sum\limits_k\frac{\lambda_k}{\lambda_k+\varrho_n^2}\right)\|u\|_{L_2(P_0)}^2\asymp \varrho_n^{-1/s}\|u\|_{L_2(P_0)}^2.
	\end{align*}
	Recall that
	$$\EE _{P_0}\tilde{K}^2_n(X,X')=\sum\limits_k\left(\frac{\lambda_k}{\lambda_k+\varrho_n^2}\right)^2\asymp \varrho_n^{-1/s}.$$ In the light of (\ref{tri}), they imply that
	\begin{align*}
	\EE _{P}\tilde{K}^2_n(X,X')\leq 2\{\EE _{P_0}\tilde{K}^2_n(X,X')+\EE _{P_0}[u(X)u(X')\tilde{K}_n^2(X,X')]\}\leq C_6\varrho_n^{-1/s}[1+\|u\|_{L_2(P_0)}^2].
	\end{align*}
On the other hand, it is not hard to verify that with our choice of $\varrho_n$,
	\begin{align*}
	\frac{1}{4}\|u\|_{L_2(P_0)}^2\leq \eta_{\varrho_n}^2(P,P_0)\leq \|u\|_{L_2(P_0)}^2,
	\end{align*}
	for any $P\in\mathcal{P}(\Delta_n,\theta)$. Thus
	\begin{align*}
	&\lim\limits_{n\rightarrow \infty}\sup\limits_{P\in\mathcal{P}(\Delta_n,\theta)}\EE _P\tilde{K}_n^2(X,X')/V_3^2\\\leq &16C_6\Big\{\Big(\lim\limits_{n\rightarrow \infty}\inf\limits_{P\in\mathcal{P}(\Delta_n,\theta)}\varrho_n^{1/s}n^2\|u\|_{L_2(P_0)}^4\Big)^{-1}+\Big(\lim\limits_{n\rightarrow \infty}\inf\limits_{P\in\mathcal{P}(\Delta_n,\theta)}\varrho_n^{1/s}n^2\|u\|_{L_2(P_0)}^2\Big)^{-1}\Big\}=0
	\end{align*}
	provided that $\lim\limits_{n\rightarrow \infty}n^{\frac{4s}{4s+\theta+1}}\Delta_n=\infty$. This immediately implies \eqref{as1a}.
	
\paragraph{Verifying \eqref{as1b}.} Observe that
	\begin{align*}
	\EE _PV_2^2\leq &4n\EE _P\Big\{\sum\limits_{k\geq 1}\frac{\lambda_k}{\lambda_k+\varrho_n^2}[\EE _P\varphi_k(X)][\varphi_k(X)-\EE _P\varphi_k(X)]\Big\}^2\\
	\leq &4n\EE _P\Big\{\sum\limits_{k\geq 1}\frac{\lambda_k}{\lambda_k+\varrho_n^2}[\EE _P\varphi_k(X)][\varphi_k(X)]\Big\}^2\\
          =& 4n\EE _{P_0}\left([1+u(X)]\Big\{\sum\limits_{k\geq 1}\frac{\lambda_k}{\lambda_k+\varrho_n^2}[\EE _P\varphi_k(X)][\varphi_k(X)]\Big\}^2\right).
	\end{align*}
	
It is clear that
	\begin{align*}
	&\EE _{P_0}\Big\{\sum\limits_{k\geq 1}\frac{\lambda_k}{\lambda_k+\varrho_n^2}[\EE _P\varphi_k(X)][\varphi_k(X)]\Big\}^2\\=&\sum\limits_{k,k'\geq 1}\frac{\lambda_k}{\lambda_k+\varrho_n^2}\frac{\lambda_{k'}}{\lambda_{k'}+\varrho_n^2}\EE _P\varphi_k(X)\EE _P\varphi_{k'}(X)\EE _{P_0}[\varphi_k(X)\varphi_{k'}(X)]\\=&\sum\limits_{k\geq 1}\Big(\frac{\lambda_k}{\lambda_k+\varrho_n^2}\Big)^2[\EE _P\varphi_k(X)]^2\leq \eta_{\varrho_n}^2(P,P_0).
	\end{align*}
On the other hand,
\begin{align*}
	&\EE _{P_0}\left(u(X)\Big\{\sum\limits_{k\geq 1}\frac{\lambda_k}{\lambda_k+\varrho_n^2}[\EE _P\varphi_k(X)][\varphi_k(X)]\Big\}^2\right)\\
	\leq&\sqrt{\EE _{P_0}\left(u^2(X)\Big\{\sum\limits_{k\geq 1}\frac{\lambda_k}{\lambda_k+\varrho_n^2}[\EE _P\varphi_k(X)][\varphi_k(X)]\Big\}^2\right)}\times\\
	&\times\sqrt{\EE _{P_0}\Big\{\sum\limits_{k\geq 1}\frac{\lambda_k}{\lambda_k+\varrho_n^2}[\EE _P\varphi_k(X)][\varphi_k(X)]\Big\}^2}\\
	\leq&\|u\|_{L_2(P_0)}\sup\limits_{x}\Big|\sum\limits_{k\geq 1}\frac{\lambda_k}{\lambda_k+\varrho_n^2}[\EE _P\varphi_k(X)][\varphi_k(x)]\Big|\cdot\eta_{\varrho_n}(P,P_0)\\
	\leq&\left(\sup\limits_{k}\|\varphi_k\|_{\infty}\right)\|u\|_{L_2(P_0)}\sum\limits_{k\geq 1}\frac{\lambda_k}{\lambda_k+\varrho_n^2}|\EE _P\varphi_k(X)|\cdot\eta_{\varrho_n}(P,P_0)\\
	\leq&\left(\sup\limits_{k}\|\varphi_k\|_{\infty}\right)\|u\|_{L_2(P_0)}\sqrt{\sum\limits_{k\geq 1}\frac{\lambda_k}{\lambda_k+\varrho_n^2}}\sqrt{\sum\limits_{k\geq 1}\frac{\lambda_k}{\lambda_k+\varrho_n^2}[\EE _P\varphi_k(X)]^2}\cdot\eta_{\varrho_n}(P,P_0)\\\leq &C_7\|u\|_{L_2(P_0)}\varrho_n^{-\frac{1}{2s}}\eta_{\varrho_n}^2(P,P_0).
	\end{align*}
Together, they imply that
\begin{align*}
	&\lim\limits_{n\rightarrow \infty}\sup\limits_{P\in\mathcal{P}(\Delta_n,\theta)}\EE _PV_1^2/V_3^2\\\leq &4\max\{1,C_7\}\Bigg\{\Big(\lim\limits_{n\rightarrow \infty}\inf\limits_{P\in\mathcal{P}(\Delta_n,\theta)}n\eta_{\varrho_n}^2(P,P_0)\Big)^{-1}+\lim\limits_{n\rightarrow \infty}\sup\limits_{P\in\mathcal{P}(\Delta_n,\theta)}\Bigg(\frac{\|u\|_{L_2(P_0)}}{\varrho_n^{\frac{1}{2s}}n\eta_{\varrho_n}^2(P,P_0)}\Bigg)\Bigg\}=0,
	\end{align*}
under the assumption that $\lim\limits_{n\rightarrow \infty}n^{\frac{4s}{4s+\theta+1}}\Delta_n=\infty$.
\end{proof}
\vskip 25pt

\begin{proof}[Proof of Theorem \ref{cr}]
Without loss of generality, assume $M=1$ and $\Delta_n=cn^{-\frac{4s}{4s+\theta+1}}$ for some $c>0$. The main idea behind our proof is to carefully construct a finite subset of $\mathcal{P}(\Delta_n,\theta)\setminus \{P_0\}$, and show that one can not reliably distinguish $P_0$ from an unknown instance from this subset based on a sample of $n$ observations. We shall consider the cases of $\theta=0$ and $\theta>0$ separately.

\paragraph{The case of $\theta=0$.} 
We first treat the case when $\theta=0$. Let $K_n=\lfloor{C_8\Delta_n^{-\frac{1}{2s}}}\rfloor$ for a sufficiently small constant $C_8>0$ and $a_n=\sqrt{{\Delta_n}/{K_n}}$. For any $\xi_n:=(\xi_{n1},\xi_{n2},\cdots,\xi_{nK_n})^\top\in \{\pm 1\}^{K_n}$, write
$$
u_{\xi_n} = a_n\sum_{k=1}^{K_n}\xi_{nk}\varphi_k.
$$
It is clear that
$$\|u_{n,\xi_n}\|_{L_2(P_0)}^2=K_na_n^2=\Delta_n$$
and	
$$
\|u_{n,\xi_n}\|_\infty\le a_nK_n\left(\sup\limits_{k}\|\varphi_k\|_{\infty}\right)\asymp \Delta_n^{2s-1\over 4s}\rightarrow 0.
$$
By taking $C_8$ small enough, we can also ensure
$$\|u_{\xi_n}\|_K^2=a_n^2\sum\limits_{k=1}^{K_n}\lambda_k^{-1}\leq 1,$$

Therefore, there exists a probability measure $P_{\xi_n}\in\mathcal{P}(\Delta_n, 0)$ such that $dP/dP_0=1+u_{n,\xi_n}$. Following a standard argument for minimax lower bound, it suffices to show that
\begin{equation}
\label{eq:min}
\varlimsup\limits_{n\rightarrow \infty}\EE_{P_0} \left({1\over 2^{K_n}}\sum_{\xi_n\in \{\pm 1\}^{K_n}}\left\{\prod\limits_{i=1}^{n}[1+u_{\xi_n}(X_i)]\right\}\right)^2<\infty.
\end{equation}
See, \eg, \cite{ingster2003nptest,tsybakov2008introduction}.

Note that
	\begin{align*}
&\EE_{P_0} \left({1\over 2^{K_n}}\sum_{\xi_n\in \{\pm 1\}^{K_n}}\left\{\prod\limits_{i=1}^{n}[1+u_{\xi_n}(X_i)]\right\}\right)^2\\
=&\EE_{P_0} \left({1\over 2^{2K_n}}\sum_{\xi_n,\xi_n'\in \{\pm 1\}^{K_n}}\left\{\prod\limits_{i=1}^{n}[1+u_{\xi_n}(X_i)]\right\}\left\{\prod\limits_{i=1}^{n}[1+u_{\xi_n'}(X_i)]\right\}\right)\\
	=&{1\over 2^{2K_n}}\sum_{\xi_n,\xi_n'\in \{\pm 1\}^{K_n}}\prod\limits_{i=1}^n\EE_{P_0} \Bigg
	\{[1+u_{\xi_n}(X_i)][1+u_{\xi_n'}(X_i)]\Bigg\}\\
	=&{1\over 2^{2K_n}}\sum_{\xi_n,\xi_n'\in \{\pm 1\}^{K_n}}\Big(1+a_{n}^2\sum\limits_{k=1}^{K_n}\xi_{nk}\xi_{nk}'\Big)^n\\
	\leq&{1\over 2^{2K_n}}\sum_{\xi_n,\xi_n'\in \{\pm 1\}^{K_n}}\exp\Big(na_n^2\sum\limits_{k=1}^{K_n}\xi_{n,k}\xi_{n,k}'\Big)\\
	=&\left\{\frac{\exp(na_{n}^2)+\exp(-na_{n}^2)}{2}\right\}^{K_n}\\
	\leq&\exp\Big
	\{K_n\Big(\frac{\exp(na_{n}^2)+\exp(-na_{n}^2)}{2}-1\Big)\Big\}.
	\end{align*}
An application of Taylor expansion shows that there exist $t_0>0$ and $C_9>0$ such that
	\begin{align*}
	{t^2}-C_9|t|^3\leq \exp(t)+\exp(-t)\leq{t^2}+C_9|t|^3
	\end{align*}
	for any $|t|\leq t_0$. With the particular choice of $K_n$, $a_n$, and the conditions on $\Delta_n$, this immediately implies \eqref{eq:min}.
	
\paragraph{The case of $\theta>0$.} The main idea is similar to before. To find a set of probability measures in $\mathcal{P}(\Delta_n, \theta)$, we appeal to the following lemma.

\begin{lemma}\label{char0}
Let $u=\sum\limits_{k}a_k\varphi_k$. If
\begin{align*}
\left(\sum\limits_{k=1}^K\frac{a_k^2}{\lambda_k}\right)^{2/\theta}\left(\sum\limits_{k\geq K}a_k^2\right)\leq M^2,
\end{align*}
then $u\in \mathcal{F}(\theta, M)$.
\end{lemma}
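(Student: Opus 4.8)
The plan is to verify the defining property of $\mathcal{F}(\theta,M)$ head-on: for every $R>0$ I will exhibit an approximant $f_R\in\mathcal{H}(K)$ with $\|f_R\|_K\le R$ and $\|u-f_R\|_{L_2(P_0)}\le MR^{-1/\theta}$, taking $f_R$ to be a finite truncation of the eigenexpansion $u=\sum_k a_k\varphi_k$. The only ingredients needed are the identities $\|\sum_k b_k\varphi_k\|_{L_2(P_0)}^2=\sum_k b_k^2$ and $\|\sum_k b_k\varphi_k\|_K^2=\sum_k\lambda_k^{-1}b_k^2$ (so any finite truncation of $u$ lies in $\mathcal{H}(K)$), together with the assumed inequality, which I read as holding for all $K\ge 1$.

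The crucial design choice — essentially the whole content of the argument — is to pick the truncation level from the $L_2(P_0)$-tail rather than from the RKHS partial sums. After first noting that the hypothesis, applied at the least index $k_0$ with $a_{k_0}\neq 0$ (the case $u\equiv 0$ being trivial), forces $\sum_k a_k^2<\infty$, I define $N=N(R)$ to be the smallest nonnegative integer with $\sum_{k>N}a_k^2\le M^2 R^{-2/\theta}$, and set $f_R=\sum_{k\le N}a_k\varphi_k$ (with $f_R=0$ when $N=0$). The approximation bound is then automatic: $\|u-f_R\|_{L_2(P_0)}^2=\sum_{k>N}a_k^2\le M^2R^{-2/\theta}$.

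It remains to control $\|f_R\|_K$, and this is exactly where the hypothesis enters. If $N=0$ there is nothing to prove. If $N\ge 1$, minimality of $N$ gives $\sum_{k\ge N}a_k^2>M^2R^{-2/\theta}$, so substituting $K=N$ into the assumed inequality yields
\[
\Big(\sum_{k=1}^{N}\tfrac{a_k^2}{\lambda_k}\Big)^{2/\theta}\le\frac{M^2}{\sum_{k\ge N}a_k^2}<R^{2/\theta},
\]
hence $\|f_R\|_K^2=\sum_{k=1}^N\lambda_k^{-1}a_k^2<R^2$. Since $R>0$ was arbitrary, this shows $u\in\mathcal{F}(\theta,M)$.

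I do not expect a genuine obstacle here: the proof is short once the truncation level is chosen correctly. The one point deserving care is precisely that choice — selecting $N$ so that the $L_2$ requirement holds by fiat, and then invoking the trade-off inequality in the ``one step before'' configuration $K=N$ to turn smallness of the tail past $N$ into smallness of $\|f_R\|_K$ — along with the routine handling of the degenerate cases $u\equiv 0$ and $N(R)=0$ and the preliminary observation that $\sum_k a_k^2<\infty$ so that $N(R)$ is well defined.
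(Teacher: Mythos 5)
Your proof is correct, and it takes a genuinely different route from the paper's. The paper chooses the truncation index from the RKHS partial sums: it picks $K=K(R)$ so that $\sum_{k\le K}a_k^2/\lambda_k\le R^2\le\sum_{k\le K+1}a_k^2/\lambda_k$, and then builds $f_R$ as the first $K$ terms of $u$ plus a carefully scaled fraction of the $(K+1)$-th eigenfunction, $a_{K+1}^\ast(R)\varphi_{K+1}$ with $a_{K+1}^\ast(R)=\sgn(a_{K+1})\sqrt{\lambda_{K+1}(R^2-\sum_{k\le K}a_k^2/\lambda_k)}$, so that $\|f_R\|_K^2=R^2$ exactly; the $L_2$ error is then bounded by the tail $\sum_{k\ge K+1}a_k^2$, and the hypothesis at index $K+1$ together with $R^2\le\sum_{k\le K+1}a_k^2/\lambda_k$ closes the argument. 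You instead choose the truncation level from the $L_2$ tail: $N$ is the smallest index with $\sum_{k>N}a_k^2\le M^2R^{-2/\theta}$, you take the plain truncation $f_R=\sum_{k\le N}a_k\varphi_k$, and minimality of $N$ (so $\sum_{k\ge N}a_k^2>M^2R^{-2/\theta}$) fed into the hypothesis at index $N$ yields $\|f_R\|_K^2<R^2$. The two constructions are dual — one makes the RKHS-norm constraint tight and deduces the $L_2$ bound, the other makes the $L_2$ bound hold by construction and deduces the RKHS-norm constraint — and they use the hypothesis at the ``boundary'' index from opposite sides. Your version avoids the fractional coefficient and the associated sign/square-root bookkeeping, so it is somewhat more elementary; it also cleanly sidesteps a minor delicacy in the paper's construction (the case $a_{K+1}=0$, where $\sgn(a_{K+1})=0$ and the claimed equality $\|f_R\|_K=R$ does not hold, though the inequality $\le R$ still does). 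Your preliminary observation that the hypothesis at the first nonzero index forces $\sum_k a_k^2<\infty$ is a nice touch that guarantees $N(R)$ is well defined and $u\in L_2(P_0)$.
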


Similar to before, we shall now take $K_n=\lfloor{C_{10}\Delta_n^{-\frac{\theta+1}{2s}}}\rfloor$ and $a_{n}=\sqrt{{\Delta_n}/{K_n}}$. By Lemma \ref{char0}, we can find $P_{\xi_n}\in\mathcal{P}(\Delta_n, \theta)$ such that $dP/dP_0=1+u_{n,\xi_n}$, for appropriately chosen $C_{10}$. Following the same argument as in the previous case, we can again verify \eqref{eq:min}.
\end{proof}
\vskip 25pt

\begin{proof}[Proof of Theorem \ref{crtm3d}]
	Without loss of generality, assume that $\Delta_n(\theta)= c_1({n}^{-1}{\sqrt{\log\log n}})^{\frac{4s}{4s+\theta+1}}$ for some constant $c_1>0$ to be determined later. 

\noindent{\bf Type I Error.} We first prove the first statement which shows that the Type I error converges to $0$. Following the same notations as defined in Theorem \ref{asympm3d}, let
	\begin{align*}
	N_{n,2}=\EE \Big\{\sum\limits_{j=2}^n\EE \Big(\tilde{\zeta}_{nj}^2|\mathcal{F}_{j-1}\Big)-1\Big\}^2,\quad L_{n,2}=\sum\limits_{j=2}^n\EE \tilde{\zeta}_{nj}^4
	\end{align*}
where $\tilde{\zeta}_{nj}={\sqrt{2}\zeta_{nj}}/({n\sqrt{v_n}})$. As shown by \citet{haeusler1988rate},
	\begin{align*}
	\sup\limits_{t}|P(T_{n,\varrho_n}>t)-\bar{\Phi}(t)|\leq C_{11}(L_{n,2}+N_{n,2})^{1/5},
	\end{align*}
	where $\bar{\Phi}(t)$ is the survival function of the standard normal, \ie, $\bar{\Phi}(t)=P(Z>t)$ where $Z\sim N(0,1)$. Again by the argument from \citet{hall1984central},
	\begin{align*}
	\EE \Big\{\sum\limits_{j=2}^n\EE (\zeta_{nj}^2|\mathcal{F}_{j-1})-\frac{1}{2}n(n-1)v_n\Big\}^2\leq C_{12}[n^4\EE G_n^2(X,X')+n^3\EE \tilde{K}_n^2(X,X')\tilde{K}_n^2(X,X^{''})],
	\end{align*}
where $G_n(\cdot,\cdot)$ is defined in the proof of Theorem \ref{asympm3d}, and 
	\begin{align*}
	\sum\limits_{j=2}^n\EE \zeta_{nj}^4\leq C_{13}[n^2\EE \tilde{K}_n^4(X,X')+n^3\EE \tilde{K}_n^2(X,X')\tilde{K}_n^2(X,X^{''})],
	\end{align*}
	which ensure
	\begin{align*}
	N_{n,2}=&\frac{4\EE \Big\{\sum\limits_{j=2}^n\EE (\zeta_{nj}^2|\mathcal{F}_{j-1})-\frac{1}{2}n(n-1)v_n-\frac{1}{2}nv_n\Big\}^2}{n^4v_n^2}\\\leq &8\max\left\{C_{12},\frac{1}{4}\right\}\Bigg\{\frac{\EE G_n^2(X,X')}{v_n^2}+\frac{\EE \tilde{K}_n^2(X,X')\tilde{K}_n^2(X,X^{''})}{nv_n^2}+\frac{1}{n^2}\Bigg\},
	\end{align*}
	and
	\begin{align*}
	L_{n,2}=\frac{4\sum\limits_{j=2}^n\EE \tilde{\zeta}_{nj}^4}{n^4v_n^2}\leq 4C_{13}\Bigg\{\frac{\EE \tilde{K}_n^4(X,X')}{n^2v_n^2}+\frac{\EE \tilde{K}_n^2(X,X')\tilde{K}_n^2(X,X^{''})}{nv_n^2}\Bigg\}.
	\end{align*}
	
	As shown in the proof of Theorem \ref{asympm3d},
	\begin{align*}
	\frac{\EE G_n^2(X,X')}{v_n^2}\leq C_3\varrho_n^{1/s},\quad \frac{\EE \tilde{K}_n^4(X,X')}{n^2v_n^2}\leq C_4n^{-2}\varrho_n^{-1/s},\quad {\rm and}\quad \frac{\EE \tilde{K}_n^2(X,X')\tilde{K}_n^2(X,X^{''})}{nv_n^2}\leq C_3n^{-1}.
	\end{align*}
	Therefore,
	\begin{align*}
	\sup\limits_{t}|P(T_{n,\varrho_n}>t)-\bar{\Phi}(t)|\leq C_{14}(\varrho_n^{\frac{1}{5s}}+n^{-\frac{1}{5}}+n^{-\frac{2}{5}}\varrho_n^{-\frac{1}{5s}}),
	\end{align*}
which implies that
	\begin{align*}
	P\left(\sup\limits_{\varrho_n\in\Lambda_n}T_{n,\varrho_n}>t\right)\leq m_*\bar{\Phi}(t)+C_{15}(2^{m_*\over5s}\varrho_*^{1\over 5s}+m_*n^{-\frac{1}{5}}+n^{-\frac{2}{5}}\varrho_*^{-\frac{1}{5s}}),\qquad \forall t.
	\end{align*}
	It is not hard to see, by the definitions of $m_*$,
$$2^{m_*}\varrho_*\leq 2\left(\frac{\sqrt{\log\log n}}{n}\right)^{\frac{2s}{4s+1}}$$
and
	\begin{align*}
	m_*=&(\log 2)^{-1}\{2s\log n-\frac{2s}{4s+1}\log n+o(\log n)\}\\=&(\log 2)^{-1}\frac{8s^2}{4s+1}\log n+o(\log n)\asymp \log n.
	\end{align*}
Together with the fact that $\bar{\Phi}(t)\leq {1\over 2} e^{-{t^2}/{2}}$ for $t\geq 0$, we get
	\begin{align*}
	&P\left(\sup\limits_{0\leq k\leq m_*}T_{n,2^k\varrho_*}>\sqrt{3\log\log n}\right)\\\leq &C_{16}\left[e^{-\frac{3}{2}\log\log n}\log n+\left(\frac{\sqrt{\log n}}{n}\right)^{\frac{2}{5(4s+1)}}+n^{-\frac{1}{5}}\log\log n+n^{-\frac{2}{5}}\left(\frac{\sqrt{\log \log n}}{n}\right)^{-\frac{2}{5}}\right]\rightarrow 0,
	\end{align*}
	as $n\rightarrow \infty$.
	
\paragraph{Type II Error.} Next consider Type II error. To this end, write $\varrho_n(\theta)=({\sqrt{\log\log n}\over n})^{2s(\theta+1)\over 4s+\theta+1}$. Let
$$\tilde{\varrho}_ n(\theta)=\sup\limits_{0\leq k\leq m_*}\{2^{k}\varrho_\ast: \varrho_n\leq\varrho_n(\theta)\}.$$
It is clear that $\tilde{T}_n\ge T_{n,\tilde{\varrho}_n(\theta)}$ for any $\theta\ge 0$. It therefore suffices to show that for any $\theta\ge 0$,
$$
\lim_{n\to\infty}\inf\limits_{\theta\geq 0}\inf_{P\in\mathcal{P}(\Delta_n,\theta)}P\left\{T_{n,\tilde{\varrho}_n(\theta)}\geq \sqrt{3\log\log n}\right\}=1.
$$
By Markov inequality, this can accomplished by verifying
	\begin{align}
	\inf\limits_{\theta\in[0,\infty)}\inf\limits_{P\in\mathcal{P}(\Delta_n(\theta),\theta)}\EE _PT_{n,\tilde{\varrho}_n(\theta)}\geq \tilde{M}\sqrt{\log\log n}\label{mean}
	\end{align}
	for some $\tilde{M}>\sqrt{3}$; and
	\begin{align}
	\lim\limits_{n\rightarrow\infty}\sup\limits_{\theta\ge 0}\sup\limits_{P\in\mathcal{P}(\Delta_n(\theta),\theta)}\frac{\mathrm{Var}\left(T_{n,\tilde{\varrho}_n(\theta)}\right)}{\left(\EE _PT_{n,\tilde{\varrho}_n(\theta)}\right)^2}=0\label{mv}.
	\end{align}

We now show that both (\ref{mean}) and (\ref{mv}) hold with $$\Delta_n(\theta)=c_1\left({\sqrt{\log\log n}\over n}\right)^{\frac{4s}{4s+\theta+1}}$$ for a sufficiently large $c_1=c_1(M,\tilde{M})$.

Note that $\forall$ $\theta\in[0,\infty)$, 
	\begin{align}
	\frac{1}{2}\varrho_n(\theta)\leq\tilde{\varrho_n}(\theta)\leq \varrho_n(\theta),\label{lub}
	\end{align}，
	which immediately suggests 
	\begin{align}
	\eta_{\tilde{\varrho}_n(\theta)}^2(P,P_0)\geq\eta_{\varrho_n(\theta)}^2(P,P_0).\label{ineq}
	\end{align}
	
Following the arguments in the proof of Theorem \ref{crm3d},
$$\EE _PT_{n,\tilde{\varrho}_n(\theta)}\geq C_{17}n[\tilde{\varrho}_n(\theta)]^{1/(2s)}\eta_{\tilde{\varrho}_n(\theta)}^2(P,P_0)\geq 2^{-1/(2s)}C_{17}n[\varrho_n(\theta)]^{1/2s}\eta_{\varrho_n(\theta)}^2(P,P_0),$$ and $\forall\ P\in\Pcal(\Delta_n(\theta),\theta)$, 
\begin{align}
\eta_{\varrho_n(\theta)}^2(P,P_0)\geq {1\over 4}\|u\|_{L_2(P_0)}^2\label{ineq2}
\end{align}
provided that $\Delta_n(\theta)\geq C'(M)\left({\sqrt{\log\log n}\over n}\right)^{\frac{4s}{4s+\theta+1}}$.

Therefore,
$$\inf\limits_{P\in\Pcal(\Delta_n(\theta),\theta)}\EE_PT_{n,\tilde{\varrho}_n(\theta)}\geq C_{18}n[\varrho_n(\theta)]^{1/(2s)}\Delta_n(\theta)\geq C_{18}c_1\sqrt{\log\log n}\geq \tilde{M}\sqrt{\log\log n}$$
if $c_1\geq C_{18}^{-1}\tilde{M}$. Hence to ensure (\ref{mean}) holds, it suffices to take
$$c_1=\max\{C'(M),C_{18}^{-1}\tilde{M}\}.$$

	With (\ref{lub}), (\ref{ineq}) and (\ref{ineq2}), the results in Theorem 3 imply that for sufficiently large $n$
	\begin{align*}
	\sup\limits_{P\in\mathcal{P}(\Delta_n^*(\theta),\theta,M)}\frac{\mathrm{Var}\left(T_{n,\tilde{\varrho}_n(\theta)}\right)}{\left(\EE _PT_{n,\tilde{\varrho}_n(\theta)}\right)^2}\leq& C_{19}\Big\{\left([\varrho_n(\theta)]^{\frac{1}{2s}}n\Delta_n^*(\theta)\right)^{-2}+\left([\varrho_n(\theta)]^{\frac{1}{s}}n^2\Delta_n^*(\theta)\right)^{-1}\\&+(n\Delta_n^*(\theta))^{-1}+\left([\varrho_n(\theta)]^{\frac{1}{2s}}n\sqrt{\Delta_n^*(\theta)}\right)^{-1}\Big\}\\
	\leq&2C_{19}\left([\varrho_n(\theta)]^{\frac{1}{2s}}n\Delta_n^*(\theta)\right)^{-1}= 2C_{19}(c_1\log\log n)^{-\frac{1}{2}}\rightarrow \infty,
	\end{align*}
	which shows (\ref{mv}).
\end{proof}
\vskip 25pt

\begin{proof}[Proof of Theorem \ref{cra}]
The main idea of the proof is similar to that for Theorem \ref{cr}. To this end, assume, without loss of generality, that
$$\Delta_n(\theta)=c_2\left(\frac{n}{\sqrt{\log\log n}}\right)^{-\frac{4s}{4s+\theta+1}},\qquad \forall\theta\in[\theta_1,\theta_2],$$ 
where $c_2>0$ is a sufficiently small constant to be determined later.

	
	Let $r_n=\lfloor C_{20}\log n\rfloor$ and $K_{n,1}=\lfloor{C_{21}\Delta_n^{-\frac{\theta_1+1}{2s}}(\theta_1)}\rfloor$for sufficiently small $C_{20},C_{21}>0$. Set $\theta_{n,1}=\theta_1$.  For $2\leq r\leq r_n$, let
	$$
	K_{n,r}=2^{r-2}K_{n,1}
	$$
%
%
and $\theta_{n,r}$ is selected such that the following equation holds.
	\begin{align*}
	K_{n,r}=\left\lfloor C_{21}[\Delta_n(\theta_{n,r})]^{-\frac{\theta_{n,r}+1}{2s}}\right\rfloor.
	\end{align*}
	Note that by choosing $C_{20}$ sufficiently small,
	\begin{align*}
	K_{n,r_n}=2^{r_n-2}K_{n,1}\leq &\left\lfloor c_2^{\frac{2(\theta_1+1)}{4s+\theta+1}}\left(\frac{n}{\sqrt{\log\log n}}\right)^{\frac{2(\theta_1+1)}{4s+\theta+1}}\cdot 2^{r_n-2}\right\rfloor\\=&\left\lfloor c_2^{\frac{2(\theta_1+1)}{4s+\theta+1}}\exp\left(\log\left(\frac{n}{\sqrt{\log\log n}}\right)\cdot\frac{2(\theta_1+1)}{4s+\theta_1+1}+(r_n-2)\log 2\right)\right\rfloor\\\leq &\left\lfloor C_{21}\exp\left(\log\left(\frac{n}{\sqrt{\log\log n}}\right)\cdot\frac{2(\theta_2+1)}{4s+\theta_2+1}\right)\right\rfloor=\lfloor C_{21}[\Delta_n(\theta_{2})]^{-\frac{\theta_{2}+1}{2s}}\rfloor
	\end{align*}
	for sufficiently large $n$. Thus, we can guarante that $\forall$ $1\leq r\leq r_n$, $\theta_{n,r_n}\in[\theta_1,\theta_2]$.

We now construct a finite subset of $\cup_{\theta\in [\theta_1,\theta_2]}\mathcal{P}(\Delta_n(\theta),\theta)$ as follows. For each $\xi_{n,r}=(\xi_{n,r,1},\cdots,\xi_{n,r,K_{n,r}})\in\{\pm 1\}^{K_{n,r}}$, let
\begin{align*}
f_{n,r,\xi_{n,r}}=1+\sum\limits_{k=K^*_{n,r-1}+1}^{K^*_{n,r}}a_{n,r}\xi_{n,r,k}\varphi_k,
\end{align*}
where $K^*_{n,r}=K_{n,1}+\cdots+K_{n,r}$, and $a_{n,r}=\sqrt{\Delta_n(\theta_{n,r})/K_{n,r}}$. Following the same argument as that in the proof of Theorem \ref{cr}, we can verify that with a sufficiently small $C_{21}$, each $P_{n,r,\xi_{n,r}}\in \mathcal{P}(\Delta_n(\theta_{n,r}),\theta_{n,r})$, where $f_{n,r,\xi_{n,r}}$ is the Radon-Nikodym derivative $dP_{n,r,\xi_{n,r}}/dP_0$. With slight abuse of notation, write
\begin{align*}
f_n(X_1,X_2,\cdots,X_n)={1\over r_n}{\sum\limits_{r=1}^{r_n}f_{n,r}(X_1,X_2,\cdots,X_n)},
\end{align*}
where
\begin{align*}
f_{n,r}(X_1,X_2,\cdots,X_n)={1\over 2^{K_{n,r}}}\sum_{\xi_{n,r}\in \{\pm 1\}^{K_{n,r}}}\prod\limits_{i=1}^n f_{n,r,\xi_{n,r}}(X_i).
\end{align*}
It now suffices to show that
$$
\|f_n\|_{L_2(P_0)}:=\EE_{P_0}f_n^2(X_1,X_2,\cdots,X_n)\to 1,\qquad {\rm as\ }n\to\infty.
$$
%
	
	Note that
	\begin{align*}
\|f_n\|^2_{L_2(P_0)}=&\frac{1}{r_n^2}\sum\limits_{1\leq r,r'\leq r_n}\langle f_{n,r},f_{n,r'}\rangle _{L_2(P_0)}\\
	=&\frac{1}{r_n^2}\sum\limits_{1\leq r\leq r_n}\|f_{n,r}\|^2_{L_2(P_0)}+\frac{1}{r_n^2}\sum\limits_{\substack{1\leq r,r'\leq r_n\\r\neq r'}}\langle f_{n,r},f_{n,r'}\rangle_{L_2(P_0)}.
	\end{align*}
It is easy to verify that, for any $r\neq r'$,
	\begin{align*}
	\langle f_{n,r},f_{n,r'}\rangle_{L_2(P_0)}=1.
	\end{align*}
It therefore suffices to show that
\begin{align*}
\sum\limits_{1\leq r\leq r_n}\|f_{n,r}\|^2_{L_2(P_0)}=o(r_n).
	\end{align*} 
	
Following the same derivation as that in the proof of Theorem \ref{cr}, we can show that
	\begin{align*}
	\|f_{n,r}\|_{L_2(P_0)}^2\leq&\left(\frac{\exp(na_{n,r}^2)+\exp(-na_{n,r}^2)}{2}\right)^{K_{n,r}}\leq\exp(K_{n,r}n^2a_{n,r}^4)
	\end{align*}
	for sufficiently large $n$. By setting $c_2$ in the expression of $\Delta_n(\theta)$ sufficiently small, we have
	\begin{align*}
	K_{n,r}n^2a_{n,r}^4\leq \frac{1}{2}\log r_n,
	\end{align*} 
    which ensures that
	\begin{align*}
	\|f_{n,r}\|_{L_2(P_0)}^2\leq r_n^{1/2}=o(r_n).
	\end{align*}
\end{proof}

\bibliographystyle{myplainnat}
\bibliography{mmdref}

\newpage
\begin{appendices}
%
%

\begin{proof}[Proof of Lemma \ref{schur2}]
	We have
	\begin{align*}
		G^2(x,x')=\sum\limits_{k,l}\mu_k\mu_l\varphi_k(x)\varphi_l(x)\varphi_k(x')\varphi_l(x').
	\end{align*}
Thus
	\begin{align*}
		&\int g(x)g(x')G^2(x,x'){d}P(x){d}P(x')\\
		=&\sum\limits_{k,l}\mu_k\mu_l\Big(\int g(x)\varphi_k(x)\varphi_l(x){d}P(x)\Big)^2\\
		\leq& \mu_1\sum\limits_{k}\mu_k\sum\limits_{l}\Big(\int g(x)\varphi_k(x)\varphi_l(x){d}P(x)\Big)^2\\
		\leq &\mu_1\left(\sum\limits_{k}\mu_k\int g^2(x)\varphi^2_k(x){d}P(x)\right)\\
		\leq &\mu_1\left(\sum\limits_{k}\mu_k\right)\left(\sup\limits_{k}\|\varphi_k\|_\infty\right)^2\|g\|_{L_2(P)}^2.
	\end{align*}
\end{proof}

%
\vskip 25pt

\begin{proof}[Proof of Lemma \ref{char0}]
For brevity, write
$$l_K=\sum_{k=1}^K{a_k^2\over \lambda_k}.$$
By definition, it suffices to show that $\forall$ $R>0$, $\exists$ $f_R\in\mathcal{H}(K)$ such that $\|f_R\|_K^2\leq R^2$ and $\|u-f_R\|_{L_2(P_0)}^2\leq M^2R^{-2/\theta}$.

To this end, let $K$ be such that $l_K^2\le R^2\le l_{K+1}^2$, and denote by
	$$f_R=\sum\limits_{k=1}^Ka_k\varphi_k+a_{K+1}^*(R)\varphi_{K+1},$$ 
	where \begin{align*}
		a_{K+1}^*(R)=\sgn(a_{K+1})\sqrt{\lambda_{K+1}(R^2-l_K^2)}.
	\end{align*}
Clearly,
	\begin{align*}
		\|f_R\|_K^2=\sum\limits_{k=1}^K\frac{a_k^2}{\lambda_k}+\frac{(a_{k+1}^*(R))^2}{\lambda_{K+1}}=R^2,
	\end{align*}
	and
	\begin{align*}
		\|u-f_R\|_{L_2(P_0)}^2=\sum\limits_{k>K+1}a_k^2+\left(|a_{K+1}|-\sqrt{\lambda_{K+1}(R^2-l_K^2)}\right)^2\leq \sum\limits_{k\geq K+1}a_k^2.
	\end{align*}
	
To ensure $u\in\mathcal{F}(\theta,M)$, it suffices to have
	\begin{align*}
		\sup\limits_{l_K^2\leq R^2\leq l_{K+1}^2}\|u-f_R\|_{L_2(P_0)}^2R^{2/\theta}\leq M^2,\quad\forall\  K\geq 0,
	\end{align*}
which concludes the proof.
\end{proof}
\end{appendices}

\end{document}